\theoremstyle{definition} \newtheorem{defn}{Definition}
\theoremstyle{plain} \newtheorem{prop}[defn]{Proposition}           
\theoremstyle{plain} \newtheorem{thm}[defn]{Theorem}                
\theoremstyle{plain} \newtheorem{lem}[defn]{Lemma}                  
\theoremstyle{plain}               
\theoremstyle{remark} \newtheorem{rmk}[defn]{Remark}                
\theoremstyle{remark}                 
\def\namedlabel#1#2{\begingroup
    #2%
    \def\@currentlabel{#2}%
    \phantomsection\label{#1}\endgroup
}
\begin{document}

\title{\textbf{Classification using margin pursuit}}

\author{
  Matthew J.~Holland\thanks{Email: \texttt{matthew-h@ids.osaka-u.ac.jp}.}\\
  Osaka University\\
  Yamada-oka 2-8, Suita, Osaka, Japan
}
\date{} 

\maketitle

\begin{abstract}
In this work, we study a new approach to optimizing the margin distribution realized by binary classifiers. The classical approach to this problem is  simply maximization of the expected margin, while more recent proposals consider simultaneous variance control and proxy objectives based on robust location estimates, in the vein of keeping the margin distribution sharply concentrated in a desirable region. While conceptually appealing, these new approaches are often computationally unwieldy, and theoretical guarantees are limited. Given this context, we propose an algorithm which searches the hypothesis space in such a way that a pre-set ``margin level'' ends up being a distribution-robust estimator of the margin location. This procedure is easily implemented using gradient descent, and admits finite-sample bounds on the excess risk under unbounded inputs. Empirical tests on real-world benchmark data reinforce the basic principles highlighted by the theory, and are suggestive of a promising new technique for classification.
\end{abstract}


\section{Introduction}\label{sec:intro}

Machine learning systems depend on both statistical inference procedures and efficient implementations of these procedures. These issues are reflected clearly within a risk minimization framework, in which given a known loss $\loss(\ww;\zz)$ depending on data $\zz$ and parameters $\ww$, the ultimate objective is minimization of the risk $\risk(\ww) \defeq \exx\loss(\ww;\zz)$, where expectation is taken with respect to the data. Since $\risk$ is unknown, the learner seeks to determine a candidate $\wwhat$ based on a limited sample $\zz_{1},\ldots,\zz_{n}$ such that $\risk(\wwhat)$ is sufficiently small, with high probability over the random draw of the sample. Inference is important because $R$ is always unknown, and the implementation is important because the only $\wwhat$ we ever have in practice is one that can be computed given finite time, memory, and processing power.

Our problem of interest is binary classification, where $\zz = (\xx,y)$ with inputs $\xx \in \RR^{d}$ and labels $y \in \{-1,1\}$. Parameter $\ww$ shall determine a scoring rule $h(\cdot;\ww)$, where $h(\xx) > 0$ implies a prediction of $y=+1$, and $h(\xx) \leq 0$ implies a prediction of $y=-1$. The classification \textit{margin} achieved by such a candidate is $y \, h(\xx)$, and the importance of the margin in terms of evaluating algorithm performance has been recognized for many years \citep{anthony1999NNTheory,langford2002a}. The work of \citet{koltchinskii2002a} provide risk bounds that depend on the empirical mean of $I\{y\,h(\xx) \leq \gamma\}$, providing useful generalization bounds for existing procedures whose on-sample margin error can be controlled. Intuitively, one might expect that having larger minimum margins on average would lead to better off-sample generalization. However, influential work by \citet{breiman1999a} showed that the problem is not so simple, demonstrating cases in which the margins achieved are higher, but generalization is worse. In response to this, \citet{reyzin2006a} make the important suggestion that it is not merely the location of the margins, but properties of the entire \textit{margin distribution} that are important to generalization.

New algorithms based on trying to control the empirical margin distribution, albeit indirectly, were proposed early on by \citet{garg2003a}, who proposed a strategy of optimizing the random projection error, namely $\prr\{ h(\xx)\,\widetilde{h}(\widetilde{\xx}) < 0 \}$, where $\widetilde{h}$ and $\widetilde{\xx}$ are respectively random projections of $h$ and $\xx$ from $d$-dimensional Euclidean space to a $k$-dimensional subspace, where $k \ll d$. The bounds are lucid and are suggestive of practical objective functions, but their analysis requires that the inputs $\xx$ be bounded, namely that they are distributed on the unit ball, $\|\xx\|=1$. More recent work from \citet{zhang2016a} suggests an objective which simultaneously maximizes the mean while minimizing the variance of the empirical margin distribution. Their routines are computationally tractable, but hyperparameter settings are non-trivial, and their risk bounds (in expectation) depend on the expected outcome of a leave-one-out cross-validation procedure, which is not characterized using interpretable quantities, reducing the utility of the bounds.

Another natural algorithmic strategy is to construct loss functions using more ``robust'' estimators of the \textit{true} expected margin $\exx y \, h(\xx)$, or related quantities such as the expected hinge loss $\exx \max\{1-y\,h(\xx),0\}$. In this regard the work of \citet{brownlees2015a} is highly relevant, in that sharp, descriptive risk bounds can be obtained for a wide class of learning algorithms, indeed any minimizer of such a loss. The practical downside is that computation is highly non-trivial and no procedures are proposed. The formal downside is that once again $\|\xx\|$ must be bounded for meaningful guarantees.

\paragraph{Our contributions}
To deal with the limitations of existing procedures highlighted above, the key idea here is to introduce a new convex loss that encourages the distribution of the margin to be tightly concentrated near a certain prescribed level. The procedure is easily implemented using gradient descent, admits formal performance guarantees reflecting both computational cost and optimization error, and aside from the usual cost of gradient computation there is virtually no computational overhead. Two key highlights are:
\begin{itemize}
\item The proposed algorithm enjoys high-probability risk bounds under moment bounds on $\xx$, and does not require $\|\xx\|$ to be bounded.

\item Numerical experiments show how a simple data-dependent re-scaling procedure can reduce the need for trial-and-error tuning of regularization.
\end{itemize}

\section{Algorithm introduction}

In this section we begin by introducing relevant algorithms from the literature, after which we introduce our proposed procedure.

\subsection{Related work}\label{sec:related_work}

Here we review the technical literature closely related to our work. Starting with the proposal of \citet{garg2003a}, their main theoretical results are a bound on the misclassification risk $\risk(h) \defeq \prr\{ y\,h(\xx) < 0 \}$ of $h(\xx) = \langle \ww, \xx \rangle + b$ for any $\ww \in \RR^{d}$ and $b \in \RR$. Assuming that $\|\xx\| = 1$, and given $2n$ observations, with probability no less than $1-4\delta$, we have
\begin{align}
\risk(h) \leq \widehat{\risk}(h) + \min_{d} \left( \mu_{d}(h) + 2 \sqrt{\frac{(d+2)\log(ne/(d+2))+\log(2\delta^{-1})}{2n}} \right)
\end{align}
where $\widehat{\risk}(h) = n^{-1}\sum_{i=1}^{n}I\{ y_{i}\,h(\xx_{i}) < 0 \}$, and the $\mu_{d}(h)$ term takes the form
\begin{align*}
\mu_{d}(h) \defeq \frac{2\delta^{-1}}{n} \sum_{i=1}^{2n} \min\left\{1, 3\exp\left(\frac{-h(\xx_{i})^{2}d}{2(2+|h(\xx_{i})|)^{2}}\right), \frac{2}{h(\xx_{i})^{2}d} \right\}.
\end{align*}
The projection error terms are derived from the fact that
\begin{align*}
\prr\{ h(\xx)\,\widetilde{h}(\widetilde{\xx}) < 0 \} \leq \min\left\{1, 3\exp\left(\frac{-h(\xx)^{2}d}{2(2+|h(\xx)|)^{2}}\right), \frac{2}{h(\xx)^{2}d} \right\}
\end{align*}
where $\widetilde{h}(\widetilde{\xx})=\langle P\ww, P\xx \rangle + b$, and $P$ is a $k \times d$ random matrix of independent Gaussian random variables, $N(0,1/d)$. Probability here is over the random draw of the matrix elements. Based on these guarantees, they construct a new loss, defined by
\begin{align*}
l(h;\zz) = \sum_{i \in \II_{+}} \exp\left(-\alpha h(\xx_{i})^{2}\right) + \sum_{i \in \II_{-}} \exp\left(-\beta y_{i}\,h(\xx_{i})\right),
\end{align*}
where $\II_{+}$ and $\II_{-}$ are respectively the indices of correctly and incorrectly classified observations. For correctly classified examples, they seek to minimize the projection error bound, whereas for incorrectly classified examples, then use a standard exponential surrogate loss. Depending on what $k \leq d$ minimizes their upper bound, the dependence on the number of parameters may be better than $O(\sqrt{d})$, but a price is paid in the form of $O(1/\delta)$ dependence on the confidence. On the computational side, proper settings of $\alpha$ and $\beta$ in practice is non-trivial.

The work of \citet{zhang2016a} considers using first- and second-order moments of the margin distribution as relevant quantities to build an objective. Writing
\begin{align*}
\overbar{m}(h) & \defeq \frac{1}{n} \sum_{i=1}^{n} y_{i} \, h(\xx_{i})\\
\overbar{v}(h) & \defeq \frac{1}{n} \sum_{i=1}^{n} \left(y_{i} \, h(\xx_{i}) - \overbar{m}(h) \right)^{2},
\end{align*}
in the case of $h(\xx) = \langle \ww,\xx \rangle$, they construct a loss
\begin{align*}
l(h;\zz) = \frac{\|\ww\|^{2}}{2} + \lambda_{1} \overbar{v}(h) - \lambda_{2} \overbar{m}(h) + \frac{\lambda_{3}}{n} \sum_{i=1}^{n} \max\{1-y_{i}\,h(\xx_{i}),0\},
\end{align*}
where the $\lambda_{1},\lambda_{2},\lambda_{3}$ are parameters to be set manually. The authors show how the optimization can be readily cast into an $n$-dimensional dual program of the form
\begin{align*}
\min_{\mv{\alpha} \in \RR^{n}} & \enspace \frac{1}{2} \mv{\alpha}^{T}U\mv{\alpha} + \uu^{T}\mv{\alpha}\\
\text{ s.t. } & \enspace 0 \leq \alpha_{i} \leq a_{i}, \quad i = 1,\ldots,n 
\end{align*}
for appropriate data-dependent matrix $U$, vector $\uu$, and weight bounds $a_{i}$, and they give some examples of practical implementations using dual coordinate descent and variance-reduced stochastic gradient descent. In all cases, parameter settings are left up to the user. Furthermore, statistical guarantees leave something to be desired; the authors prove that for any $\widehat{\mv{\alpha}}$ satisfying their dual objective, risk bounds hold as
\begin{align*}
\exx \risk(\widehat{\mv{\alpha}}) \leq \frac{1}{n} \exx\left(\sum_{i \in \II_{1}}\widehat{\alpha}_{i}U_{i,i} + |\II_{2}| \right),
\end{align*}
where expectation is taken with respect to the sample, $U_{i,i}$ are the diagonal elements of $U$, and the index sets are defined
\begin{align*}
\II_{1} & = \{i: 0 < \widehat{\alpha}_{i} < \lambda_{3}/n \}\\
\II_{2} & = \{i: \widehat{\alpha}_{i} = \lambda_{3}/n \}.
\end{align*}
These bounds provide limited insight into how and when the algorithm performs well, and in practice the algorithm requires substantial effort for model selection.

Finally, we consider the path-breaking analysis of \citet{brownlees2015a}, which greatly extends foundational work done by \citet{catoni2012a}. Letting $\varphi(u) = \max\{1-u,0\}$ denote the hinge loss, the Catoni estimator of the true location of a margin-based loss at candidate $h$, namely $\exx \varphi(y\,h(\xx))$, is defined as
\begin{align}\label{eqn:est_BJL}
\text{ any } \est(h) \geq 0 \enspace \text{ s.t. } \sum_{i=1}^{n} \psi\left(\frac{\est(h)-\varphi(y_{i}\,h(\xx_{i}))}{s}\right) = 0
\end{align}
where $s>0$ is a scaling parameter, and $\psi$ is a soft truncation function (see Figure \ref{fig:rho_psi_deriv}) defined by
\begin{align}\label{eqn:influence_cat17}
\psi(u) \defeq
\begin{cases}
u - u^{3}/6, & -\sqrt{2} \leq u \leq \sqrt{2}\\
2\sqrt{2}/3, & u > \sqrt{2}\\
-2\sqrt{2}/3, & u < -\sqrt{2}.
\end{cases}
\end{align}
The general analysis of \citet{brownlees2015a} provides a rich set of tools for obtaining risk bounds for any minimizer of this new robust objective function, namely bounds on $\risk(\widehat{h})$ where $\widehat{h}$ satisfies
\begin{align*}
\widehat{h} \in \argmin_{h \in \HH} \est(h),
\end{align*}
and $\HH$ denotes the hypothesis space our candidate lives in. Note that the $1$-Lipschitz continuity of the hinge loss gives us that for any candidates $g$ and $h$,
\begin{align*}
|\varphi(y\,g(\xx))-\varphi(y\,h(\xx))| \leq |y||g(\xx)-h(\xx)| = |g(\xx)-h(\xx)|,
\end{align*}
which means we can bound distances defined on the space $\{f(\xx)=\varphi(y\,h(\xx)): h \in \HH\}$ by distances on the space $\HH$. Going back to the linear model case of $h(\xx) = \langle \ww, \xx \rangle$, bounds in the $\LL_{2}$ distance $d_{2}$ can be constructed using
\begin{align*}
\exx |\varphi(y\,g(\xx))-\varphi(y\,h(\xx))|^{2} \leq \exx |g(\xx)-h(\xx)|^{2} \leq \|\ww_{g} - \ww_{h}\|^{2} \exx\|\xx\|^{2},
\end{align*}
and bounds in the $\LL_{\infty}$ distance take the form
\begin{align*}
\sup_{\xx} |\varphi(y\,g(\xx))-\varphi(y\,h(\xx))| \leq \sup_{\xx} |g(\xx)-h(\xx)| \leq \|\ww_{g} - \ww_{h}\| \sup_{\xx}\|\xx\|.
\end{align*}
Now, using their results, for large enough $s$ and $n$, one can show that with probability no less than $1-\delta$, it holds that
\begin{align*}
\exx \varphi\left(y\,\widehat{h}(\xx)\right) - \inf_{h \in \HH} \exx \varphi(y\,h(\xx)) \leq O\left( \sqrt{\frac{\log(3\delta^{-1})}{n}} + \log(2\delta^{-1})\left(\frac{\eta_{2}(\HH)}{\sqrt{n}} + \frac{\eta_{\infty}(\HH)}{n} \right) \right),
\end{align*}
where $c$ is a universal constant, and $\eta_{2}(\HH)$ and $\eta_{\infty}(\HH)$ are complexity terms. When these terms can be bounded, we can use the fact that the hinge loss is ``classification calibrated,'' and using standard results from \citet{bartlett2006b}, can obtain bounds on the excess misclassification risk based on the above inequality. The problem naturally is how to control these complexity terms. Skipping over some technical details, these terms can be bounded using covering number integrals dependent on $\HH$. As a concrete example, we have
\begin{align*}
\eta_{\infty}(\HH) \leq c_{\infty} \int_{0}^{\diameter(\HH;d_{\infty})} \log N(\epsilon,\HH,d_{\infty}) \, d\epsilon,
\end{align*}
where $d_{\infty}(g,h) = \sup_{\xx}|g(\xx)-h(\xx)|$ is the $\LL_{\infty}$ metric on $\HH$, the covering number $N(\epsilon,\HH,d_{\infty})$ is the number of $\epsilon$-balls in the $d_{\infty}$ metric needed to cover $\HH$, and $\diameter(\HH;d_{\infty}) = \sup\{d_{\infty}(g,h): g,h \in \HH\}$. In the case of $h(\xx) = \langle \ww, \xx \rangle$, this means $\|\xx\|$ must be almost surely bounded in order for the $\LL_{\infty}$ distance to be finite and the upper bounds to be meaningful. Under such assumptions, say $\ww$ comes from the unit ball and $\|\xx\| \leq B_{X}$ almost surely. Then ignoring non-dominant terms, the high-probability bounds can be specified as
\begin{align*}
\exx \varphi\left(y\,\widehat{h}(\xx)\right) - \inf_{h \in \HH} \exx \varphi(y\,h(\xx)) \leq O\left( \sqrt{\frac{\log(3\delta^{-1})}{n}} + \frac{\log(2\delta^{-1})d B_{X}}{\sqrt{n}} \right).
\end{align*}
While extremely flexible and applicable to a wide variety of learning tasks and algorithms, for the classification task, getting around the bound on $\xx$ is impossible using the machinery of \citet{brownlees2015a}. Even more serious complications are introduced by the difficulty of computation: while simple fixed-point procedures can be used to accurately approximate the robust objective $\est(h)$, it cannot be expressed explicitly, and indeed need not be convex as a function defined on $\HH$, even in the linear model case. Approximation error is unavoidable due to early stopping, and in addition to this computational overhead, using non-linear solvers to minimize the function $\est(h)$ can be costly and unstable in high-dimensional tasks \citep{holland2017a}. A recent pre-print from \citet{lecue2018a} considers replacing the M-estimator of \citet{brownlees2015a} with a median-of-means risk estimate, which does not require bounded inputs to get strong guarantees, but which requires an expensive iterative sub-routine for every loss evaluation, leading to substantial overhead for even relatively small learning tasks.

\subsection{Proposed algorithm}\label{sec:derivation}

We would like to utilize the strong elements of the existing procedures cited, while addressing their chief weaknesses. To do so, we begin by integrating the Catoni influence function $\psi$ defined in (\ref{eqn:influence_cat17}), which results in a new function of the form
\begin{align}\label{eqn:newloss_rho}
\rho(u) \defeq \begin{cases}
\frac{u^{2}}{2} - \frac{u^{4}}{24} & |u| \leq \sqrt{2},\\
|u| \frac{2\sqrt{2}}{3} - \frac{1}{2} & |u| > \sqrt{2}.
\end{cases}
\end{align}
Note that $\rho^{\prime}(u) = \psi(u)$ for all $u \in \RR$. This function satisfies $\rho(u) \geq 0$, is symmetric about zero so $\rho(u)=\rho(-u)$, and since the absolute value of the slope is bounded by $|\rho^{\prime}(u)| \leq 2\sqrt{2}/3$, we have that $\rho$ is Lipschitz continuous, namely that for any $u,v \in \RR$, we have $|\rho(u)-\rho(v)| \leq (2\sqrt{2}/3)|u-v|$.

\begin{figure}[h]
\centering
\includegraphics[width=0.75\textwidth]{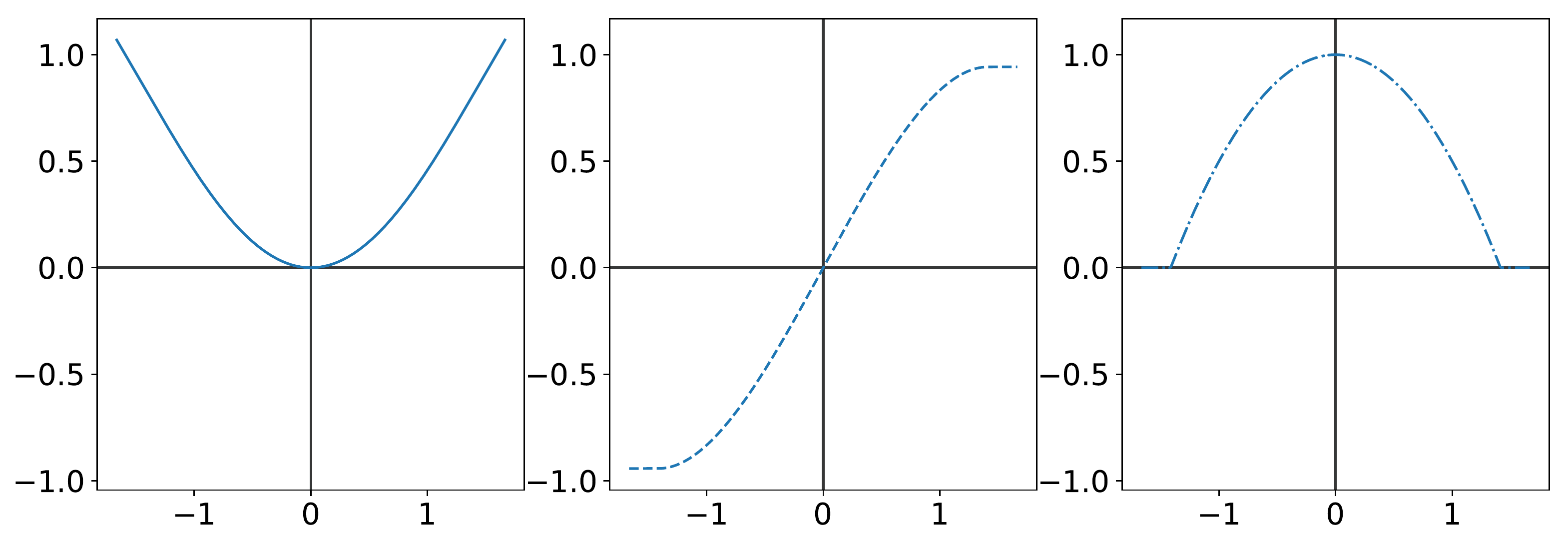}
\caption{Graphs of $\rho$, $\rho^{\prime}$ and $\rho^{\prime\prime}$ near the origin.}
\label{fig:rho_psi_deriv}
\end{figure}

Recalling the Catoni estimator (\ref{eqn:est_BJL}) used by \citet{brownlees2015a}, we define a new objective which is closely related:
\begin{align}\label{eqn:newloss_full}
\objnew(h;\gamma) \defeq \frac{s^{2}}{n} \sum_{i=1}^{n} \rho\left( \frac{\gamma - y_{i}\,h(\xx_{i})}{s} \right).
\end{align}
Here $\gamma \in \RR$ is the desired margin level, and once again $s>0$ is a re-scaling parameter. Note that this loss penalizes not only incorrectly classified examples, but also examples which are correctly classified, but \textit{overconfident}. The intuition here is that by also penalizing overconfident correct examples to some degree, we seek to constrain the variance of the margin distribution. The nature of this penalization is controlled by $\gamma$: a larger value leads to less correct examples being penalized.

It remains to set the scale $s$. To do so, first note that for any candidate $h$, we have
\begin{align*}
\est(h) \in \argmin_{\gamma} \objnew(h;\gamma),
\end{align*}
and that this estimator enjoys a pointwise error bound dependent on $s$ (see appendix \ref{sec:tech_proofs} for details), which says
\begin{align}\label{eqn:error_bound_s}
|\est(h) - \exx y\,h(\xx)| \leq \frac{\vaa y \, h(\xx)}{s} + \frac{2s\log(2\delta^{-1})}{n},
\end{align}
with probability no less than $1-\delta$. Minimizing this bound in $s>0$ naturally leads to setting $s^{2} = n \vaa y \, h(\xx) / 2\log(2\delta^{-1})$, but in our case, a certain amount of bias is assuredly tolerable; say a certain fraction $1/k$ of the desired $\gamma$ setting, plus error that vanishes as $n \to \infty$. By setting $s \geq \vaa y \, h(\xx) \, k / \gamma$ then, we have
\begin{align*}
|\est(h) - \exx y\,h(\xx)| \leq \frac{\gamma}{k} + O\left(\frac{1}{n}\right).
\end{align*}
The exact setting of $s>0$ plays an important role both in theory and in practice; we shall look at this in more detail in sections \ref{sec:theory}--\ref{sec:empirical}. In practice, the true variance will of course be unknown, but we can replace the true variance with any valid upper bound on the variance; rough estimates are easily constructed using moments of the empirical distribution (see section \ref{sec:empirical}).

With scaling taken care of, our proposed algorithm is simply to minimize the new loss (\ref{eqn:newloss_full}) using gradient descent, namely to run the iterative update
\begin{align*}
\hhat_{(t+1)} = \hhat_{(t)} - \alpha_{(t)} \nabla \objnew(\hhat_{(t)};\gamma),
\end{align*}
where $\alpha_{(t)}$ are step sizes. We summarize the key computations in Algorithm \ref{algo:mainGD} for the case of a linear model $h(\xx) = \langle \ww, \xx \rangle$ with fixed step sizes.

\begin{algorithm}
\caption{Margin pursuit by steepest descent.}
\label{algo:mainGD}
\begin{algorithmic}

\State \textbf{input:} $(\xx_{1},y_{1}),\ldots,(\xx_{n},y_{n}) \in \RR^{d} \times \{-1,1\}$

\smallskip

\State \textbf{parameters:} $\wwhat_{(0)} \in \RR^{d}$, $\gamma \in \RR$, $k > 0$, $\alpha > 0$

\smallskip

\State \textbf{scaling:} $\displaystyle s \geq \vaa y \, h(\xx) \, k / \gamma$

\smallskip

\For{$t = 0, 1, \ldots, T-1$} 
  
  \smallskip
  
  \State $\displaystyle \wwhat_{(t+1)} \gets \wwhat_{(t)} + \frac{s\,\alpha}{n} \sum_{i=1}^{n} \psi\left( \frac{\gamma - y_{i} \, \langle \wwhat_{(t)}, \xx_{i} \rangle}{s} \right) y_{i}\xx_{i}$
  
  \smallskip
\EndFor
\end{algorithmic}
\end{algorithm}

\begin{rmk}[Algorithm \ref{algo:mainGD} and distribution control]\label{rmk:unsatisfactory_learner}
Intuitively, in running Algorithm \ref{algo:mainGD} (or any generalization of it), the expectation is that with enough iterations, the approximation $\est(\wwhat_{(t)}) \approx \gamma$ should be rather sharp, although arbitrary precision assuredly cannot be guaranteed. If the $\gamma$ level is set too high given a hypothesis class $\HH$ with low complexity, we cannot expect $\gamma$ to be near the location of the margin $y \, h(\xx)$, which is accurately approximated by $\est(h)$. This can be easily proven: there exists a set of classifiers $\HH$ and distribution $\mu$ under which even a perfect optimizer of the new risk has a Catoni-type estimate smaller than $\gamma$ (proof given in appendix \ref{sec:tech_proofs}).
\end{rmk}

If the approximation $\est(\wwhat_{(t)}) \approx \gamma$ actually is sharp, how does this relate to \textit{control} of the margin distribution? By design, the estimator $\est(\cdot)$ is resistant to errant observations and is located near the \textit{majority} of observations (see Proposition \ref{prop:empirical_gamma_control}), if it turns out that $\est(\wwhat_{(t)})$ is close to $\gamma$, then it is \textit{not possible} for the majority of margin points be much smaller (or much larger) than $\gamma$.\footnote{Note that we still cannot rule out the possibility that the margin distribution is spread out over a wide region; a simple example is the case where the margins are symmetrically distributed around $\gamma$.}  Conceptually, the desired outcome is similar to that of the procedure of \citet{brownlees2015a} discussed in section \ref{sec:related_work}, but with an easy implementation and more straightforward statistical analysis. In section \ref{sec:theory}, we show that risk bounds are readily available for the proposed procedure, even without a bound on the inputs $\xx$. Empirical analysis in section \ref{sec:empirical} illustrates the basic mechanisms underlying the algorithm, using real-world benchmark data sets.

\section{Theoretical analysis}\label{sec:theory}

\paragraph{Notation}

For positive integer $k$, write the set of all positive integers no greater than $k$ by $[k] \defeq \{1,\ldots,k\}$. The underlying distribution of interest is that of $(\xx,y)$, here taking values on $\RR^{d} \times \{-1,1\}$. The data sample refers to $n$ independent and identically distributed (``iid'') copies of $(\xx,y)$, denoted $(\xx_{i},y_{i})$ for $i \in [n]$. Let $\HH$ denote a generic class of functions $h:\RR^{d} \to \RR$. The running assumption will be that all $h \in \HH$ are measurable, and at the very least satisfy $\exx |h(\xx)|^{2} < \infty$. Denote the input variance by $v_{X} \defeq \exx\|\xx\|^{2}$.

\paragraph{Scaling and location estimates}

Our chief interest from a theoretical standpoint is in statistical properties of Algorithm \ref{algo:mainGD}, in particular we seek high-probability upper bounds on the excess risk of the procedure after $T$ iterations, given $n$ observations, that depend on $T$, $n$, and low-order moments of the underlying distribution. We begin with some statistical properties of the motivating estimator, and a look at how scale settings impact these properties.

\begin{prop}[Scaling and location estimates]\label{prop:empirical_gamma_control}
For any $h \in \HH$ and scale $s>0$, the estimate $\est(h)$ satisfies the following:
\begin{enumerate}
\item There exists $0 < s^{\prime} < \infty$ such that for all $0 < s \leq s^{\prime}$, we have $\est(h) = \med \{ y_{i} \, h(\xx_{i}) \}_{i \in [n]}$.

\item There exists a constant $c>0$ such that for all $s>0$,
\begin{align*}
\left| \est(h) - \frac{1}{n} \sum_{i=1}^{n} y_{i} \, h(\xx_{i})\right| \leq \frac{c}{s^{2}}.
\end{align*}
\end{enumerate}
\end{prop}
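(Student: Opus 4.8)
The plan is to analyze the defining equation of $\est(h)$, namely $\sum_{i=1}^{n} \psi\big((\est(h) - m_i)/s\big) = 0$ where I write $m_i \defeq y_i\,h(\xx_i)$ for brevity, and exploit the two regimes of $\psi$: near the origin $\psi$ is essentially the identity (so the equation mimics the mean), while far from the origin $\psi$ is flat (so the equation mimics a median/order-statistic behavior). For part (1), as $s \to 0$ the arguments $(\est(h)-m_i)/s$ blow up for every $i$ with $m_i \neq \est(h)$, pushing each summand to $\pm 2\sqrt{2}/3$ according to the sign of $\est(h) - m_i$. For the sum to vanish, the signs must balance, which forces $\est(h)$ to equal a sample median of $\{m_i\}$; the precise statement is that for $s$ below some threshold $s'$ (depending on the gaps between distinct $m_i$ values), any solution must be the median, and one must check existence — monotonicity of $\gamma \mapsto \sum_i \psi((\gamma - m_i)/s)$ (since $\psi$ is nondecreasing) plus the sign change across the median gives it. I would handle odd/even $n$ with the usual convention that $\med$ can be taken as any point between the two middle order statistics, which is consistent with how the flat parts of $\psi$ behave.

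For part (2), the key observation is that $\psi(u) = u - u^3/6$ for $|u| \leq \sqrt 2$ and $|\psi(u) - u| \leq |u|$ always, but more usefully $|\psi(u) - u|$ is uniformly bounded: since $|\psi(u)| \leq 2\sqrt{2}/3$ for all $u$, and on $|u|\le\sqrt2$ we have $|\psi(u)-u| = |u|^3/6 \le (\sqrt2)^3/6 = \sqrt2/3$, while on $|u| > \sqrt 2$ we have $|\psi(u) - u| = |u| - 2\sqrt2/3$ which is \emph{not} bounded — so that crude approach fails. Instead I would use the structure of the defining equation directly. Write $\gammahat \defeq \est(h)$ and $\mbar \defeq \frac1n\sum_i m_i$. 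From $\sum_i \psi((\gammahat - m_i)/s) = 0$, I want to show $|\gammahat - \mbar|$ is small. Multiply through by $s$ and add and subtract: $0 = \sum_i s\,\psi((\gammahat-m_i)/s) = \sum_i (\gammahat - m_i) + \sum_i \big(s\,\psi((\gammahat-m_i)/s) - (\gammahat-m_i)\big) = n(\gammahat - \mbar) + \sum_i r_i$, where $r_i \defeq s\,\psi((\gammahat-m_i)/s) - (\gammahat-m_i)$. So $|\gammahat - \mbar| \le \frac1n\sum_i |r_i|$, and I need $|r_i| \le c/s$ for each $i$ (with $c$ independent of $s$, $h$, and the data). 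Writing $u_i \defeq (\gammahat - m_i)/s$, we have $r_i = s(\psi(u_i) - u_i)$: for $|u_i| \le \sqrt2$, $|r_i| = s|u_i|^3/6 \le s(\sqrt2)^3/6$, which grows with $s$, not shrinks — so this naive bound is the wrong sign in $s$ too.

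The resolution — and this is the main obstacle I anticipate — is that the claimed bound $c/s^2$ in part (2) must really come from a \emph{different} pairing: since $\psi$ is odd and $\psi'(0)=1$, the estimator $\gammahat$ should be compared not to $\mbar$ via a first-order expansion but via the observation that $\psi(u) - u = -u^3/6$ exactly on the central region, so $s\psi(u) = su - su^3/6 = su - (\gammahat - m_i)^3/(6s^2)$, giving $r_i = -(\gammahat-m_i)^3/(6s^2)$ \emph{provided} $|\gammahat - m_i| \le \sqrt2\, s$, i.e. the $1/s^2$ is manifest once we know all (or enough) of the $u_i$ lie in the central region, which holds once $s$ is large relative to $\max_i|\gammahat - m_i|$ — but the statement claims it for \emph{all} $s > 0$. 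So the honest route is: first establish from part-(1)-type reasoning and monotonicity that $\gammahat$ always lies in $[\min_i m_i, \max_i m_i]$, hence $|\gammahat - m_i| \le D \defeq \max_i m_i - \min_i m_i$ for all $i$; then split into the two cases $|u_i| \le \sqrt2$ (contributing $|r_i| = (\gammahat-m_i)^3/(6s^2) \le D^3/(6s^2)$) and $|u_i| > \sqrt 2$ (here $|\gammahat - m_i| > \sqrt2\,s$ forces $s < D/\sqrt2$, a regime where $|r_i| \le |s\psi(u_i)| + |\gammahat - m_i| \le 2\sqrt2 s/3 + D \le (2\sqrt2/3 + \sqrt2)D \cdot (D/s^2)\cdot(s^2/D^2)$ — more carefully, in that regime $1/s^2 > 2/D^2$ so $D^2/s^2 > 2$, and $|r_i| \le C_0 D \le (C_0 D^3/2)(1/s^2)$). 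Collecting both cases gives $|r_i| \le c_0 D^3 / s^2$ with $c_0$ universal, hence $|\gammahat - \mbar| \le c_0 D^3/s^2$, and we set $c \defeq c_0 D^3$ — a constant in the sense of not depending on $s$ (it depends on the realized sample, which is what "there exists a constant $c$" permits here). I would double-check the "$\gammahat \in [\min m_i, \max m_i]$" claim carefully since it is the linchpin that makes the bound hold for \emph{all} $s$; it follows because $\sum_i \psi((\gamma - m_i)/s) < 0$ whenever $\gamma < \min_i m_i$ and $>0$ whenever $\gamma > \max_i m_i$ (all summands share a sign), so by monotonicity every root lies in between.
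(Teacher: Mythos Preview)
Your argument is correct. Part (1) matches the paper's approach essentially verbatim: both observe that for $s$ smaller than the minimum gap $|a_m - a_i|/\sqrt{2}$ (with $a_i = y_i h(\xx_i)$ sorted and $a_m$ the median), all non-median summands saturate to $\pm 2\sqrt{2}/3$ and cancel pairwise, leaving the median term zero; the paper handles even $n$ by the same symmetry you invoke.

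Part (2) is where you diverge, and your route is more informative than the paper's. The paper simply cites an external lemma (from \citet{holland2017a}) stating that for any distribution $\nu$ with $\exx_\nu|x|^3 < \infty$ the Catoni minimizer satisfies $|\theta^\ast - \exx_\nu x| \leq c/s^2$, and then specializes $\nu$ to the empirical measure $P_n$, for which the third-moment condition is trivial. No explicit constant or mechanism is given. Your approach instead derives the bound directly: the identity $n(\gammahat - \mbar) = -\sum_i r_i$ with $r_i = s\psi(u_i) - s u_i$ is exact, and the case split on $|u_i| \lessgtr \sqrt{2}$ combined with the a priori confinement $\gammahat \in [\min_i m_i, \max_i m_i]$ (which you correctly justify via the sign of $\sum_i\psi$ at the endpoints) yields the explicit constant $c = c_0 D^3$ with $D$ the sample range. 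This is genuinely self-contained and makes transparent \emph{why} the rate is $1/s^2$ --- it comes from the cubic correction $u^3/6$ in $\psi$ on the central region, with the outer region handled only because $s$ is then forced to be small relative to $D$. The paper's black-box citation obscures this; your argument would actually strengthen the paper's exposition.
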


\begin{rmk}
The basic facts laid out in Proposition 1 illustrate how $s$ controls the ``bias'' of the Catoni estimator. A larger scale factor makes the estimator increasingly sensitive to errant data, and causes it to close in on the empirical mean. A sufficiently small value on the other hand causes the estimator to effectively ignore the distribution tails, closing in on the empirical median.
\end{rmk}

\begin{prop}[Scaling and stability]\label{prop:stability}
Given any dataset $\zz_{1},\ldots,\zz_{n}$ and candidate $h \in \HH$, construct $\est(h)$ as usual. Then consider a modified dataset $\zz^{\prime}_{1},\ldots,\zz^{\prime}_{n}$, which is identical to the original except for one point, subject to arbitrary perturbation. Let $\est^{\prime}(h)$ denote the estimator under the modified data set. Defining a sub-index as
\begin{align*}
\II \defeq \left\{ i \in [n]: |\obj(h)-y_{i}\,h(\xx_{i})| \leq s\sqrt{2}/2 \right\},
\end{align*}
it follows that whenever $n$ and $s$ are large enough that $|\II| \geq n/2 > 24$, we have
\begin{align*}
|\obj(h)-\obj^{\prime}(h)| \leq \frac{s}{\sqrt{n}}.
\end{align*}
\end{prop}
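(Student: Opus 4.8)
The plan is to treat $\est(h)$ as an implicitly defined function of the data through the estimating equation $\sum_{i=1}^{n}\psi\bigl((\est(h)-y_{i}\,h(\xx_{i}))/s\bigr)=0$, and to quantify how much the root can move when a single summand is replaced. Fix $h$ and abbreviate $m_{i}\defeq y_{i}\,h(\xx_{i})$, $m_{i}^{\prime}$ the perturbed values (so $m_{i}=m_{i}^{\prime}$ for all but one index, say $i_{0}$), and write $\gammahat\defeq\est(h)$, $\gammahat^{\prime}\defeq\est^{\prime}(h)$. Define $F(\gamma)\defeq\sum_{i}\psi((\gamma-m_{i})/s)$ and $F^{\prime}(\gamma)\defeq\sum_{i}\psi((\gamma-m_{i}^{\prime})/s)$; by definition $F(\gammahat)=0=F^{\prime}(\gammahat^{\prime})$. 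Then
\begin{align*}
|F(\gammahat^{\prime})| = |F(\gammahat^{\prime})-F^{\prime}(\gammahat^{\prime})| = \left|\psi\!\left(\frac{\gammahat^{\prime}-m_{i_{0}}}{s}\right)-\psi\!\left(\frac{\gammahat^{\prime}-m_{i_{0}}^{\prime}}{s}\right)\right| \leq \frac{4\sqrt{2}}{3},
\end{align*}
since $\psi$ is bounded by $2\sqrt{2}/3$ in absolute value. So it suffices to show $F$ has a lower bound on its slope near $\gammahat$ — i.e. that $F(\gammahat^{\prime})$ being small forces $\gammahat^{\prime}$ to be close to $\gammahat$ — with a quantitative constant good enough to produce the $s/\sqrt{n}$ rate.

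The key step is a strict-monotonicity / one-sided Lipschitz estimate for $F$. On the interval $|u|\leq\sqrt{2}$ one has $\psi^{\prime}(u)=1-u^{2}/2\geq 1/2$, while for $|u|>\sqrt{2}$, $\psi^{\prime}(u)=0$; hence $\psi$ is nondecreasing everywhere, and on any argument range staying within $[-\sqrt{2},\sqrt{2}]$ it is $1/2$-strongly increasing. Now for indices $i\in\II$ we have $|\gammahat-m_{i}|\leq s\sqrt{2}/2$, so $(\gammahat-m_{i})/s$ lies well inside $(-\sqrt{2},\sqrt{2})$ — in fact in $[-\sqrt{2}/2,\sqrt{2}/2]$ — with a margin of $\sqrt{2}/2$ in the $u$ variable, i.e. a margin of $s\sqrt{2}/2$ in $\gamma$. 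Provided $|\gammahat^{\prime}-\gammahat|\leq s\sqrt{2}/2$ (which we verify a posteriori, or bootstrap via a continuity/connectedness argument from the known rough bound $|\gammahat-\gammahat^{\prime}|\le c'/s^2$-type control and $|\II|\ge n/2$), the argument $(\gamma-m_{i})/s$ stays in $[-\sqrt{2},\sqrt{2}]$ for all $\gamma$ between $\gammahat$ and $\gammahat^{\prime}$ and every $i\in\II$. For such $\gamma$,
\begin{align*}
F^{\prime\prime\text{-ish}}:\quad \frac{d}{d\gamma}F(\gamma) \;=\; \frac{1}{s}\sum_{i=1}^{n}\psi^{\prime}\!\left(\frac{\gamma-m_{i}}{s}\right) \;\geq\; \frac{1}{s}\sum_{i\in\II}\psi^{\prime}\!\left(\frac{\gamma-m_{i}}{s}\right) \;\geq\; \frac{|\II|}{2s} \;\geq\; \frac{n}{4s},
\end{align*}
using that $\psi^{\prime}\geq 0$ everywhere to drop the indices outside $\II$, and $|\II|\geq n/2$. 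Therefore, integrating along the segment, $|F(\gammahat^{\prime})-F(\gammahat)| = |F(\gammahat^{\prime})| \geq (n/4s)\,|\gammahat^{\prime}-\gammahat|$, which combined with the bound $|F(\gammahat^{\prime})|\leq 4\sqrt{2}/3$ above yields
\begin{align*}
|\gammahat^{\prime}-\gammahat| \;\leq\; \frac{4s}{n}\cdot\frac{4\sqrt{2}}{3} \;=\; \frac{16\sqrt{2}}{3}\cdot\frac{s}{n},
\end{align*}
which is $O(s/n)$, comfortably inside the claimed $s/\sqrt{n}$ once $n$ is large (the stated hypothesis $n/2>24$ is exactly what is needed to absorb constants and to make the a posteriori check $|\gammahat^{\prime}-\gammahat|\leq s\sqrt{2}/2$ go through, since $16\sqrt{2}/(3n)\cdot s \le s\sqrt 2/2 \iff n\ge 32/3$).

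The main obstacle I anticipate is making the bootstrapping rigorous: the slope lower bound $dF/d\gamma\geq n/(4s)$ is only valid while $\gamma$ stays in the region where the $\II$-indexed arguments remain in $[-\sqrt{2},\sqrt{2}]$, so one cannot invoke it until one already knows $\gammahat^{\prime}$ is close to $\gammahat$. The clean fix is a continuity argument: let $J$ be the maximal sub-interval of $\RR$ containing $\gammahat$ on which all $i\in\II$ satisfy $|(\gamma-m_{i})/s|\leq\sqrt{2}$; $J$ has half-width at least $s\sqrt{2}/2$; on $J$, $F$ is strictly increasing with slope $\geq n/(4s)$, so $F$ exceeds $4\sqrt{2}/3$ in absolute value at the endpoints of $J$ as soon as $(n/4s)(s\sqrt{2}/2) = n\sqrt{2}/8 > 4\sqrt{2}/3$, i.e. $n>32/3$ — guaranteed by $n>48$. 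Hence the unique root $\gammahat^{\prime}$ of $F^{\prime}$ (uniqueness also follows since $F^{\prime}$ is nondecreasing and strictly increasing near any root in the analogous region) must lie in the interior of $J$, and the displayed slope estimate applies on the whole segment from $\gammahat$ to $\gammahat^{\prime}$. A minor secondary point is confirming $\gammahat,\gammahat^{\prime}$ exist and are finite — this follows because $F(\gamma)\to\pm 2\sqrt{2}n/3$ as $\gamma\to\pm\infty$ and $F$ is continuous, together with $|\II|\ge n/2\ge 1$ forcing an actual sign change with a genuine crossing rather than a flat touching; the condition $|\II|>24$ gives room for all of this.
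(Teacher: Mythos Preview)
Your overall strategy is sound and in fact cleaner than the paper's: bound the change $|F(\gammahat')|\le 4\sqrt{2}/3$ from perturbing one $\psi$-term, then use a lower bound on $dF/d\gamma$ near $\gammahat$ to force $\gammahat'$ close. However, there is a concrete slip that breaks the argument as written. You assert $\psi'(u)=1-u^{2}/2\geq 1/2$ for $|u|\leq\sqrt{2}$, but in fact $\psi'(\sqrt{2})=0$; the bound $\psi'\geq 1/2$ holds only for $|u|\leq 1$. Since your interval $J$ allows the arguments $(\gamma-m_{i})/s$ to range over all of $[-\sqrt{2},\sqrt{2}]$, the uniform slope bound $dF/d\gamma\geq |\II|/(2s)$ is not justified on $J$. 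The fix is immediate: shrink $J$ to half-width $s(1-\sqrt{2}/2)$ instead of $s\sqrt{2}/2$; then for $i\in\II$ and $\gamma\in J$ one has $|(\gamma-m_{i})/s|\leq 1$, so $\psi'\geq 1/2$ genuinely holds, the slope bound $dF/d\gamma\geq n/(4s)$ survives, and at the endpoints of $J$ one gets $|F|\geq n(1-\sqrt{2}/2)/4>4\sqrt{2}/3$ once $n>16\sqrt{2}/(3(1-\sqrt{2}/2))\approx 25.8$, which is implied by $n>48$. The rest of your bootstrap goes through unchanged, yielding $|\gammahat-\gammahat'|\leq 16\sqrt{2}\,s/(3n)$.

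For comparison, the paper does \emph{not} integrate a derivative bound. Instead it picks a concrete trial shift of size $s\sqrt{2/m}$ with $m=|\II|$, evaluates $\rho''$ at the edge point $\sqrt{2}/2+\sqrt{2/m}$ to lower-bound the per-index contribution $D$, and checks $mD\geq 4\sqrt{2}/3$ once $m\geq 24$; this yields $|\gammahat-\gammahat'|\leq s\sqrt{2/m}$, hence $O(s/\sqrt{n})$ under $m\geq n/2$. Your derivative-integration route is more direct and actually produces the tighter rate $O(s/n)$, so once the $\psi'$ slip is repaired your argument strictly improves on the paper's bound while needing no more than the same hypothesis $n/2>24$.
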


\begin{rmk}
The stability property highlighted in Proposition \ref{prop:stability} is appealing because the difference $\max \{|y_{i}\,h(\xx_{i})-y_{i}^{\prime}\,h(\xx_{i}^{\prime})| : i \in [n]\}$ could be arbitrarily large, while the estimator $\obj(h)$ in shifting to $\obj^{\prime}(h)$ remains close to the majority of the points, and cannot be drawn arbitrarily far away. For clarity, we have considered the case of just one modified point, but a brief glance at the proof (in the appendix) should demonstrate how analogous results can readily be obtained for the case of larger fractions of modified points.
\end{rmk}

\begin{lem}[Pointwise error bound]\label{lem:pointwise_accuracy}
Fixing any $h \in \HH$, consider the estimate $\est(h)$ defined in (\ref{eqn:est_BJL}), equivalently characterized as a minimizer of $\objnew(h;\gamma)$ in $\gamma$, with scaling parameter $s$ set such that $s^{2} = nv/2\log(2\delta^{-1})$, where $v$ is any upper bound $\vaa y\,h(\xx) \leq v < \infty$. It follows that
\begin{align*}
\prr\left\{ |\est(h) - \exx y\,h(\xx)| > \sqrt{\frac{2v\log(2\delta^{-1})}{n}} \right\} \leq \delta.
\end{align*}
\end{lem}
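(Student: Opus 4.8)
The plan is to recognize this as a standard concentration result for the Catoni M-estimator, essentially a restatement of the Catoni-type deviation bound adapted to the margin variable $W \defeq y\,h(\xx)$. First I would fix $h \in \HH$, set $m \defeq \exx W$, and observe that since $\psi$ is the derivative of $\rho$ and $\est(h)$ minimizes $\gamma \mapsto \objnew(h;\gamma)$, the estimate is characterized by the stationarity condition $\sum_{i=1}^{n} \psi((\est(h) - W_{i})/s) = 0$ from (\ref{eqn:est_BJL}). Define the empirical functional $\widehat{\Psi}(\gamma) \defeq \frac{1}{n}\sum_{i=1}^{n}\psi((\gamma - W_{i})/s)$; because $\psi$ is nondecreasing, $\widehat{\Psi}$ is nondecreasing in $\gamma$, so to show $\est(h) \le m + \varepsilon$ with $\varepsilon \defeq \sqrt{2v\log(2\delta^{-1})/n}$ it suffices to show $\widehat{\Psi}(m+\varepsilon) > 0$ with probability at least $1-\delta/2$, and symmetrically $\widehat{\Psi}(m-\varepsilon) < 0$ for the lower tail; a union bound then yields the two-sided statement at level $\delta$.

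The key analytic step is the standard Catoni trick: using the elementary inequality $-\log(1 - u + u^2/2) \le \psi(u) \le \log(1 + u + u^2/2)$ (valid for the specific soft-truncation $\psi$ in (\ref{eqn:influence_cat17}); this is exactly the property that makes this $\psi$ useful), one bounds $\exx \psi((\gamma - W)/s)$ above by $\log\exx(1 + (\gamma-W)/s + (\gamma-W)^2/(2s^2)) = \log(1 + (\gamma - m)/s + (\exx(\gamma - W)^2)/(2s^2))$. With $\gamma = m + \varepsilon$ and $\exx(\gamma - W)^2 = \varepsilon^2 + \vaa W \le \varepsilon^2 + v$, a bit of algebra using $\log(1+x) \le x$ shows $\exx\psi((m+\varepsilon - W)/s) \le (\varepsilon + (\varepsilon^2+v)/(2s))/s$. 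Then I would control the empirical mean around its expectation: Chebyshev or a Bernstein-type bound on $\widehat{\Psi}(m+\varepsilon) - \exx\widehat{\Psi}(m+\varepsilon)$ (the summands are bounded by $2\sqrt{2}/3$, so this is routine), giving a fluctuation of order $\sqrt{\log(2\delta^{-1})/n}$ times the relevant standard deviation. Plugging in the prescribed scale $s^2 = nv/(2\log(2\delta^{-1}))$, so that $s = \sqrt{nv/(2\log(2\delta^{-1}))}$, makes the deterministic term $(\varepsilon^2 + v)/(2s^2) \approx \log(2\delta^{-1})/n \cdot (1 + o(1))$ and the first-order term $\varepsilon/s = \sqrt{2}\log(2\delta^{-1})/n$, and one checks these sum to something strictly dominated in magnitude by the negative drift created by choosing $\varepsilon$ as above, so that $\widehat{\Psi}(m+\varepsilon)$ is positive with the claimed probability.

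The main obstacle, and the part requiring genuine care rather than bookkeeping, is verifying that the deterministic bias term and the stochastic fluctuation term combine to give strict positivity of $\widehat{\Psi}(m+\varepsilon)$ with exactly the stated threshold $\varepsilon = \sqrt{2v\log(2\delta^{-1})/n}$ and probability exactly $\delta$ (rather than $\delta$ up to constants). This is the familiar "the constants have to line up" issue in Catoni's argument: one must be slightly careful about whether the $\varepsilon^2$ contribution to the second moment is absorbed, and whether a looser intermediate bound (e.g. splitting the budget between the two tails) is needed. Since the excerpt explicitly defers "see appendix \ref{sec:tech_proofs} for details" for the closely related bound (\ref{eqn:error_bound_s}), I would expect the proof to either cite a clean packaged version of Catoni's lemma (as in \citet{catoni2012a} or \citet{holland2017a}) or to carry out the $\log$-inequality sandwich explicitly; the remaining steps — monotonicity of $\psi$, the variance bound $\exx(\gamma-W)^2 = \varepsilon^2 + \vaa W$, and the union bound over the two tails — are all routine.
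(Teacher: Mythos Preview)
Your overall framing is right, and you correctly anticipate that the paper's proof simply cites a packaged result: it notes that $\rho'=\psi$ satisfies the Catoni sandwich $-\log(1-u+u^{2}/2)\le\psi(u)\le\log(1+u+u^{2}/2)$ (their $C=1/2$, citing \citet{catoni2017a}), invokes Lemma~1 of \citet{holland2017a} to obtain the two-term deviation bound of the form (\ref{eqn:error_bound_s}), and then optimizes in $s$. So your final hedge (``cite a clean packaged version'') is exactly what happens.

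However, the explicit argument you sketch in the middle has a real gap. You propose to bound $\exx\widehat{\Psi}(m+\varepsilon)$ via the log-inequality and Jensen, and then \emph{separately} control the fluctuation $\widehat{\Psi}(m+\varepsilon)-\exx\widehat{\Psi}(m+\varepsilon)$ by Chebyshev or Bernstein. That is not how the Catoni argument works, and it will not deliver the stated constants. Chebyshev gives only polynomial tails (so no $\log(2\delta^{-1})$ dependence at all), and Bernstein would give an exponential tail but with the wrong leading constant and an extra bounded-increment correction term. The whole point of the $\psi\le\log(1+u+u^{2}/2)$ inequality is that it is used \emph{inside} a Chernoff step: one bounds
\[
\prr\{\widehat{\Psi}(m+\varepsilon)\le 0\}\le \exx\exp\Bigl(-\sum_{i}\psi((m+\varepsilon-W_{i})/s)\Bigr)=\prod_{i}\exx e^{-\psi(\cdot)}\le\Bigl(1-\tfrac{\varepsilon}{s}+\tfrac{\varepsilon^{2}+v}{2s^{2}}\Bigr)^{n},
\]
turning the log-sandwich directly into a moment-generating-function bound. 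There is no separate concentration step; the bias and the fluctuation are handled in one stroke, and that is precisely why the constants line up. Your monotonicity reduction, the two-tail union bound, and the identification of the key inequality are all correct; it is only this decomposition step that would derail the argument if carried out as written.
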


\begin{rmk}
The confidence interval in Lemma \ref{lem:pointwise_accuracy} is called pointwise because it holds for a pre-fixed $h \in \HH$, in contrast with uniform bounds that hold independent of the choice of $h$. When considering our Algorithm \ref{algo:mainGD}, the candidate $h$ will be data-dependent and thus random, meaning that pointwise bounds will have to be extended to cover all possible contingencies; see the proof of Theorem \ref{thm:riskbd} for details.
\end{rmk}

\paragraph{Classification-calibrated loss}

Proceeding with our analysis, the ultimate evaluation metric of interest here is the classification risk (expectation of the zero-one loss), denoted
\begin{align}\label{eqn:risk_01}
R(h) \defeq \prr\{ \sign(h(\xx)) \neq y \}, \quad R^{\ast} \defeq \inf_{h \in \HH} R(h).
\end{align}
Using empirical estimates of the zero-one loss is not conducive to efficient learning algorithms, and our Algorithm \ref{algo:mainGD} involves the minimization of a new loss $\objnew(\cdot;\gamma)$, defined in equation (\ref{eqn:newloss_full}). To ensure that good performance in this metric implies low classification risk, the first step is to ensure that the function is \emph{calibrated} for classification, in the sense of \citet{bartlett2006b}. To start, fixing any $\gamma > 0$, define $\varphi(u) \defeq s^{2} \, \rho((\gamma - u)/s)$. This furnishes the surrogate risk
\begin{align}\label{eqn:risk_surrogate}
R_{\varphi}(h) \defeq \exx \varphi\left(y \, h(\xx)\right), \quad R_{\varphi}^{\ast} \defeq \inf_{h \in \HH} R_{\varphi}(h).
\end{align}
The basic idea is that if this loss $\varphi$ is calibrated, then one can show that there exists a function $\Psi_{s,\gamma}$ depending on user-specified $\gamma$ and $s$ settings, which is non-decreasing on the positive real line and satisfies
\begin{align*}
\Psi_{s,\gamma}( R(h) - R^{\ast} ) \leq R_{\varphi}(h) - R_{\varphi}^{\ast}.
\end{align*}
Our loss function $\rho$ defined in \ref{eqn:newloss_rho} is congenial due to the fact that it is classification-calibrated, with a $\Psi$-transform $\Psi_{s,\gamma}(\cdot)$ that can be computed exactly, for arbitrary values of $\gamma > 0$ and $s > 0$. Details of this computation are not difficult, but are rather tedious, and thus we relegate them to appendix \ref{sec:tech_getroot}. Basic facts are summarized in the following lemma.
\begin{lem}\label{lem:Psi_transform}
The loss function $\varphi(u) \defeq s^{2} \, \rho((\gamma - u)/s)$ is classification calibrated such that for each $\gamma > 0$, the following statements hold.
\begin{enumerate}
\item $\Psi$-transform: there exists a function $\Psi_{s,\gamma}:[0,1] \to \RR_{+}$ for which $\Psi_{s,\gamma}(R(h)-R^{\ast}) \leq R_{\varphi}(h) - R_{\varphi}^{\ast}$, depending on $\rho$, $s$, $\gamma$, and a concave function $H_{s,\gamma}(\cdot)$ defined on $[0,1]$, specified in the proof (also see Figure \ref{fig:H_Psi_inverse}). This $\Psi$-transform function takes the form
\begin{align*}
\Psi_{s,\gamma}(u) =  s^{2}\rho(\gamma/s) - H_{s,\gamma}\left(\frac{1+u}{2}\right).
\end{align*}

\item Risk convergence: given a sequence $(\hhat_{n})$ of sample-dependent $\{\zz_{1},\ldots,\zz_{n}\} \mapsto \hhat_{n}$, we have that convergence in our surrogate is sufficient for convergence in the zero-one risk, namely
\begin{align*}
\left\{ \lim\limits_{n \to \infty} R_{\varphi}(\hhat_{n}) = R_{\varphi}^{\ast} \right\} \subseteq  \left\{ \lim\limits_{n \to \infty} R(\hhat_{n}) = R^{\ast} \right\}.
\end{align*}

\item Invertibility: $\Psi_{s,\gamma}(u)$ is invertible on $[0,1]$, and thus for small enough excess risk, we can bound as $R(h)-R^{\ast} \leq \Psi_{s,\gamma}^{-1}(R_{\varphi}(h) - R_{\varphi}^{\ast})$.
\end{enumerate}
\end{lem}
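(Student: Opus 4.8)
The plan is to specialize the classification-calibration framework of \citet{bartlett2006b} to the margin-based loss $\varphi(u) = s^{2}\rho((\gamma-u)/s)$. First I would set up the relevant conditional quantities: writing $\eta = \prr\{y=1\mid\xx\}$ for the conditional label probability, define for $\eta\in[0,1]$ and $\alpha\in\RR$ the conditional surrogate risk $C_{\eta}(\alpha) \defeq \eta\,\varphi(\alpha)+(1-\eta)\,\varphi(-\alpha)$, the optimal conditional risk $H_{s,\gamma}(\eta) \defeq \inf_{\alpha\in\RR}C_{\eta}(\alpha)$, and its wrong-sign restriction $H^{-}_{s,\gamma}(\eta) \defeq \inf\{C_{\eta}(\alpha):\alpha(2\eta-1)\leq 0\}$. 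I would then record the facts about $\rho$ used throughout: $\rho\geq 0$, $\rho$ is even, $\rho$ is convex (being an antiderivative of the nondecreasing $\psi$) and strictly increasing on $[0,\infty)$ since $\rho'=\psi>0$ there. These transfer to $\varphi$: it is convex and differentiable with $\varphi'(u) = -s\,\psi((\gamma-u)/s)$, so $\varphi'(0) = -s\,\psi(\gamma/s) < 0$ because $\gamma,s>0$. Classification-calibration is then immediate, either by the convex-loss criterion of \citet{bartlett2006b} ($\varphi$ convex, differentiable at $0$, $\varphi'(0)<0$) or directly: by evenness of $\rho$ we have $C_{\eta}(0) = s^{2}\rho(\gamma/s)$ for every $\eta$, while $C_{\eta}'(0) = -s(2\eta-1)\psi(\gamma/s)\neq 0$ for $\eta\neq 1/2$, so $H_{s,\gamma}(\eta) < C_{\eta}(0) = H^{-}_{s,\gamma}(\eta)$.

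For part 1, I would first identify $H^{-}_{s,\gamma}$. Since $C_{\eta}$ is convex with $C_{\eta}'(0)$ of sign $-(2\eta-1)$, its unconstrained minimizer $\alpha^{\ast}(\eta)$ has the sign of $2\eta-1$, so the constraint $\alpha(2\eta-1)\leq 0$ bites only at $\alpha=0$ and $H^{-}_{s,\gamma}(\eta) = C_{\eta}(0) = s^{2}\rho(\gamma/s)$. The pre-$\Psi$ transform of \citet{bartlett2006b} is therefore $\widetilde{\Psi}(u) = H^{-}_{s,\gamma}\big((1+u)/2\big) - H_{s,\gamma}\big((1+u)/2\big) = s^{2}\rho(\gamma/s) - H_{s,\gamma}\big((1+u)/2\big)$, and the $\Psi$-transform is $\Psi_{s,\gamma} = \widetilde{\Psi}^{\ast\ast}$. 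The key observation is that $H_{s,\gamma}$, being a pointwise infimum of the maps $\eta\mapsto C_{\eta}(\alpha)$ which are affine in $\eta$, is concave on $[0,1]$; hence $\widetilde{\Psi}$ is already convex, the biconjugate does nothing, and $\Psi_{s,\gamma} = \widetilde{\Psi}$ has exactly the claimed form. Since $\alpha^{\ast}(1/2)=0$ gives $H_{s,\gamma}(1/2) = s^{2}\rho(\gamma/s)$, we get $\Psi_{s,\gamma}(0)=0$, and the main inequality $\Psi_{s,\gamma}(R(h)-R^{\ast}) \leq R_{\varphi}(h)-R_{\varphi}^{\ast}$ is the conclusion of the general theorem of \citet{bartlett2006b}. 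The genuine computational labour — producing $H_{s,\gamma}$ (equivalently $\Psi_{s,\gamma}$) in closed form — amounts to solving the first-order condition $\eta\,\psi((\gamma-\alpha)/s) = (1-\eta)\,\psi((\gamma+\alpha)/s)$ for $\alpha^{\ast}(\eta)$, a polynomial equation on the cubic branch of $\psi$; this is the tedious part deferred to appendix \ref{sec:tech_getroot}.

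Part 2 is then the standard soft consequence: by part 1, $R_{\varphi}(\hhat_{n})\to R_{\varphi}^{\ast}$ forces $\Psi_{s,\gamma}(R(\hhat_{n})-R^{\ast})\to 0$, and since $\Psi_{s,\gamma}$ is continuous with $\Psi_{s,\gamma}(0)=0$ and, by part 3, strictly positive on $(0,1]$, this in turn forces $R(\hhat_{n})\to R^{\ast}$. For part 3 I would establish that $\Psi_{s,\gamma}$ is strictly increasing on $[0,1]$, equivalently that $H_{s,\gamma}$ is strictly decreasing on $[1/2,1]$. Using the supporting-line bound $H_{s,\gamma}(\eta_{2}) \leq C_{\eta_{2}}(\alpha^{\ast}(\eta_{1})) = H_{s,\gamma}(\eta_{1}) + (\eta_{2}-\eta_{1})\big[\varphi(\alpha^{\ast}(\eta_{1})) - \varphi(-\alpha^{\ast}(\eta_{1}))\big]$ for $1/2 < \eta_{1} < \eta_{2}$ (the envelope identity $H_{s,\gamma}'(\eta) = \varphi(\alpha^{\ast}(\eta)) - \varphi(-\alpha^{\ast}(\eta))$ would serve equally well), it suffices to note $\varphi(\alpha^{\ast}) - \varphi(-\alpha^{\ast}) = s^{2}[\rho((\gamma-\alpha^{\ast})/s) - \rho((\gamma+\alpha^{\ast})/s)] < 0$, which follows for $\alpha^{\ast}(\eta_{1}) > 0$ from $|\gamma-\alpha^{\ast}| < \gamma+\alpha^{\ast}$ together with the evenness and strict monotonicity of $\rho$ on $[0,\infty)$; concavity and continuity of $H_{s,\gamma}$ then push strict monotonicity down to $\eta=1/2$. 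Strict monotonicity plus continuity of $\Psi_{s,\gamma}$ give invertibility on its range, hence $R(h)-R^{\ast} \leq \Psi_{s,\gamma}^{-1}(R_{\varphi}(h)-R_{\varphi}^{\ast})$ once the excess surrogate risk is small enough.

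I expect the main obstacle to be purely computational rather than conceptual: the conceptual skeleton is a direct invocation of \citet{bartlett2006b}, but because $\psi$ (hence $\rho$ and $\varphi$) is piecewise-defined, the first-order condition determining $\alpha^{\ast}(\eta)$ changes form according to whether $(\gamma\pm\alpha^{\ast})/s$ falls inside or outside $[-\sqrt{2},\sqrt{2}]$. Enumerating these regimes, solving the resulting polynomial in each, and verifying that the assembled $H_{s,\gamma}$ is concave and strictly decreasing across the joins — uniformly over all $\gamma>0$, $s>0$ — is where the care is required; this is exactly why the paper isolates that computation in appendix \ref{sec:tech_getroot}.
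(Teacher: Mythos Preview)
Your proposal is correct and follows essentially the same route as the paper: both set up the \citet{bartlett2006b} machinery $(C_{\eta}, H, H^{-})$, verify classification-calibration via convexity and $\varphi'(0)<0$, and obtain $\Psi_{s,\gamma}(u) = \varphi(0) - H_{s,\gamma}((1+u)/2)$. The emphasis differs slightly. The paper simply cites \citet[Theorem~2(2)]{bartlett2006b} for the form of $\Psi$, \citet[Theorem~1(3)]{bartlett2006b} for risk consistency, and \citet[Lemma~2]{bartlett2006b} for invertibility, then devotes the bulk of its proof to the explicit piecewise computation of $H_{s,\gamma}$ --- splitting on whether $0<\gamma\leq\sqrt{2}/2$, $\sqrt{2}/2<\gamma<\sqrt{2}$, or $\gamma\geq\sqrt{2}$, and in each regime reducing the first-order condition to an explicit cubic (the ``double-cube'' and ``single-cube'' conditions) whose root is found via the routine in appendix~\ref{sec:tech_getroot}. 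You instead supply more self-contained arguments for the structural facts (concavity of $H_{s,\gamma}$ as an infimum of affine functions, hence $\widetilde{\Psi}=\widetilde{\Psi}^{\ast\ast}$; strict monotonicity of $H_{s,\gamma}$ on $[1/2,1]$ via the supporting-line inequality) and defer the explicit case analysis entirely. Both are valid; your version makes the logical skeleton clearer while the paper's version actually delivers the closed-form $H_{s,\gamma}$ promised in the lemma statement.
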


\begin{rmk}[Generalization and $\gamma$ level setting]
One would naturally expect that all else equal, if a classifier achieves the same excess $\varphi$-risk for a larger value of $\gamma$, then the resulting excess classification risk should be smaller, or at least no larger. More concretely, we should expect that
\begin{align*}
\gamma \leq \gamma^{\prime} \implies \Psi_{s,\gamma}^{-1}(a) \geq \Psi_{s,\gamma^{\prime}}^{-1}(a), \quad a \in [0,s^{2}\,\rho(\gamma/s)].
\end{align*}
This range comes from the fact that $\Psi_{s,\gamma}(0) = 0$ and $\Psi_{s,\gamma}(1) = s^{2} \, \rho(\gamma/s)$. This monotonicity follows from the definition of $\rho$ and the convexity of the $\Psi$-transform (also see Figure \ref{fig:H_Psi_inverse} in the following section).
\end{rmk}

\paragraph{Assumptions and risk bounds, with discussion}

With preparatory results in place, we can now pursue an excess risk bound for Algorithm \ref{algo:mainGD}. To make notation more transparent, we accordingly write $\risk(\ww)$ and $\rnew(\ww)$ to denote the respective risks under $\HH = \{h: h(\xx)=\langle \ww, \xx \rangle, \ww \in \WW \}$, where $\WW \subset \RR^{d}$. The core technical assumptions are as follows:
\begin{itemize}
\item[\namedlabel{asmp:model_compact}{A0}.] $\WW$ is a compact subset of $\RR^{d}$, with diameter $\Delta \defeq \sup\{\|\uu-\vv\|: \uu,\vv \in \WW\} < \infty$.

\item[\namedlabel{asmp:risk_flat_min}{A1}.] There exists $\wwstar \in \WW$ at which $\rnew^{\prime}(\wwstar)=0$.

\item[\namedlabel{asmp:risk_strong_convex}{A2}.] $\rnew(\ww)$ is $\kappa$-strongly convex on $\WW$, with minimum\footnote{Assuming we can take the derivative under the integral, the smoothness of $\rho$ implies differentiability of $R_{\varphi}$. Then using the compactness of $\WW$, it follows that $\wwstar \in \WW$.} denoted by $\wwstar$.

\item[\namedlabel{asmp:sub_gaussian}{A3}.] The gradient distribution follows a standard form of high-dimensional sub-Gaussianity, characterized as follows. Writing $\mv{b}(\ww) \defeq -\rho^{\prime}(\gamma - y \langle \ww, \xx \rangle ) y \, \xx$ for the new loss gradient before scaling by $s$, and $\Sigma(\ww)$ for its covariance matrix, there exists some $c>0$ such that for all $\ww \in \WW$, $a \geq 0$, and $\|\uu\|=1$, we have
\begin{align*}
\exx\exp\left( a \langle \uu, \mv{b}(\ww) - \exx\mv{b}(\ww) \rangle \right) \leq \exp\left(c a^{2} \langle \uu, \Sigma(\ww)\uu \rangle \right).
\end{align*}
\end{itemize}

\begin{rmk}[Feasibility of assumptions]
The important assumptions here are \ref{asmp:risk_strong_convex} and \ref{asmp:sub_gaussian}. The latter can be satisfied with inputs $\xx$ that have sub-Gaussian tails; this does not include data with higher-order moments that are infinite, but requires no bound on $\|\xx\|$ at all. As for the former assumption \ref{asmp:risk_strong_convex}, first note that the $(i,j)$th element of the Hessian of the new loss function is
\begin{align*}
\frac{\partial^{2}}{\partial w_{i} \partial w_{j}} s^{2} \, \rho\left(\frac{\gamma - y \, \langle \ww, \xx \rangle}{s}\right) = \rho^{\prime\prime}\left(\frac{\gamma - y \, \langle \ww, \xx \rangle}{s}\right) x_{i}x_{j}, \quad i,j \in [d]
\end{align*}
where
\begin{align*}
\rho^{\prime\prime}(u) =
\begin{cases}
1 - u^{2}/2, & \text{ if } |u| \leq \sqrt{2}\\
0, & \text{ else.}
\end{cases}
\end{align*}
Write $q = \uu^{T}(\xx\xx^{T})\uu$ for readability, and use $\exx_{+}$ and $\exx_{-}$ to denote integration over the positive and non-positive parts of $q$. First, observe that
\begin{align*}
\exx_{-} \rho^{\prime\prime}\left(\frac{\gamma - y \, \langle \ww, \xx \rangle}{s}\right) q & = \exx I\{q \leq 0\} \rho^{\prime\prime}\left(\frac{\gamma - y \, \langle \ww, \xx \rangle}{s}\right) q\\
& \geq \exx I\{q \leq 0\} q\\
& = \exx q - \exx_{+} q.
\end{align*}
Using this inequality, we have
\begin{align*}
\uu^{T} \rnew^{\prime\prime}(\ww) \uu & = \exx \rho^{\prime\prime}\left(\frac{\gamma - y \, \langle \ww, \xx \rangle}{s}\right) q\\
& = \exx_{+} \rho^{\prime\prime}\left(\frac{\gamma - y \, \langle \ww, \xx \rangle}{s}\right) q + \exx_{-} \rho^{\prime\prime}\left(\frac{\gamma - y \, \langle \ww, \xx \rangle}{s}\right) q\\
& \geq \exx_{+} \rho^{\prime\prime}\left(\frac{\gamma - y \, \langle \ww, \xx \rangle}{s}\right) q + \left( \exx q - \exx_{+} q \right)\\
& = \exx q + \exx_{+} \left(\rho^{\prime\prime}\left( \frac{\gamma - y \, \langle \ww, \xx \rangle}{s}\right) - 1 \right) q.
\end{align*}
The second term on the right-hand side is a negative value that can be taken near zero for any $\ww \in \WW$ by taking $s>0$ large enough. The first term is $\exx q = \uu^{T}\exx\xx\xx^{T}\uu$, and thus with large enough $s$, as long as the second moment matrix of the inputs is positive definite satisfying $\exx\xx\xx^{T} \succeq c I_{d}$ for some $c>0$ (a weak assumption), it follows that there exists a $\kappa>0$ such that $\rnew^{\prime\prime}(\ww) \succeq \kappa I_{d}$ holds. Since the risk is twice continuously differentiable, This implies $\kappa$-strong convexity \citep[Theorem 2.1.11]{nesterov2004ConvOpt}.
\end{rmk}

With these assumptions in place, finite-sample risk bounds can be obtained.

\begin{thm}\label{thm:riskbd}
Running Algorithm \ref{algo:mainGD} for $T$ iterations, the final output produced, written $\wwhat_{(T)}$, for constant $c>0$ and $\beta \defeq 2\kappa v_{X} / (\kappa + v_{X})$ satisfies
\begin{align*}
\risk(\wwhat_{(T)}) - \risk^{\ast} \leq  \Psi_{s,\gamma}^{-1}\left( (1-\alpha\beta)^{T} v_{X} \|\wwhat_{(0)}-\wwstar\|^{2} + \frac{4v_{X}}{\beta^{2}n}\left((1+\delta)v_{X} + 2s\,\varepsilon^{\ast}\right)^{2} \right)
\end{align*}
with probability no less than $1-2\delta$ over the random draw of the sample, where the dominant term $\varepsilon^{\ast}$ is defined
\begin{align*}
\varepsilon^{\ast} \defeq \sqrt{c\rho^{\prime}(\sqrt{2})^{2}\exx\|\xx\xx^{T}\|(d\log(3\sqrt{n}(2\delta)^{-1}) + \log(\delta^{-1}))}.
\end{align*}
\end{thm}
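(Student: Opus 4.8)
The plan is to reduce the excess classification risk to the excess surrogate risk $\rnew(\wwhat_{(T)}) - \rnew^{\ast}$ via the calibration machinery, and then to bound the latter by treating Algorithm~\ref{algo:mainGD} as \emph{inexact} gradient descent on the strongly convex, smooth objective $\rnew$. By Lemma~\ref{lem:Psi_transform}(3) the transform $\Psi_{s,\gamma}$ is invertible with non-decreasing inverse on $[0,s^{2}\rho(\gamma/s)]$, so it suffices to show that $\rnew(\wwhat_{(T)}) - \rnew^{\ast}$ is at most the argument of $\Psi_{s,\gamma}^{-1}(\cdot)$ in the statement; applying $\Psi_{s,\gamma}^{-1}$ to both sides then closes the argument. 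I would write $\Gamma_{n} \defeq \sup_{\ww \in \WW}\|\nabla\objnew(\ww;\gamma) - \nabla\rnew(\ww)\|$ for the uniform gradient deviation, noting that $\objnew(\cdot;\gamma)$ is exactly the empirical $\varphi$-risk, so $\exx\nabla\objnew(\ww;\gamma) = \nabla\rnew(\ww)$, and that the update reads $\wwhat_{(t+1)} = \wwhat_{(t)} - \alpha\,\nabla\objnew(\wwhat_{(t)};\gamma)$.

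For the optimization part, observe first that since $0 \leq \rho^{\prime\prime} \leq 1$ one has $\rnew^{\prime\prime}(\ww) \preceq \exx\xx\xx^{T} \preceq v_{X}I_{d}$, so $\rnew$ is $v_{X}$-smooth; by \ref{asmp:risk_strong_convex} it is $\kappa$-strongly convex and by \ref{asmp:risk_flat_min} it has stationary point $\wwstar$. For step sizes small enough that the exact-gradient map contracts (e.g.\ $\alpha \leq 2/(\kappa + v_{X})$), the textbook one-step bound for smooth strongly convex functions gives $\|\wwhat_{(t)} - \alpha\nabla\rnew(\wwhat_{(t)}) - \wwstar\|^{2} \leq (1-\alpha\beta)\|\wwhat_{(t)} - \wwstar\|^{2}$ with $\beta = 2\kappa v_{X}/(\kappa+v_{X})$; adding and subtracting $\alpha\nabla\rnew(\wwhat_{(t)})$ in the actual update and using the triangle inequality yields $\|\wwhat_{(t+1)} - \wwstar\| \leq \sqrt{1-\alpha\beta}\,\|\wwhat_{(t)} - \wwstar\| + \alpha\Gamma_{n}$. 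Unrolling over $t = 0,\dots,T-1$ and summing the geometric series ($1/(1-\sqrt{1-\alpha\beta}) \leq 2/(\alpha\beta)$) gives $\|\wwhat_{(T)} - \wwstar\| \leq (1-\alpha\beta)^{T/2}\|\wwhat_{(0)} - \wwstar\| + 2\Gamma_{n}/\beta$; squaring via $(a+b)^{2} \leq 2a^{2}+2b^{2}$ and using $v_{X}$-smoothness to pass to function values gives $\rnew(\wwhat_{(T)}) - \rnew^{\ast} \leq \tfrac{v_{X}}{2}\|\wwhat_{(T)} - \wwstar\|^{2} \leq v_{X}(1-\alpha\beta)^{T}\|\wwhat_{(0)} - \wwstar\|^{2} + 4v_{X}\Gamma_{n}^{2}/\beta^{2}$.

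The main obstacle is the high-probability uniform bound $\Gamma_{n} \leq ((1+\delta)v_{X} + 2s\varepsilon^{\ast})/(s\sqrt{n})$, which must hold \emph{without} any bound on $\|\xx\|$. I would cover $\WW$ by an $\epsilon$-net $\WW_{\epsilon}$ with $\log|\WW_{\epsilon}| \leq d\log(3\Delta/\epsilon)$, using \ref{asmp:model_compact}. At a fixed net point $\ww_{j}$, assumption \ref{asmp:sub_gaussian} — combined with a net over the unit sphere to pass from the per-direction moment-generating-function bound to a norm bound — yields a sub-Gaussian tail for $\|\nabla\objnew(\ww_{j};\gamma) - \nabla\rnew(\ww_{j})\|$ with variance proxy controlled by $s^{2}\|\Sigma(\ww_{j})\| \leq s^{2}\rho^{\prime}(\sqrt{2})^{2}\exx\|\xx\xx^{T}\|$ (since $|\rho^{\prime}| \leq \rho^{\prime}(\sqrt{2})$ and $\|\xx\xx^{T}\| = \|\xx\|^{2}$); bounding $\operatorname{tr}\Sigma(\ww_{j}) \leq d\|\Sigma(\ww_{j})\|$ and taking a union bound over $\WW_{\epsilon}$ makes this norm at most $s\varepsilon^{\ast}/\sqrt{n}$ simultaneously over the net, where the net and sphere cardinalities collapse into the $d\log(3\sqrt{n}(2\delta)^{-1}) + \log(\delta^{-1})$ factor of $\varepsilon^{\ast}$. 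To pass from the net to an arbitrary $\ww \in \WW$ I would use that $\rho^{\prime}$ is $1$-Lipschitz, so $\ww \mapsto \nabla\objnew(\ww;\gamma) - \nabla\rnew(\ww)$ is Lipschitz with constant at most $s^{-1}(\tfrac{1}{n}\sum_{i}\|\xx_{i}\|^{2} + v_{X})$, together with a concentration bound giving $\tfrac{1}{n}\sum_{i}\|\xx_{i}\|^{2} \leq (1+\delta)v_{X}$ with probability $1-\delta$; choosing $\epsilon$ of order $1/\sqrt{n}$ makes the discretization error $(1+\delta)v_{X}/(s\sqrt{n})$, and the two failure events cost $2\delta$ in total. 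The delicate points are tuning the net resolution so that $\varepsilon^{\ast}$ comes out exactly as stated, and keeping the sub-Gaussian concentration valid under unbounded inputs.

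Finally I would substitute $\Gamma_{n}^{2} \leq ((1+\delta)v_{X} + 2s\varepsilon^{\ast})^{2}/(s^{2}n)$ into the bound from the second paragraph — the factor $s^{2}$ cancels — to obtain, with probability at least $1-2\delta$, $\rnew(\wwhat_{(T)}) - \rnew^{\ast} \leq v_{X}(1-\alpha\beta)^{T}\|\wwhat_{(0)} - \wwstar\|^{2} + \tfrac{4v_{X}}{\beta^{2}n}((1+\delta)v_{X} + 2s\varepsilon^{\ast})^{2}$, and then apply the monotone $\Psi_{s,\gamma}^{-1}$ from Lemma~\ref{lem:Psi_transform}(3) to both sides to recover the stated bound on $\risk(\wwhat_{(T)}) - \risk^{\ast}$.
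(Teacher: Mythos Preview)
Your plan is essentially the paper's proof: reduce to the excess $\varphi$-risk via $\Psi_{s,\gamma}^{-1}$, view Algorithm~\ref{algo:mainGD} as inexact gradient descent on the $v_{X}$-smooth, $\kappa$-strongly convex $\rnew$, derive the one-step bound $\|\wwhat_{(t+1)}-\wwstar\|\leq \sqrt{1-\alpha\beta}\,\|\wwhat_{(t)}-\wwstar\|+\alpha\Gamma_{n}$ and unroll, then control $\Gamma_{n}$ by an $\epsilon$-net on $\WW$, pointwise sub-Gaussian concentration at the net points, and a Lipschitz estimate for the discretization error.

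The one place your outline diverges from the paper, and would not go through as written, is the discretization step. You assert that $n^{-1}\sum_{i}\|\xx_{i}\|^{2}\leq (1+\delta)v_{X}$ with probability $1-\delta$; under the paper's assumptions there is no fourth-moment or tail control on $\|\xx\|^{2}$, so only Markov's inequality is available, yielding $n^{-1}\sum_{i}\|\xx_{i}\|^{2}\leq v_{X}/\delta$. The paper uses exactly this: the three-term split $\|\gest(\ww)-\rnewgrad(\ww)\|\leq\|\gest(\ww)-\gest(\wwtil)\|+\|\rnewgrad(\ww)-\rnewgrad(\wwtil)\|+\|\gest(\wwtil)-\rnewgrad(\wwtil)\|$ gives $\epsilon v_{X}/\delta + v_{X}\epsilon$ for the first two terms, and the choice $\epsilon=\delta/\sqrt{n}$ turns this into $(1+\delta)v_{X}/\sqrt{n}$ --- that is where the $(1+\delta)v_{X}$ in the statement actually comes from. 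Relatedly, your Lipschitz constant carries an extraneous $s^{-1}$: since the per-sample gradient is $-s\,\rho'(\cdot/s)\,y\xx$ and $\rho'$ is $1$-Lipschitz, the $s$ factors cancel and the Lipschitz constant is $\|\xx_{i}\|^{2}$, so there is no $s^{2}$ to cancel at the end. With the Markov-plus-$\epsilon$ tuning in place of your concentration claim, the rest of your argument matches the paper.
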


\begin{rmk}[Interpretation and tradeoffs]
Excess risk bounds give in Theorem \ref{thm:riskbd} are composed of two key terms, one of a computational nature, and one of a statistical nature. The first term is optimization error, which decreases as $T$ grows, and depends on the initial estimate $\wwhat_{(0)}$, the step-size $\alpha$, and the convexity of the surrogate risk through $\beta$. The second term is statistical error, and depends on the sample size, scale $s$, the number of parameters, and second-order moments of the inputs $\xx$. Note that there is a clear tradeoff due to $s$: a sufficiently large scale factor is needed to ensure \ref{asmp:risk_strong_convex} holds (yielding large enough $\beta$), but setting $s$ too large impacts the statistical error in a negative way.

Finally, we note the $d$ factor in $\varepsilon^{\ast}$ is due to a covering number argument used to obtain a bound on the empirical gradient error that holds uniformly over $\ww \in \WW$. Does there exist another computational procedure, with the \textit{same optimization error}, and without this seemingly superfluous $d$ factor in the statistical error? We pursue such analysis in future work.
\end{rmk}

\section{Empirical analysis}\label{sec:empirical}

In our numerical experiments, we aim to complement the theoretical analysis carried out in the previous section. We look at how algorithm parameter settings impact generalization guarantees, and using real-world datasets, investigate how Algorithm \ref{algo:mainGD} performs, comparing its behavior with a benchmark procedure.

\paragraph{Margin level, scale, and generalization}

First, we look at the function $\Psi_{s,\gamma}$ introduced in the previous section, and its inverse, $\Psi_{s,\gamma}^{-1}$. In the two leftmost plots of Figure \ref{fig:H_Psi_inverse}, we plot the graph of $H_{s,\gamma}(\cdot)$ and $\Psi_{s,\gamma}(\cdot)$ over $[0,1]$, for $s=1$ and varying values of $\gamma$. Convexity of $\Psi_{s,\gamma}$ and its monotonic dependence on $\gamma$ can be clearly observed.

In the second plot from the right, we fix $a$ and $s=1$ and plot the graph of $\Psi_{s,\gamma}^{-1}(a)$ over a range of $\gamma$ and $a$ values. We can clearly observe how achieving a better excess surrogate risk (corresponding to a smaller $a$ value) for a larger $\gamma$ value leads to smaller excess misclassification risk (corresponding to smaller values of the function plotted). In addition, the same excess surrogate risk $a$ clearly leads to better generalization in the misclassification risk if it is achieved with a larger $\gamma$ value, although this positive impact diminishes quickly as $\gamma$ gets large.

Finally, in the rightmost plot of Figure \ref{fig:H_Psi_inverse}, we fix $\gamma$ and $\varepsilon = 0.5$, and plot $\Psi_{s,\gamma}^{-1}(\varepsilon/s)$ for a range of positive $s$ values. In the limit as $s$ gets large, we find that this quantity bottoms out quickly at a positive value. This has important implications in terms of scaling strategies, because it demonstrates where issues can arise with scaling $s \to \infty$ with $n \to \infty$, as would be implied by simply minimizing the pointwise error bound (as seen in (\ref{eqn:error_bound_s}) and Lemma \ref{lem:pointwise_accuracy}). Indeed, if any algorithm can achieve an excess surrogate risk of $O(n^{-1/2})$ (corresponding to $\varepsilon$), if $s$ is allowed to scale as $O(\sqrt{n})$, then even taking $n$ large will not imply a small misclassification risk. This is one important reason that Algorithm \ref{algo:mainGD} does not scale using the bound-minimizing $s$ value, but rather a value that allows for consistency in the limit as $n$ and $T$ grow large.

\begin{figure}[t]
\centering
\includegraphics[width=0.5\textwidth]{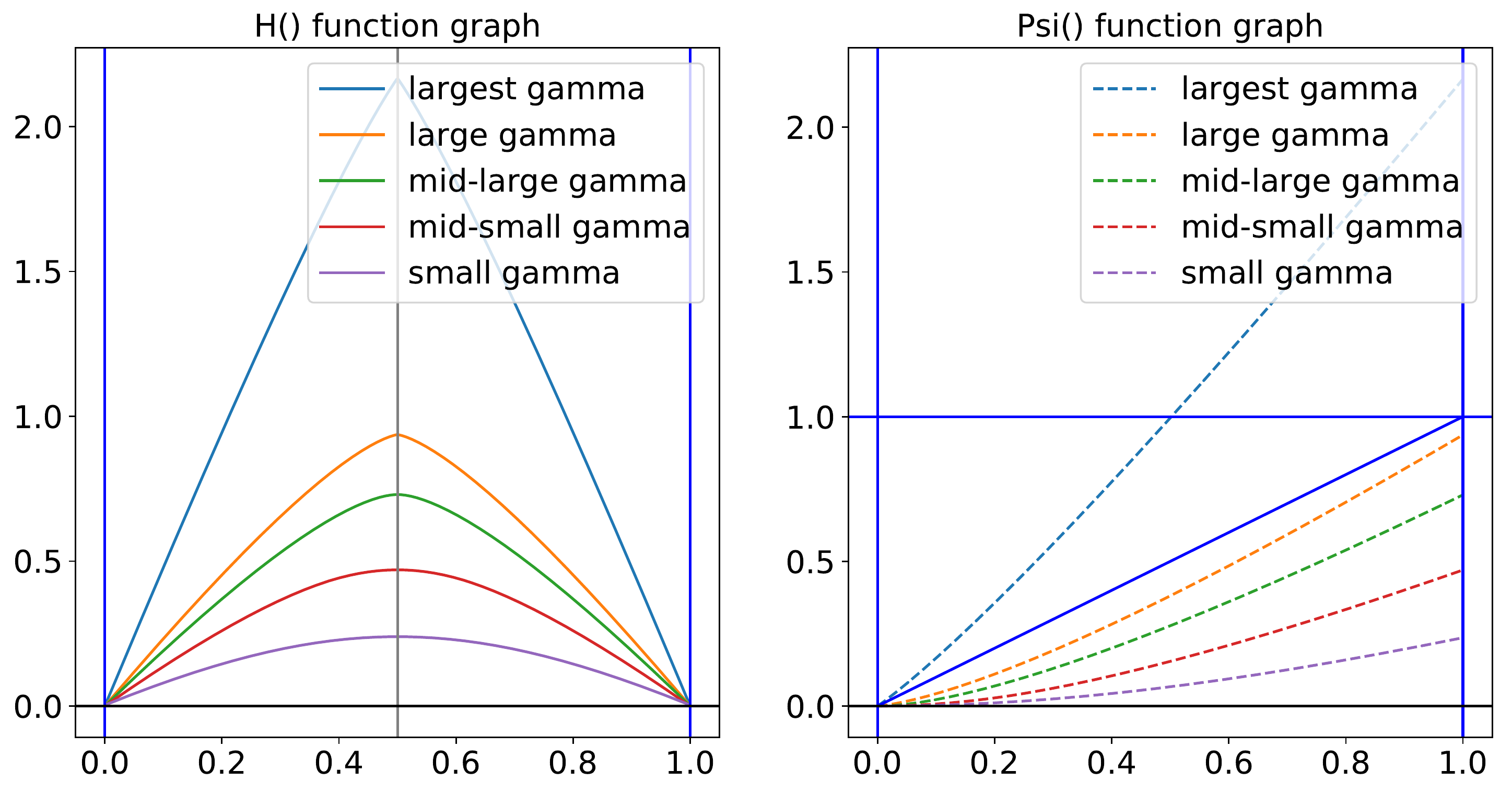}\includegraphics[width=0.25\textwidth]{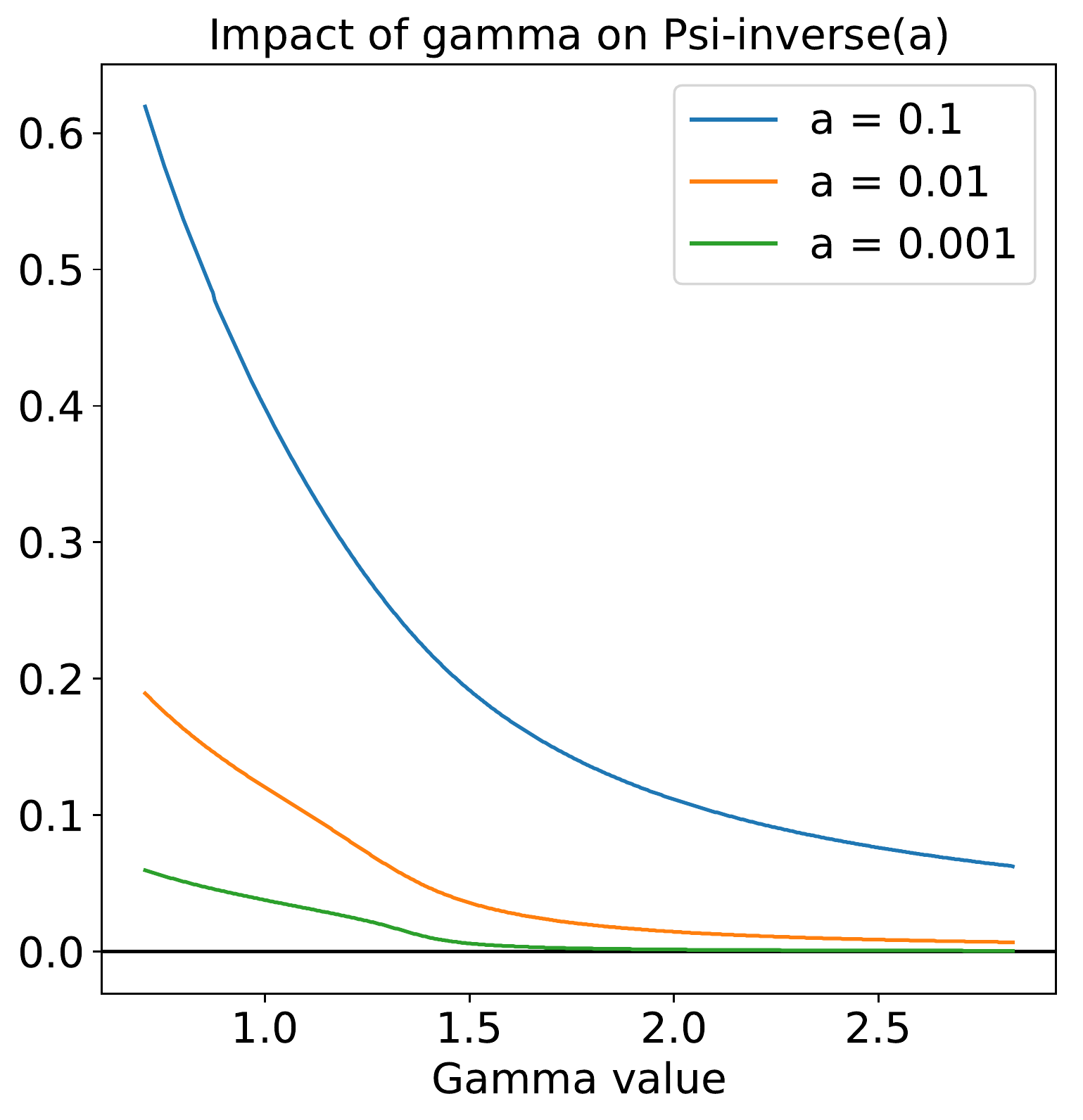}\includegraphics[width=0.25\textwidth]{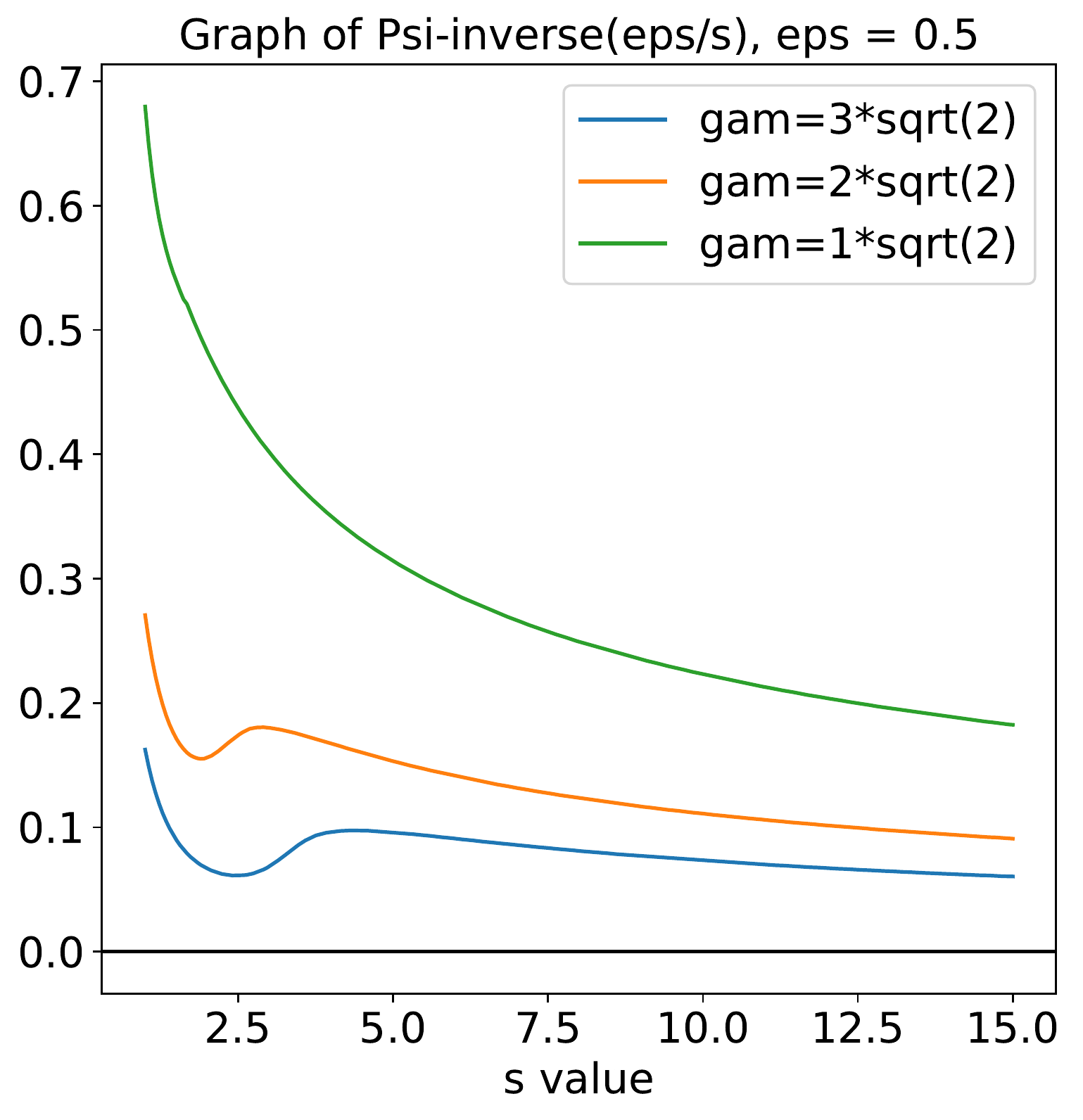}
\caption{Graphs of quantities related to the $\Psi$-transform of the proposed loss, namely $\Psi_{s,\gamma}$. In the leftmost two plots, from smallest to largest, the $\gamma$ values are $\gamma = \sqrt{2}/2, \sqrt{2}-0.4, \sqrt{2}-0.11, \sqrt{2}+0.11, 2\sqrt{2}$. Computation of the inverse is approximate, and done as follows. For any $(s,\gamma)$ pair, we compute $\Psi_{s,\gamma}(u)$ for $u \in [0,1]$ over a uniformly spaced grid $0 = u_{1} \leq u_{2} < \cdots < u_{K} = 1$, with $K=2500$. The approximate value is then given as $\Psi_{s,\gamma}^{-1}(a) = u_{k^{\ast}}$, where $k^{\ast} = \max\{k \in [K]: \Psi_{s,\gamma}(u_{k}) \leq a\}$.}
\label{fig:H_Psi_inverse}
\end{figure}

\paragraph{Benchmark data tests: experimental setup}

In all the experiments discussed here, we consider binary classification on real-world data sets, modified to control for unbalanced ratios of positive and negative labels. Training for each data set is done using pair $(\mv{X},\yy)$, where $\mv{X}$ is $n \times d$, and $\yy$ is $n \times 1$, and testing is done on a disjoint subset. The train-test sequence is repeated over 25 trials, and all numerical performance metrics displayed henceforth should be assumed to be averages taken over all trials.

We use four data sets, denoted \textsc{cov}, \textsc{digit5}, \textsc{protein}, and \textsc{sido}, creating subsets under the following constraints: (1) Sample size $n$ is no more than ten times the nominal dimension $d$, and (2) both the training and testing data sets have balanced ratios of labels (as close as possible to $50\%$ each). Starting with \textsc{cov} ($n=540$, $d=54$, non-zero: $22\%$), this is the ``Forest CoverType dataset'' on the UC Irvine repository, converted into a binary task identifying class 1 against the rest. \textsc{digit5} ($n=5000$, $d=784$, non-zero: $19\%$) is the MNIST hand-written digit data, converted into a binary task for the digit 5. \textsc{protein} ($n=740$, $d=74$, non-zero: $99\%$) is the protein homology dataset (KDD Cup 2004). \textsc{sido} ($n=425$, $d=4932$, non-zero: $11\%$) is the molecular descriptor data set (NIPS 2008 causality challenge), with binary-valued features. In each trial, from the full original data set, we take a random sub-sample of the specified size, without replacement, for training, and for test data we use as much of the remaining data as possible, within the confines of constraint (2) above.

As a well-known benchmark algorithm against which we can compare the behaviour and performance of the proposed Algorithm \ref{algo:mainGD}, we implement and run the well-known Pegasos algorithm of \citet{shalev2011a}. For both methods, the initial value $\wwhat_{(0)}$ is determined randomly in each trial. We explore multiple settings of Algorithm \ref{algo:mainGD} described further below, but in all cases we take the stochastic optimization approach: instead of using all $n$ training examples at each step, we randomly select one at a time for computing the update direction, and use a step size of . For direct comparison with Pegasos, we set the margin level to $\gamma = 1$, add a squared $\ell_{2}$-norm regularization term with coefficient $\lambda$, utilizing a step size of $\alpha = (s\sqrt{\lambda}(1+t))^{-1}$, and projecting to the $1/\sqrt{\lambda}$-radius ball. That is, we run a stochastic projected gradient descent version of Algorithm \ref{algo:mainGD}, and evaluate the impact of the proposed loss function.

\paragraph{Benchmark data tests: generalization with naive scaling}

We begin with the simplest setting of Algorithm \ref{algo:mainGD}, where $s=1$ is fixed throughout. In Figures \ref{fig:realtest_naive_scale_1}--\ref{fig:realtest_naive_scale_2}, we plot training error, test error, and numerous statistics of the empirical margin distribution, all as a function of cost incurred (equal to number of gradients computed). For each dataset, we experimented with $\lambda \in \{10^{0},10^{-6},10^{-6},\ldots,10^{-1}\}$ and display the results for the case of $\lambda$ that resulted in the best performance, as measured by the lowest test error achieved over all iterations.

We see that our proposed procedure is highly competitive with the best setting of Pegasos, and results in a margin distribution very distinct from that of the competing procedure. On the whole, we see a much more symmetrical distribution, with smaller variance, that over iterations pushes the margin location up in a monotonic fashion, in stark contrast to that of Pegasos, whose empirical distribution peaks early and slowly settles down over time. The smaller variance and higher degree of symmetry is precisely what we would expect given the definition of $\rho$, which assigns a penalty for correctly classified examples that are overconfidently classified, as discussed in section \ref{sec:derivation}.

\begin{figure}[h]
\centering
(\textsc{cov}) \hspace{6.5cm} (\textsc{digit5})
\smallskip
\includegraphics[width=0.5\textwidth]{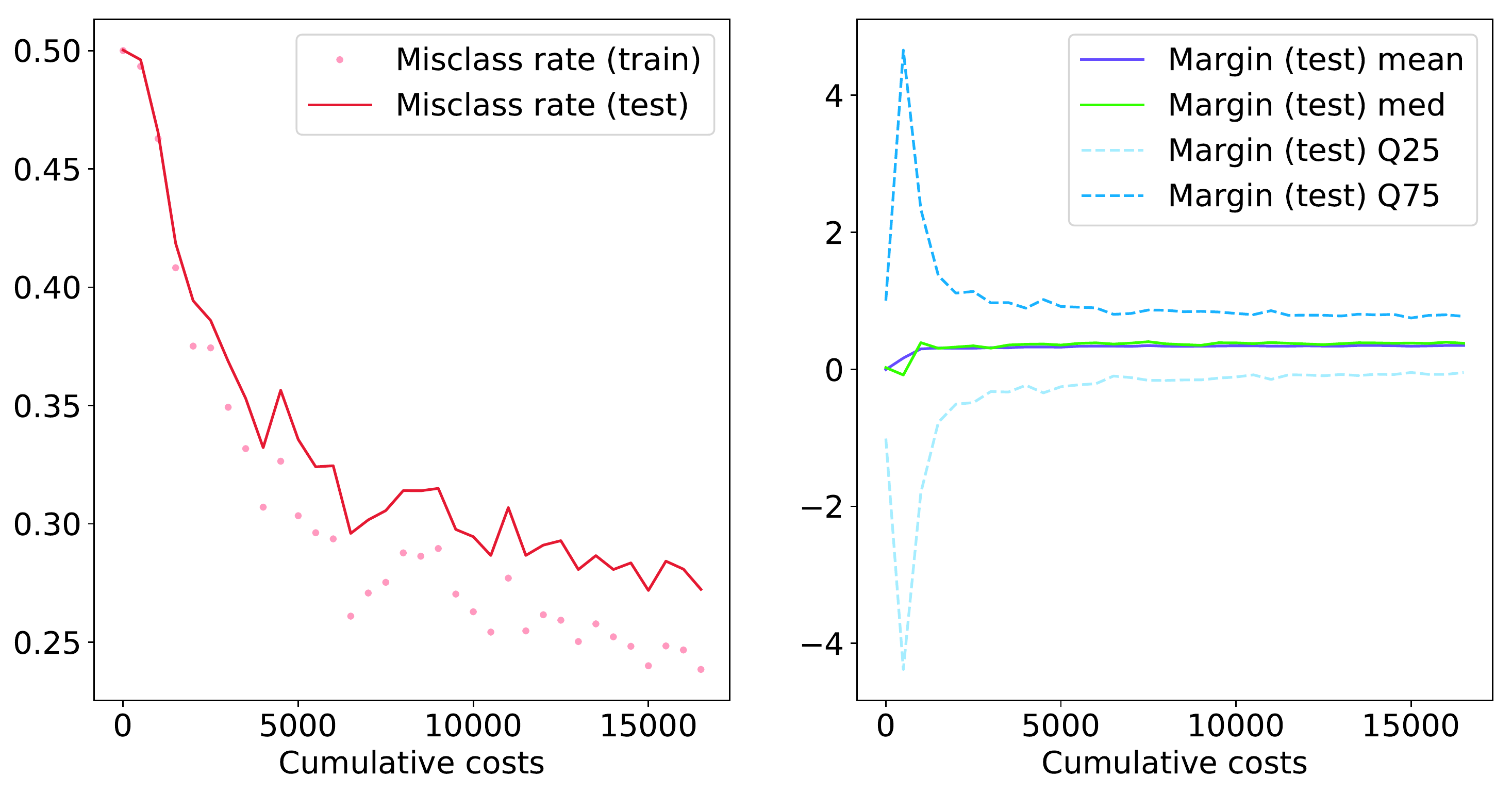}\includegraphics[width=0.5\textwidth]{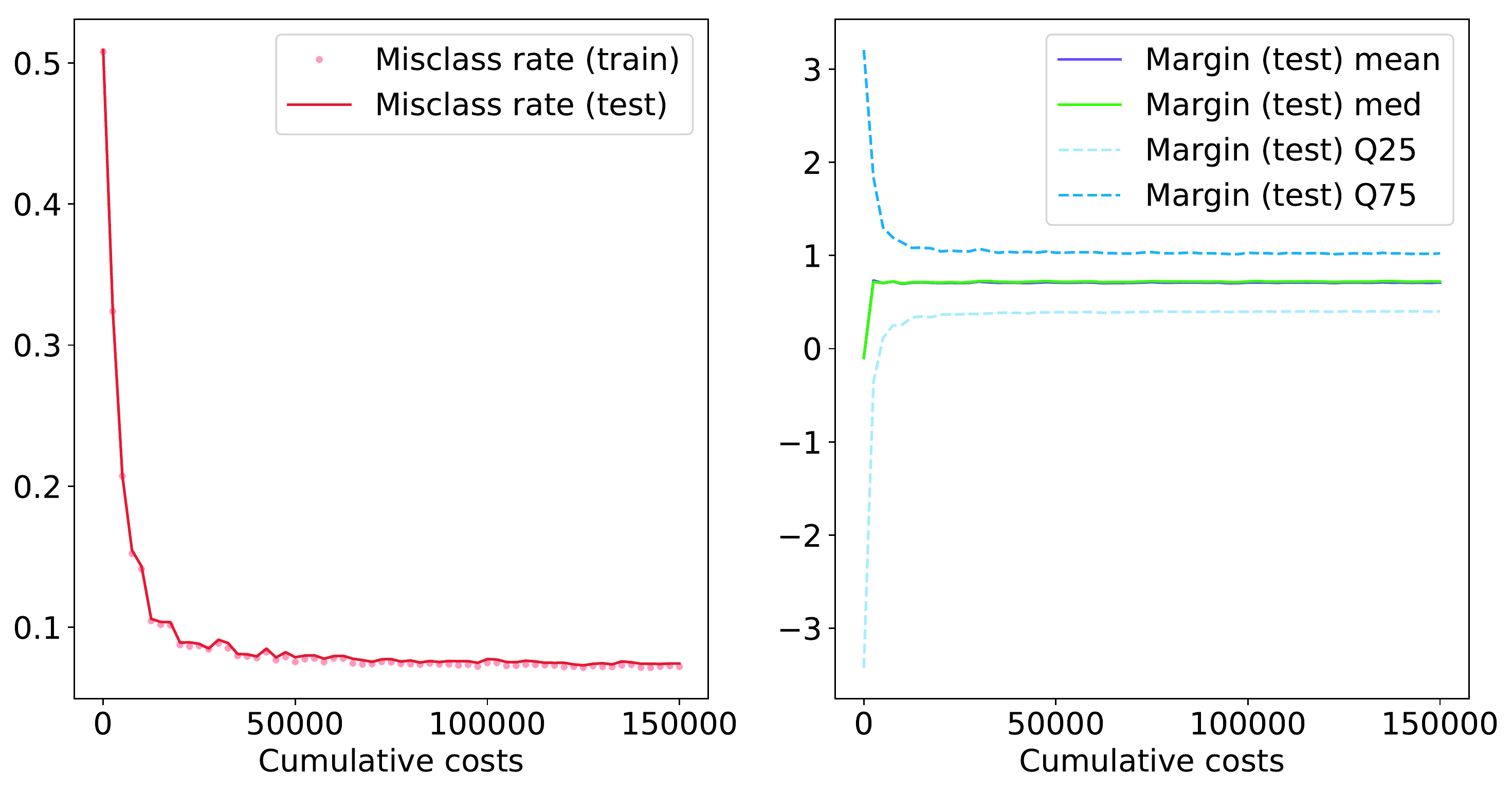}\\
\includegraphics[width=0.5\textwidth]{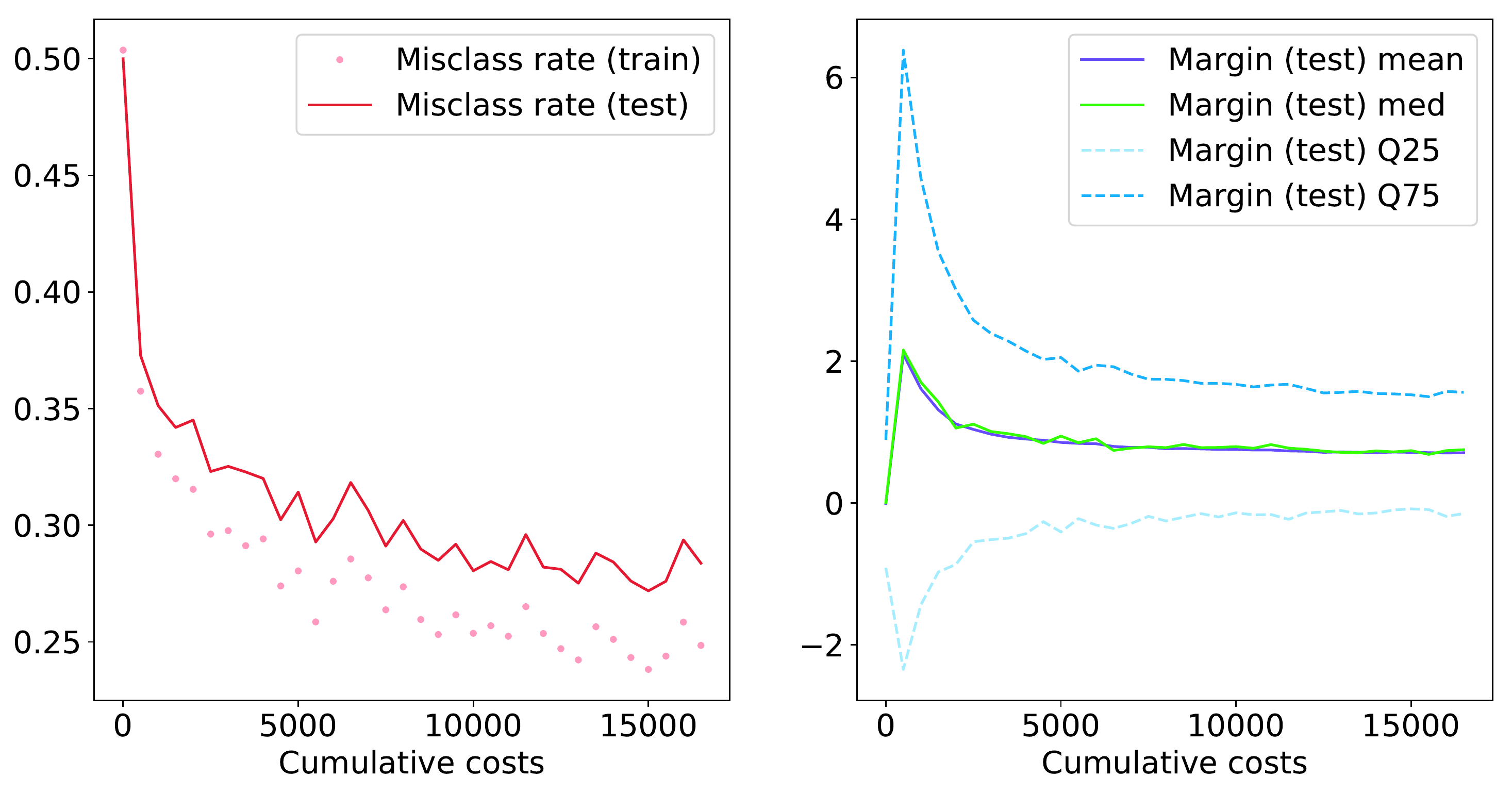}\includegraphics[width=0.5\textwidth]{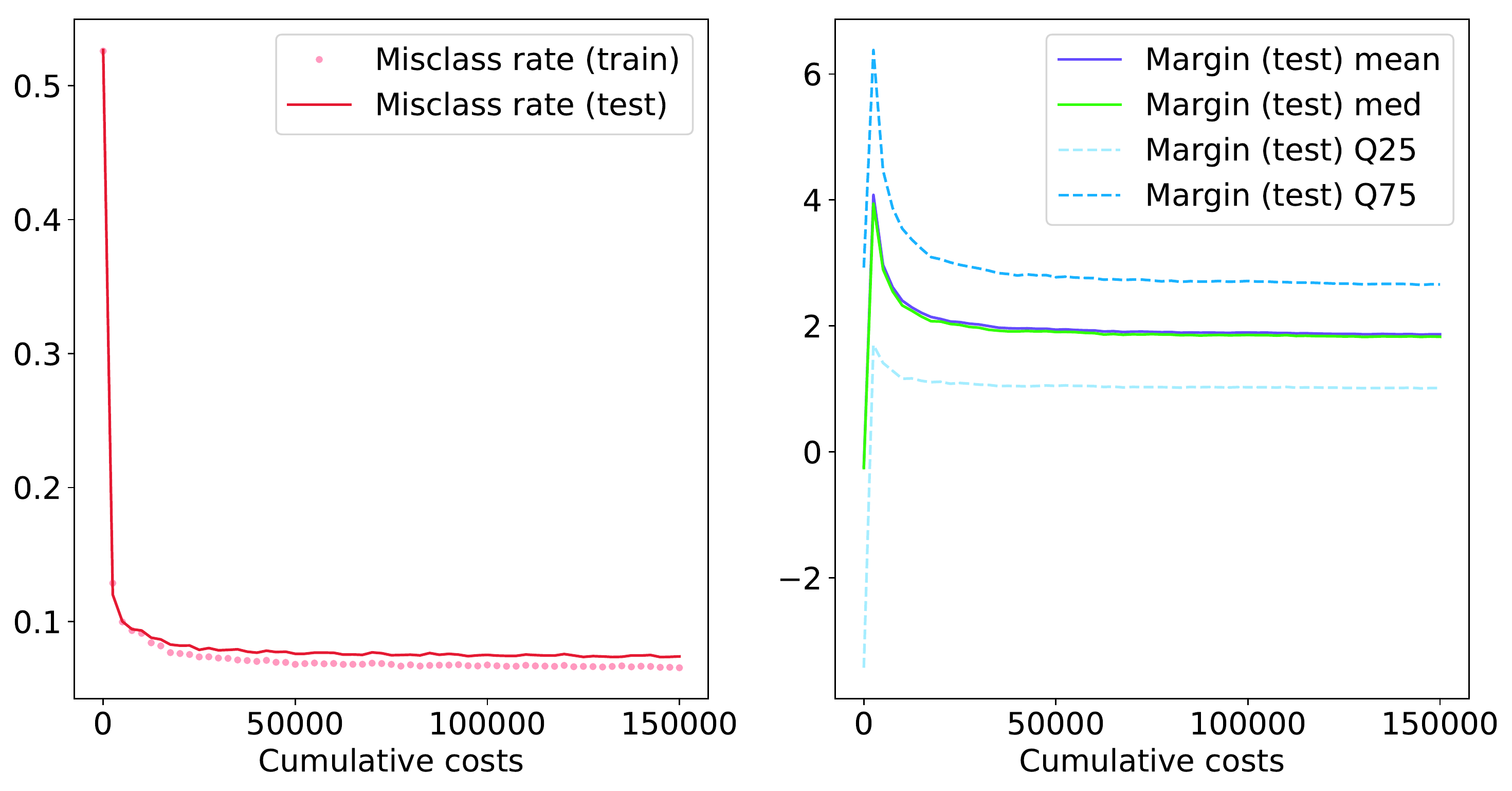}
\caption{Top row: Algorithm \ref{algo:mainGD}. Bottom row: Pegasos.}
\label{fig:realtest_naive_scale_1}
\end{figure}

\begin{figure}[h]
\centering
(\textsc{protein}) \hspace{6.5cm} (\textsc{sido})
\smallskip
\includegraphics[width=0.5\textwidth]{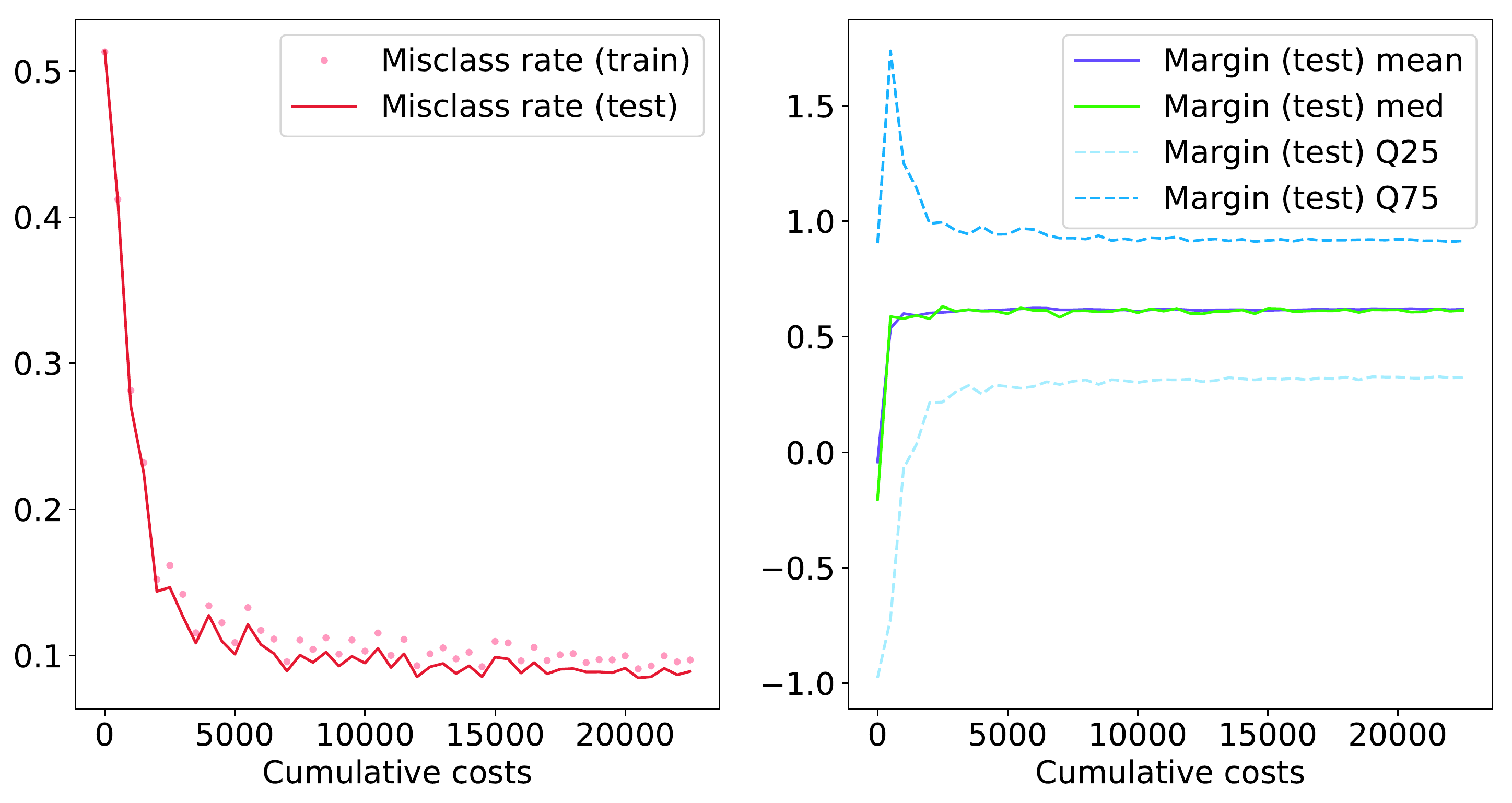}\includegraphics[width=0.5\textwidth]{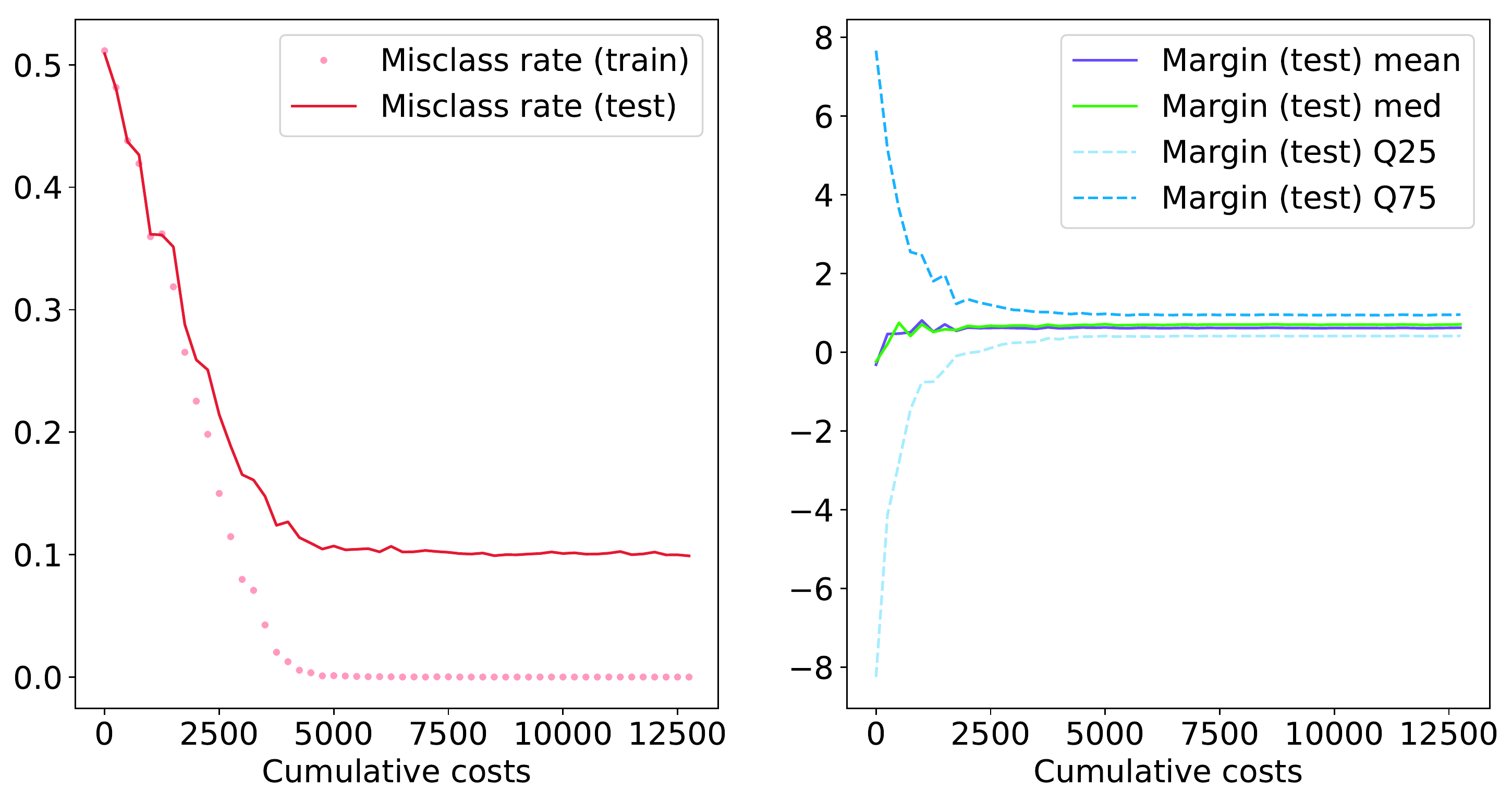}\\
\includegraphics[width=0.5\textwidth]{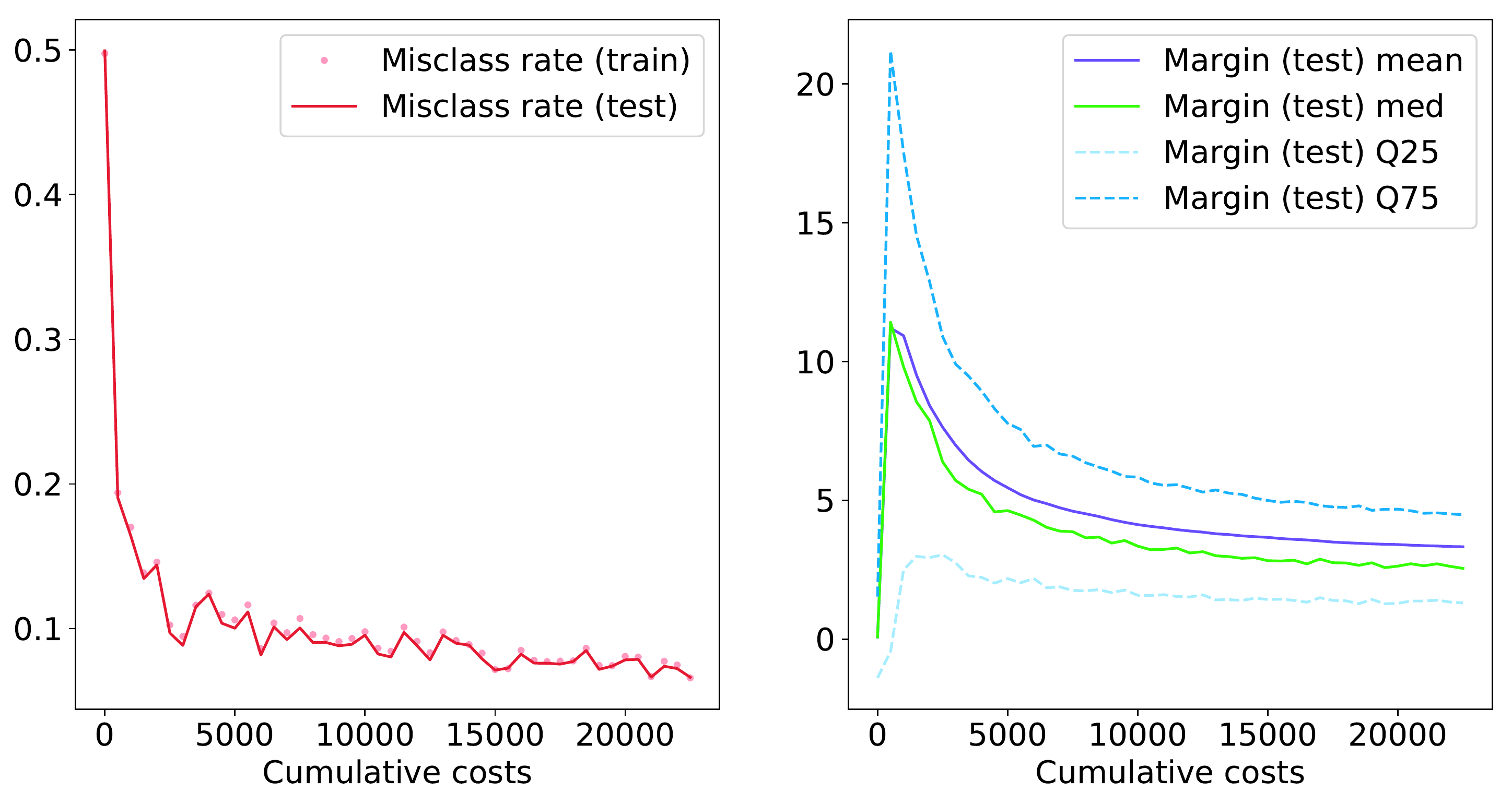}\includegraphics[width=0.5\textwidth]{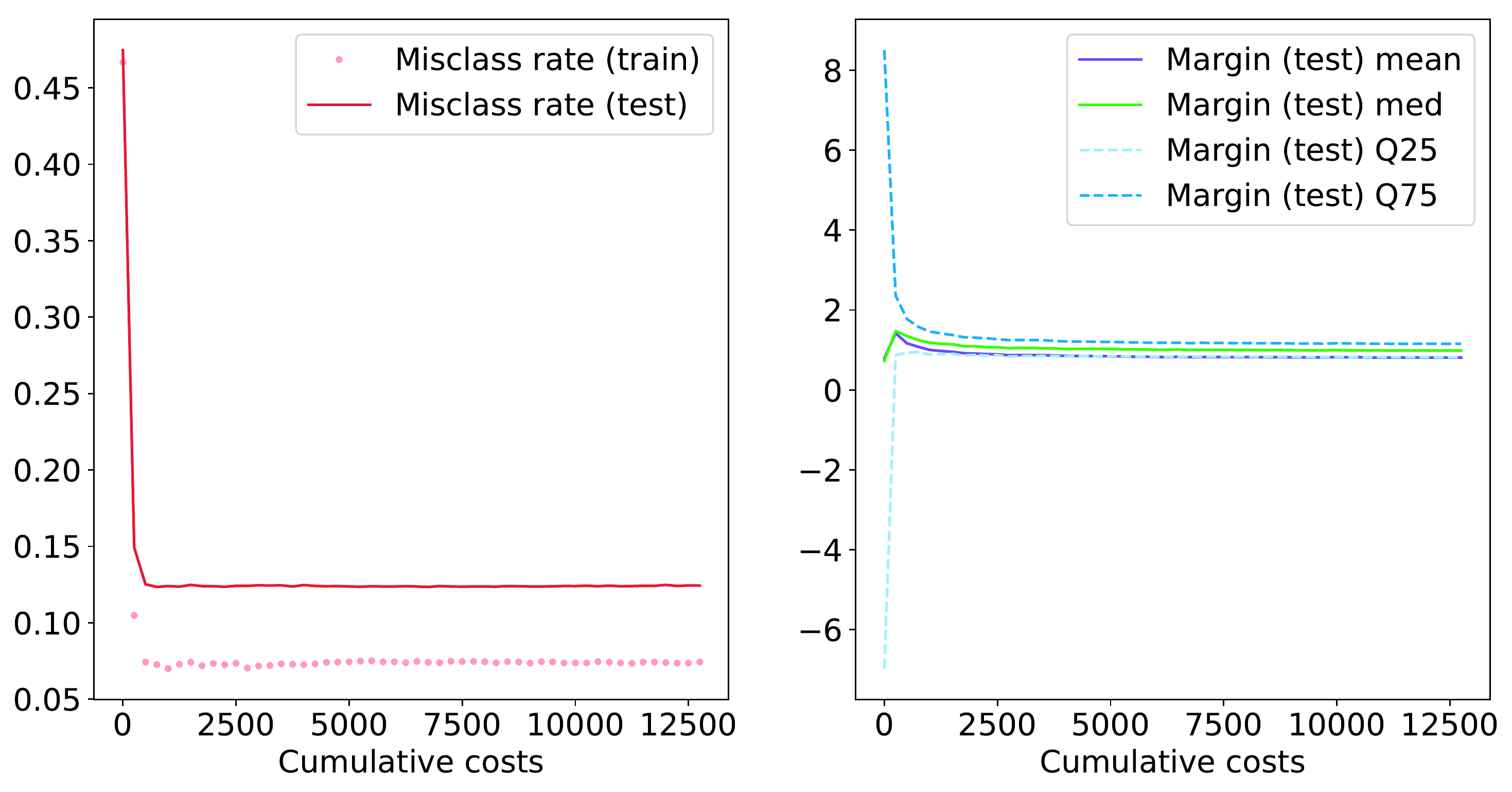}
\caption{Top row: Algorithm \ref{algo:mainGD}. Bottom row: Pegasos.}
\label{fig:realtest_naive_scale_2}
\end{figure}

\paragraph{Benchmark data tests: scaling and regularization}

Next, we look at the impact of a fixed scale, determined by observed data, as follows. Each run of Algorithm \ref{algo:mainGD} starts with $s=1$ fixed just as in the previous tests, but after a pre-fixed number of steps, updates the scale just once, to take a value of $s \geq \sqrt{nv_{X}/(2\lambda\log(\delta^{-1}))}$ (see Lemma \ref{lem:pointwise_accuracy}), where $v_{X}$ is approximated using the 75th quantile of the empirical distribution induced by $\{|y_{i}\,\langle \wwhat_{(t)}, \xx_{i} \rangle|: i \in [n]\}$. This time, we intentionally under-regularize, setting $\lambda$ at less than 1/100th of the best setting found in the previous tests.

Representative results are given in Figure \ref{fig:realtest_scaled_once}. When highly under-regularized, \textit{and} without scaling, the learning algorithm just wanders about, overwhelmed by the variance of the per-iteration sub-sampling; when the procedure is left to run like this, a good solution can rarely be found before the step size grows small, highly inefficient. On the other hand, using the simple data-driven scaling procedure just described to fix a ``safe'' value of $s$, we find that the learning algorithm is almost immediately accelerated, and in less time essentially catches up with the performance achieved under the best regularization possible. This is extremely encouraging, as it suggests that a safe, inexpensive, automated scaling procedure can make up for our lack of knowledge about the ideal regularization parameter, allowing for potentially significant savings in hyper-parameter exploration.

\begin{figure}[h]
\centering
\includegraphics[width=0.25\textwidth]{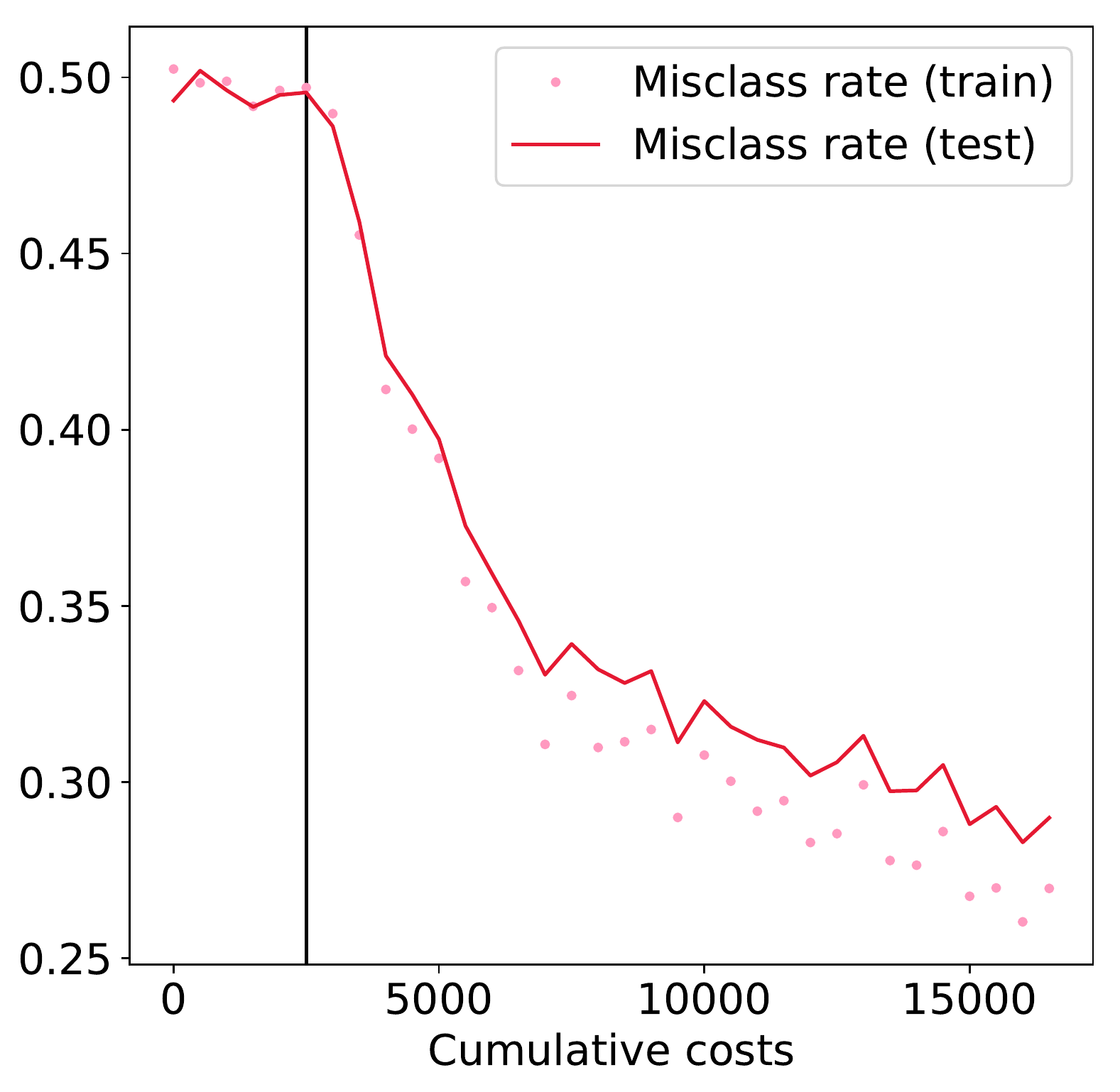}\includegraphics[width=0.25\textwidth]{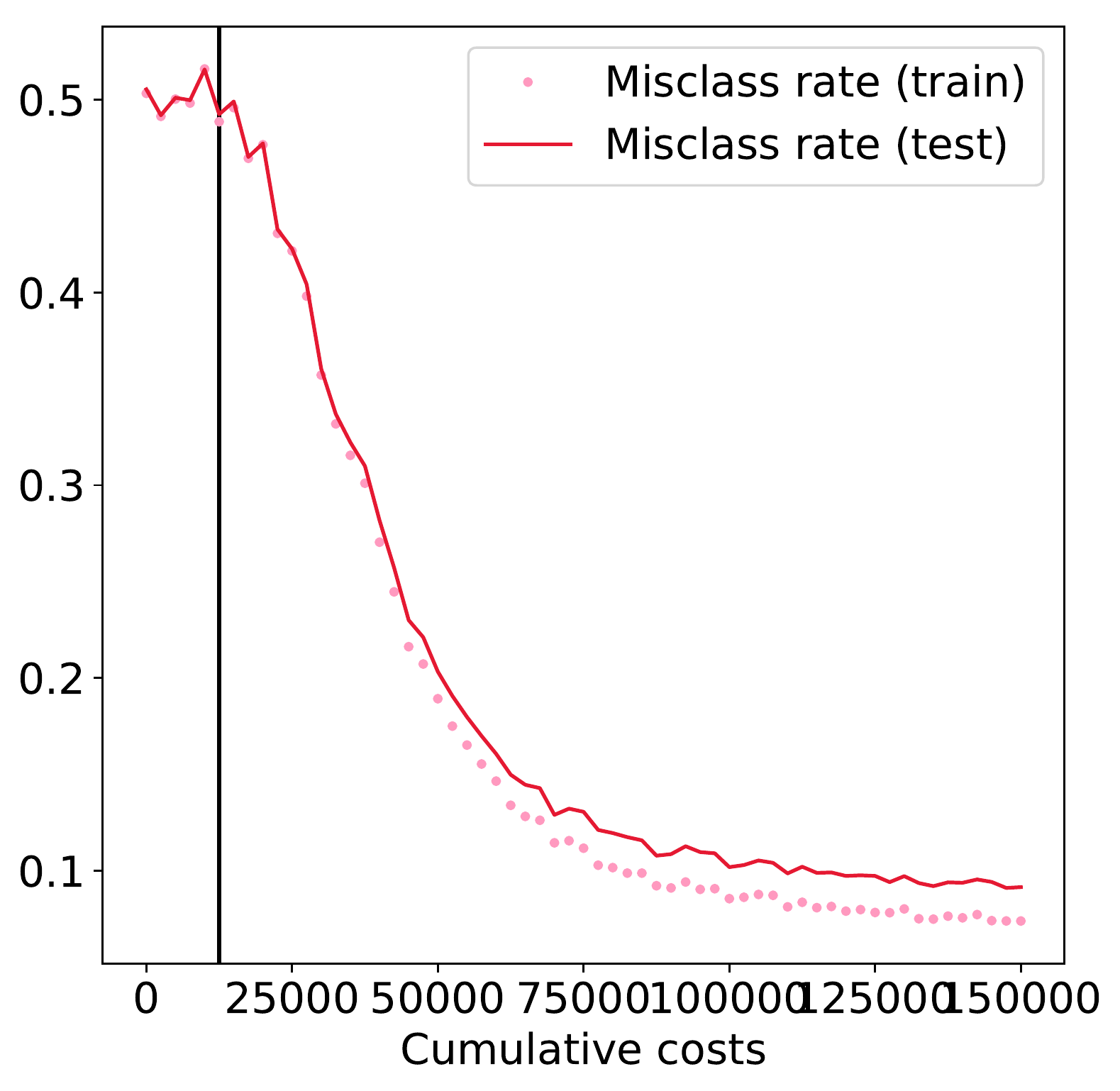}\includegraphics[width=0.25\textwidth]{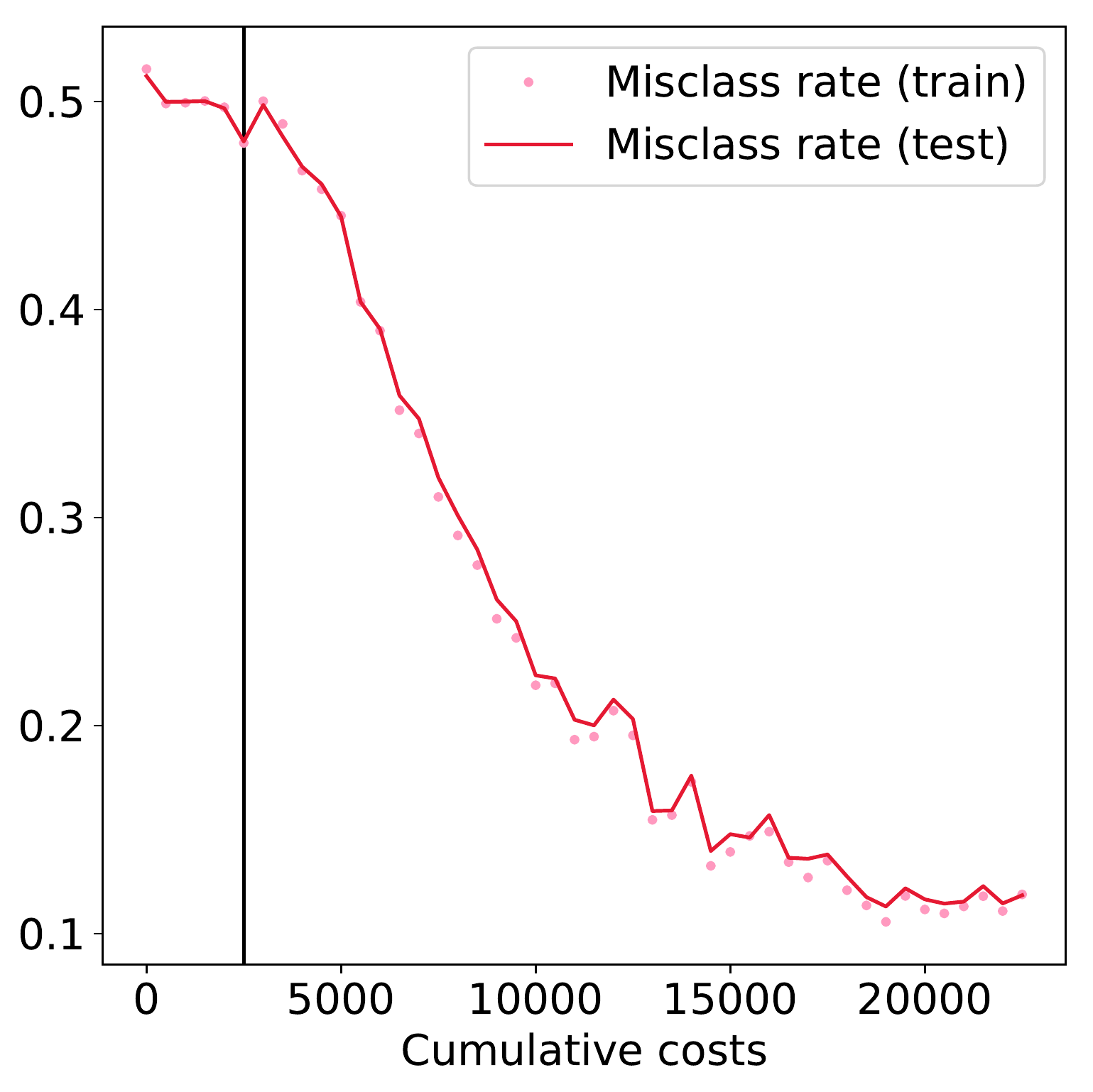}\includegraphics[width=0.25\textwidth]{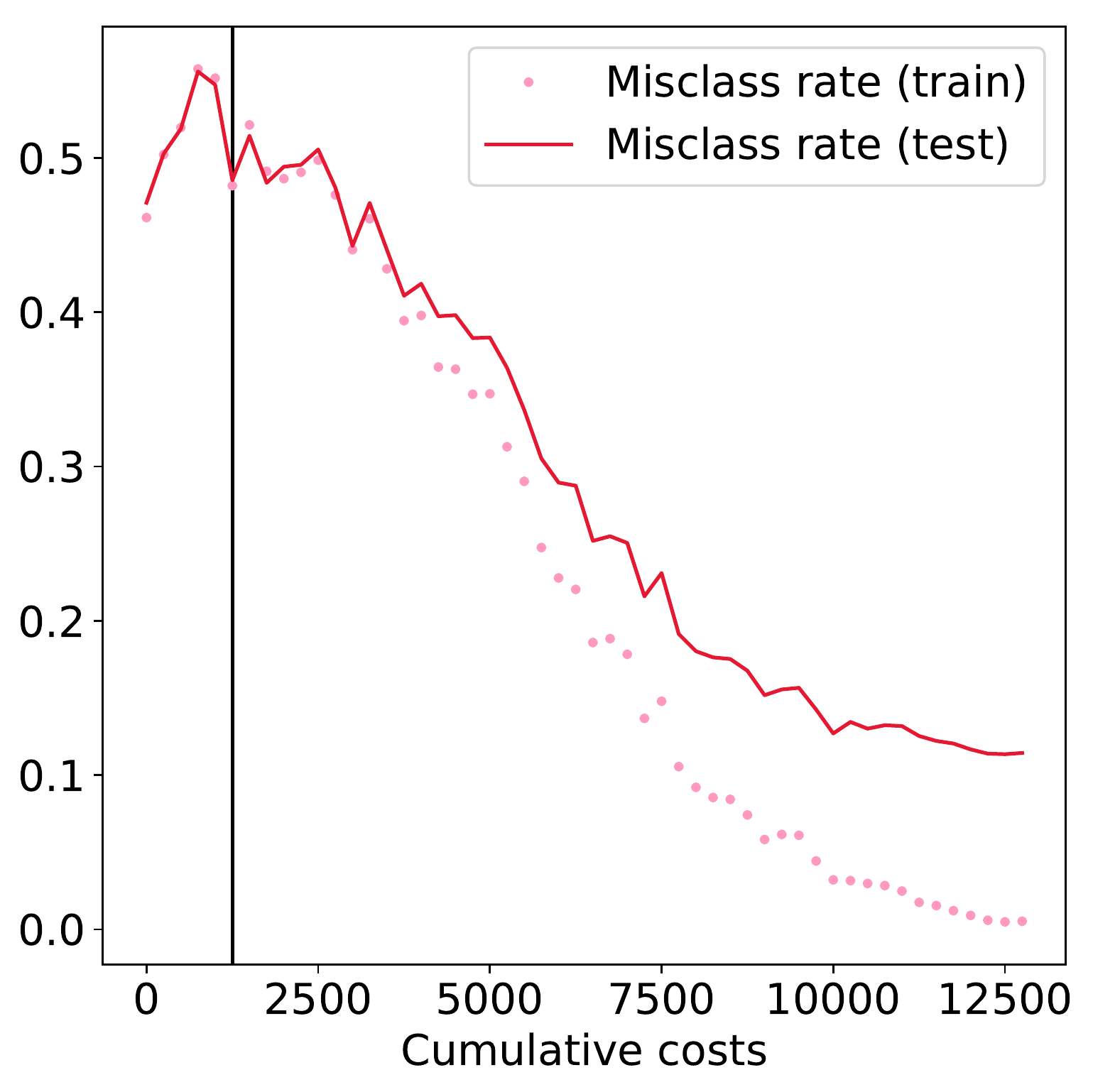}
\caption{Algorithm \ref{algo:mainGD} with data-based $s$ setting starting from the point marked by a black vertical line. From left to right, \textsc{cov}, \textsc{digit5}, \textsc{protein} (all $\lambda = 10^{-5}$), and \textsc{sido} ($\lambda = 10^{-3}$).}
\label{fig:realtest_scaled_once}
\end{figure}

\section{Concluding remarks}\label{sec:conclusion}

In this paper, we introduced and analyzed a new learning algorithm which, via a new convex loss with re-scaling, lets us pursue stronger guarantees for the resulting margin distribution (and classifier) than are possible with the traditional hinge loss. This allows us to bridge the gap between inference and computation, since strong learning guarantees are available for Algorithm \ref{algo:mainGD}, which is readily implemented in practice. Empirical tests confirmed that the algorithm basically behaves as we would expect, and that even with naive parameter settings, appropriate re-scaling on the back end allows our procedure to match or exceed the performance of well-known competitors.

\newpage

\bibliographystyle{refs/apalike}
\bibliography{refs/refs_margins.bib}

\newpage

\section{Technical appendix}\label{sec:appendix}

\subsection{Preliminaries}\label{sec:tech_prelims}

Here we put together few standard technical results that are utilized in the main proofs.

\begin{lem}
Let $f:\RR^{d} \to \RR$ be continuously differentiable, convex, and $l$-smooth. Then, we have
\begin{align}
\label{eqn:smoothness_property_1}
f(\uu) - f(\vv) & \leq \frac{l}{2}\|\uu - \vv\|^{2} + \langle f^{\prime}(\vv), \uu-\vv \rangle\\
\label{eqn:smoothness_property_2}
\|f^{\prime}(\uu) - f^{\prime}(\vv)\|^{2} & \leq 2l \left( f(\uu) - f(\vv) - \langle f^{\prime}(\vv), \uu-\vv \rangle \right).
\end{align}
\end{lem}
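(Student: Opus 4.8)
The plan is to derive both inequalities from the standard first-order characterizations of convexity and $l$-smoothness. Recall that $l$-smoothness means $\|f'(\uu)-f'(\vv)\| \leq l\|\uu-\vv\|$ for all $\uu,\vv$, and convexity gives $f(\uu) \geq f(\vv) + \langle f'(\vv), \uu-\vv\rangle$.

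For \eqref{eqn:smoothness_property_1}, I would write $f(\uu) - f(\vv) = \int_{0}^{1} \langle f'(\vv + t(\uu-\vv)), \uu-\vv \rangle \, dt$ using the fundamental theorem of calculus along the segment from $\vv$ to $\uu$. Adding and subtracting $\langle f'(\vv), \uu-\vv\rangle$ inside the integral, this equals $\langle f'(\vv), \uu-\vv\rangle + \int_{0}^{1} \langle f'(\vv+t(\uu-\vv)) - f'(\vv), \uu-\vv\rangle\, dt$. Bounding the integrand by Cauchy--Schwarz and then by $l$-smoothness, $\langle f'(\vv+t(\uu-\vv))-f'(\vv), \uu-\vv\rangle \leq \|f'(\vv+t(\uu-\vv))-f'(\vv)\|\,\|\uu-\vv\| \leq l t \|\uu-\vv\|^{2}$, and integrating $\int_0^1 lt\,dt = l/2$ gives the claimed bound. (Convexity is not even needed for this half.)

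For \eqref{eqn:smoothness_property_2}, the standard trick is to introduce the auxiliary function $g(\ww) \defeq f(\ww) - \langle f'(\vv), \ww\rangle$, which is convex (sum of a convex function and a linear one), is $l$-smooth (its gradient $f'(\ww)-f'(\vv)$ has the same Lipschitz constant as $f'$), and attains its global minimum at $\ww = \vv$ since $g'(\vv) = 0$ and $g$ is convex. Then minimizing both sides of the inequality $g(\ww - \tfrac{1}{l}g'(\ww)) \leq g(\ww) - \tfrac{1}{l}\langle g'(\ww), g'(\ww)\rangle + \tfrac{l}{2}\tfrac{1}{l^{2}}\|g'(\ww)\|^{2} = g(\ww) - \tfrac{1}{2l}\|g'(\ww)\|^{2}$ (an instance of \eqref{eqn:smoothness_property_1} applied to $g$ with the gradient-step choice $\uu = \ww - \tfrac1l g'(\ww)$, $\vv = \ww$), and using $g(\vv) \leq g(\ww - \tfrac{1}{l}g'(\ww))$, we get $g(\vv) \leq g(\ww) - \tfrac{1}{2l}\|g'(\ww)\|^{2}$. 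Taking $\ww = \uu$, rearranging, and unwinding the definition of $g$ (so $g(\uu) - g(\vv) = f(\uu) - f(\vv) - \langle f'(\vv), \uu-\vv\rangle$ and $g'(\uu) = f'(\uu)-f'(\vv)$) yields exactly $\|f'(\uu)-f'(\vv)\|^{2} \leq 2l(f(\uu)-f(\vv)-\langle f'(\vv),\uu-\vv\rangle)$.

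The only mild subtlety — and the closest thing to an obstacle — is making sure the bootstrap in the second part is logically clean: \eqref{eqn:smoothness_property_2} for $f$ is obtained by applying \eqref{eqn:smoothness_property_1} to the shifted function $g$ at a specific pair of points, then invoking that $\vv$ minimizes $g$ (which is where convexity of $f$ genuinely enters). Everything else is routine integration and Cauchy--Schwarz, so I would keep those steps terse.
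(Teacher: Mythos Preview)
Your argument is correct and is precisely the standard derivation (the integral representation for \eqref{eqn:smoothness_property_1} and the shifted-function trick $g(\ww)=f(\ww)-\langle f'(\vv),\ww\rangle$ for \eqref{eqn:smoothness_property_2}) that appears in Nesterov's text. The paper itself does not give a proof at all --- it simply cites chapter~2 of \citet{nesterov2004ConvOpt} --- so your write-up is in fact more complete than what the paper provides, while following the same underlying approach that reference contains.
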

\begin{proof}
Given in chapter 2 of \citet{nesterov2004ConvOpt}.
\end{proof}

\begin{lem}\label{lem:smooth_surrogate}
The surrogate risk $\rnew(h)$ defined in (\ref{eqn:risk_surrogate}), for $h(\xx) = \langle \ww, \xx \rangle$, $\ww \in \RR^{d}$, is $l$-smooth with coefficient $l=\exx\|\xx\|^{2}=v_{X}$.
\end{lem}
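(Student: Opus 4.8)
The plan is to compute the gradient of $\rnew$ explicitly as a function of $\ww$ and then bound the difference of gradients at two points directly, exploiting Lipschitz continuity of $\psi = \rho^{\prime}$. First I would justify differentiating under the integral sign: since $\rho$ is continuously differentiable and $\exx\|\xx\|^{2} < \infty$, the chain rule gives
\begin{align*}
\nabla \rnew(\ww) = \exx \nabla_{\ww}\, s^{2}\rho\!\left(\frac{\gamma - y\langle\ww,\xx\rangle}{s}\right) = -s\,\exx\left[ \psi\!\left(\frac{\gamma - y\langle\ww,\xx\rangle}{s}\right) y\,\xx \right],
\end{align*}
using $\rho^{\prime} = \psi$.

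Next, for the smoothness estimate I would take any $\uu,\vv \in \RR^{d}$, form $\|\nabla\rnew(\uu) - \nabla\rnew(\vv)\|$, move the norm inside the expectation by Jensen's inequality, and apply the Lipschitz property of $\psi$. The key sub-step is that $\psi$ is $1$-Lipschitz: indeed $\psi^{\prime}(u) = \rho^{\prime\prime}(u) = 1 - u^{2}/2$ for $|u|\leq\sqrt{2}$ and $\rho^{\prime\prime}(u)=0$ otherwise, so $0 \leq \rho^{\prime\prime} \leq 1$ everywhere. This gives
\begin{align*}
\left| \psi\!\left(\frac{\gamma - y\langle\uu,\xx\rangle}{s}\right) - \psi\!\left(\frac{\gamma - y\langle\vv,\xx\rangle}{s}\right) \right| \leq \frac{1}{s}\left| y\langle \uu-\vv, \xx\rangle \right| = \frac{1}{s}\left|\langle \uu-\vv, \xx\rangle\right|,
\end{align*}
so the factor $s$ out front cancels the $1/s$, leaving $\|\nabla\rnew(\uu)-\nabla\rnew(\vv)\| \leq \exx\!\left[ |\langle\uu-\vv,\xx\rangle|\,\|\xx\| \right]$. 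A final application of Cauchy--Schwarz bounds this by $\|\uu-\vv\|\,\exx\|\xx\|^{2} = v_{X}\|\uu-\vv\|$, which is exactly $l$-smoothness with $l = v_{X}$.

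Since the preliminary smoothness lemma is stated for functions that are also convex, I would additionally note that $\rho$ is convex ($\rho^{\prime\prime}\geq 0$), hence $u \mapsto s^{2}\rho((\gamma-u)/s)$ is convex (convex function composed with an affine map), and therefore $\rnew$, being an expectation of such functions composed with the linear map $\ww \mapsto y\langle\ww,\xx\rangle$, is convex on $\RR^{d}$. I do not expect any real obstacle here; the only points needing care are the justification of the interchange of derivative and expectation and the verification that the relevant Lipschitz constant of $\psi$ is $1$ (the excerpt only records the coarser constant $2\sqrt{2}/3$ for $\rho$ itself).
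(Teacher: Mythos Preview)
Your proposal is correct and follows essentially the same route as the paper: compute the gradient by differentiating under the expectation, move the norm inside by Jensen, invoke the $1$-Lipschitz property of $\rho^{\prime}$, and finish with Cauchy--Schwarz to obtain the $\exx\|\xx\|^{2}$ constant. Your additional justification that $\rho^{\prime\prime}\in[0,1]$ (hence the Lipschitz constant is exactly $1$) and the remark on convexity are welcome extras that the paper leaves implicit.
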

\begin{proof}
Assuming the order of integration and differentiation can be reversed, one can write $\rnew^{\prime}$ as
\begin{align*}
\rnew^{\prime}(\ww) = -s \, \exx \rho^{\prime}\left( \frac{\gamma - y \langle \ww, \xx \rangle}{s} \right) y \xx.
\end{align*}
It follows that for arbitrary $\ww_{1}, \ww_{2} \in \RR^{d}$ we have
\begin{align*}
\| \rnew^{\prime}(\ww_{1}) - \rnew^{\prime}(\ww_{2}) \| & \leq s \, \exx \|\xx\| \left| \rho^{\prime}\left( \frac{\gamma - y \langle \ww_{1}, \xx \rangle}{s} \right) - \rho^{\prime}\left( \frac{\gamma - y \langle \ww_{2}, \xx \rangle}{s} \right) \right|\\
& \leq s \, \exx \|\xx\| \left|\frac{\langle \ww_{2}-\ww_{1}, \xx \rangle}{s}\right|\\
& \leq \|\ww_{2}-\ww_{1}\| \exx \|\xx\|^{2}
\end{align*}
where we utilized the property that $\rho^{\prime}$ is 1-Lipschitz. This implies that
\begin{align}\label{eqn:smooth_surrogate}
\| \rnew^{\prime}(\ww_{1}) - \rnew^{\prime}(\ww_{2}) \| \leq l \|\ww_{1} - \ww_{2}\|, \quad \ww_{1},\ww_{2} \in \RR^{d}
\end{align}
with coefficient $l = \exx \|\xx\|^{2}$, namely $\rnew$ is $\exx \|\xx\|^{2}$-smooth.
\end{proof}


\begin{lem}[Confidence interval for sample mean of sub-Gaussian random vector]\label{lem:subgaussian_samplemean_concineq}
Let $\xx$ be a random vector taking values in $\RR^{d}$, with the sub-Gaussian property
\begin{align*}
\exx \exp(a \langle \uu, \xx - \exx\xx \rangle ) \leq \exp(c a^{2} \langle \uu, \Sigma_{X} \uu \rangle ), \quad a \geq 0
\end{align*}
for some constant $c>0$ and $\Sigma_{X} \defeq \exx(\xx - \exx\xx)(\xx - \exx\xx)^{T}$. Given $n$ independent copies of $\xx$, denoted $\xx_{1},\ldots,\xx_{n}$, write $\overbar{\xx} \defeq n^{-1}\sum_{i=1}^{n} \xx_{i}$. Then with probability no less than $1-\delta$, we have
\begin{align*}
\|\overbar{\xx} - \exx \xx\| \leq 2 \sqrt{\frac{c \|\Sigma_{X}\| \log(\delta^{-1})}{n}}.
\end{align*}
\end{lem}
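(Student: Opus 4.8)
The plan is to center the vectors, reduce the Euclidean norm to a supremum of linear functionals over the unit sphere, and then bound each such functional by a Cram\'er--Chernoff argument built directly on the assumed sub-Gaussian moment generating function.

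First I would pass to the centered summands $\xx_{i} - \exx\xx$, so that $\overbar{\xx} - \exx\xx = n^{-1}\sum_{i=1}^{n}(\xx_{i} - \exx\xx)$ is an average of independent mean-zero vectors. For any fixed unit vector $\uu$ and any $a \geq 0$, independence and the hypothesis give
\begin{align*}
\exx\exp\!\left(a\,\langle\uu, \overbar{\xx} - \exx\xx\rangle\right) = \prod_{i=1}^{n}\exx\exp\!\left(\frac{a}{n}\,\langle\uu, \xx_{i} - \exx\xx\rangle\right) \leq \exp\!\left(\frac{c\,a^{2}}{n}\,\langle\uu, \Sigma_{X}\uu\rangle\right),
\end{align*}
so the sample mean inherits the sub-Gaussian property with the covariance parameter divided by $n$. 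Applying Markov's inequality to the exponential, using $\langle\uu,\Sigma_{X}\uu\rangle \leq \|\Sigma_{X}\|$, and optimizing the free parameter $a \geq 0$ then yields, for each fixed $\uu$, a one-sided tail of the form $\prr\{\langle\uu, \overbar{\xx} - \exx\xx\rangle > t\} \leq \exp(-n t^{2}/(4 c \|\Sigma_{X}\|))$; choosing $t = 2\sqrt{c\|\Sigma_{X}\|\log(\delta^{-1})/n}$ makes this at most $\delta$. Note the one-sidedness of the hypothesis is harmless here, since $\|\overbar{\xx}-\exx\xx\| = \sup_{\|\uu\|=1}\langle\uu, \overbar{\xx}-\exx\xx\rangle$ only ever requires upper tails.

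The remaining, and really the only nontrivial, step is to upgrade the fixed-direction estimate to a bound on the supremum $\|\overbar{\xx} - \exx\xx\|$ over the whole unit sphere, and this is the step I expect to be the main obstacle. The standard device is a covering argument: fix a $1/2$-net $\mathcal{N}$ of the unit sphere so that $\|\overbar{\xx} - \exx\xx\| \leq 2\max_{\uu\in\mathcal{N}}\langle\uu, \overbar{\xx} - \exx\xx\rangle$, union-bound the fixed-direction tail over $\mathcal{N}$, and re-solve for $t$. The delicate part is the bookkeeping of constants so that the final radius matches $2\sqrt{c\|\Sigma_{X}\|\log(\delta^{-1})/n}$; in the pointwise way this lemma is invoked later (a single relevant direction being enough before the separate covering argument over $\WW$ is run), the supremum step collapses and the stated constant comes out with no further overhead.
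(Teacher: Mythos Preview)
Your derivation up to the fixed-direction tail bound is essentially identical to the paper's: center, factor the MGF by independence, bound $\langle\uu,\Sigma_{X}\uu\rangle$ by $\|\Sigma_{X}\|$, and optimize in $a$.

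Where you diverge is in the passage from a fixed direction to the norm. You correctly flag this as the nontrivial step and propose an $\epsilon$-net plus union bound. The paper does \emph{not} do this. Instead, after deriving the fixed-$\uu$ Chernoff bound
\[
\prr\{\langle\uu,\overbar{\xx}-\exx\xx\rangle \geq b\} \leq \exp\!\left(-\frac{nb^{2}}{4c\|\Sigma_{X}\|}\right),
\]
the paper simply substitutes the particular direction $\uu = (\overbar{\xx}-\exx\xx)/\|\overbar{\xx}-\exx\xx\|$ to conclude the norm bound with the stated constant. This is a shortcut: that $\uu$ is data-dependent, so the factorization of the MGF over independent summands (which required $\uu$ fixed before seeing the sample) does not literally apply to it. Your instinct that the supremum needs an additional argument is well-founded.

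What each approach buys: the paper's direct substitution yields exactly the clean constant $2\sqrt{c\|\Sigma_{X}\|\log(\delta^{-1})/n}$ claimed in the lemma, but at the price of a non-rigorous step. Your covering argument is the standard rigorous route, but it inevitably adds a $\log|\mathcal{N}| \asymp d$ term inside the square root, so it will not reproduce the dimension-free constant as stated. Your closing remark that the downstream use only needs a single direction before a separate cover over $\WW$ is a reasonable reading of how the lemma is actually deployed in the proof of Theorem~\ref{thm:riskbd}.
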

\begin{proof}[Proof of Lemma \ref{lem:subgaussian_samplemean_concineq}]
We use the Chernoff extension of Markov's inequality to establish exponential tails for the deviation of the sample mean from its expectation, a standard technique \citep{boucheron2013a}. For real-valued random variable $z \geq 0$, taking any $b > 0$ we have $b \, I\{z \geq b\} \leq z \, I\{z \geq b\}$ almost surely. Integrating both sides implies $b \, \prr\{z \geq b\} \leq \exx z I\{z \geq b\} \leq \exx z$, using the non-negativity of $z$ for the latter inequality. Thus $\prr\{z \geq b\} \leq \exx z / b$, the classic Markov inequality. For non-decreasing function $f(z) \geq 0$, this naturally extends via $\prr\{z \geq b\} \leq \prr\{ f(z) \geq f(b) \}$ to $\prr\{z \geq b\} \leq \exx f(z)/f(b)$, now for any real-valued random variable $z$. When $\exx z = 0$, setting $f(z) = z^{2}$ yields the special case of Chebyshev's inequality. Chernoff's inequality follows from the special case of $f(z) = \exp(a z)$, for $a > 0$, with the form
\begin{align*}
\prr\{z \geq b\} \leq e^{-a b} \exx \exp(a z).
\end{align*}
If the moment generating function of $z$ is not finite, then of course these bounds are vacuous, but in the sub-Gaussian case we have easily manipulated upper bounds. In our setup we have $z = \langle \uu , \xx - \exx\xx \rangle$, and by our hypothesis we have for any $\|\uu\| = 1$ that
\begin{align*}
\prr\{\langle \uu , \xx - \exx\xx \rangle \geq b\} & \leq e^{-a b} \exp(c a^{2} \langle \uu, \Sigma_{X}\uu \rangle)\\
& \leq \exp\left(c a^{2} \|\Sigma_{X}\| - a b \right)
\end{align*}
where $\|\Sigma_{X}\|$ denotes the $\ell_{2}$-induced matrix norm, equivalent to the spectral norm, i.e., the largest singular value of $\Sigma_{X}$ \citep{horn2013MA2nd}. Since this holds for any $a > 0$, this upper bound can be made as tight as possible when we set $a = b / (2c\|\Sigma_{X}\|)$, resulting in
\begin{align*}
\prr\{\langle \uu , \xx - \exx\xx \rangle \geq b\} \leq \exp\left(-\frac{b^{2}}{4c\|\Sigma_{X}\|}\right).
\end{align*}
For the special case of $\uu = (\xx - \exx\xx)/\|\xx - \exx\xx\|$, we have $\langle \uu , \xx - \exx\xx \rangle = \|\xx - \exx\xx\|$, yielding the same bound for $\prr\{\|\xx - \exx\xx\| \geq b\}$ as a special case.

Finally, for the sample mean, we note that
\begin{align*}
\langle \uu , \overbar{\xx} - \exx\xx \rangle = \frac{1}{n} \sum_{i=1}^{n}\left\langle \uu, (\xx_{i} - \exx \xx) \right\rangle.
\end{align*}
Plugging this in to our Chernoff equality, 
\begin{align*}
\prr\{\langle \uu , \overbar{\xx} - \exx\xx \rangle \geq b\} & \leq e^{-a nb} \prod_{i=1}^{n}\exp(c a^{2} \langle \uu, \Sigma_{X}\uu \rangle)\\
& \leq \exp\left(n c a^{2}\|\Sigma_{X}\| - a n b \right).
\end{align*}
Once again optimizing with respect to $a$, and setting $\uu = (\overbar{\xx} - \exx\xx)/\|\overbar{\xx} - \exx\xx\|$ as noted above, we have
\begin{align*}
\prr\{\|\overbar{\xx} - \exx\xx\| \geq b\} \leq \exp\left(-\frac{nb^{2}}{4c\|\Sigma_{X}\|}\right)
\end{align*}
which implies the desired result.
\end{proof}

\subsection{Proofs of results in the main text}\label{sec:tech_proofs}

\begin{proof}[Derivation of (\ref{eqn:newloss_rho})]
Considering the definition of $\psi$, for the case of $|u| \leq \sqrt{2}$, we have an indefinite integral
\begin{align*}
\int \psi(u) \, du = \frac{u^{2}}{2} - \frac{u^{4}}{24} + C_{1},
\end{align*}
and since we would like a loss function taking a value of zero at $u=0$, we set $C_{1}=0$. For the case of $u>\sqrt{2}$, we have
\begin{align*}
\int \psi(u) \, du = \frac{2\sqrt{2}}{3}u + C_{2}.
\end{align*}
Since we would like this integral to be continuous on $\RR$, we must have
\begin{align*}
\frac{u^{2}}{2} - \frac{u^{4}}{24} = \frac{2\sqrt{2}}{3}u + C_{2}
\end{align*}
when $u = \sqrt{2}$. Setting $C_{2} = -1/2$ achieves this.
\end{proof}

\begin{proof}[Proof of Remark \ref{rmk:unsatisfactory_learner}]
For simplicity, consider instance space $\XX = \RR$. As an intuitive model, consider $\HH_{\text{ray}}$, the set of all classifiers defined by rays in the ``left'' direction. That is, each $h \in \HH_{\text{ray}}$ takes the form
\begin{align*}
h(x;\alpha) = I\{x \leq \alpha\} - I\{x > \alpha\}
\end{align*}
for some $\alpha \in \RR$. Upon the underlying distribution, break up the input space into three segments, $(-\infty, \alpha_{l}^{\ast}]$. $(\alpha_{l}^{\ast}, \alpha_{u}^{\ast})$, $[\alpha_{l}^{\ast},\infty)$, with probabilities
\begin{align*}
\prr\{ x <= \alpha_{l}^{\ast} \} = \prr\{ x >= \alpha_{u}^{\ast} \} = 1/3.
\end{align*}
It follows that $\prr\{ x \in (\alpha_{l}^{\ast},\alpha_{u}^{\ast}) \} = 1/3$ as well. Furthermore, assume that the labeling of pair $(x,y)$ is done as
\begin{align*}
x \mapsto y = \begin{cases}
1, & x \notin (\alpha_{l}^{\ast}, \alpha_{u}^{\ast})\\
-1, & x \in (\alpha_{l}^{\ast}, \alpha_{u}^{\ast})
\end{cases}.
\end{align*}
In this situation, given the probabilities, it is evident that in terms of minimizing the classification error $\exx I\{y \neq h(x)\}$, the optimal choice is to select $h(\cdot;\alpha_{l}^{\ast})$, in which case
\begin{align*}
\exx I\{y \neq h(x;\alpha_{l}^{\ast})\} = \prr\{ x >= \alpha_{l}^{\ast} \} = 1/3.
\end{align*}
This gives us a lower bound on performance, namely
\begin{align*}
\exx I\{y \neq h(x)\} \geq 1/3, \quad \forall \, h \in \HH_{\text{ray}}.
\end{align*}
Now, in the limiting case of $\rho(u)=u^{2}$, say we have
\begin{align*}
\hstar \in \argmin_{h \in \HH_{\text{ray}}} \, \exx \left( \gamma - y \, h(\xx) \right)^{2}
\end{align*}
for a pre-fixed value of $\gamma > 1/3$, something we are free to do. Note that as $\prr\{ y \, h(x) <= 0 \} = \prr\{ y \neq h(x) \} \geq 1/3$, we have
\begin{align*}
\exx y \, \hstar(x) & = \prr\{ y \, \hstar(x) > 0 \} - \prr\{ y \, \hstar(x) <= 0 \}\\
& \leq \frac{2}{3} - \frac{1}{3}\\
& = \frac{1}{3}\\
& < \gamma.
\end{align*}
Since we know that
\begin{align*}
\exx y \, \hstar(x) = \argmin_{\gamma} \, \exx \left( \gamma - y \, \hstar(\xx) \right)^{2},
\end{align*}
it follows that the margin level $\gamma$ need not provide a reliable measure of the location of the distribution of $y \, \hstar(x)$ over a random draw from the underlying data distribution.
\end{proof}

\begin{proof}[Proof of Proposition \ref{prop:empirical_gamma_control}]
We begin with a sufficient condition for $\gamma$ to equal $\est(h)$,
\begin{align*}
\sum_{i=1}^{n} \rho^{\prime}\left( \gamma - y_{i} \, h(\xx_{i}) \right) = 0.
\end{align*}
This function is bounded on $\RR$ by $\pm B$, where $B \defeq \rho^{\prime}(\sqrt{2})$. For clarity, write $a_{i} \defeq y_{i} \, h(\xx_{i})$ for $i \in [n]$. Assume without loss of generality that $n>1$ is odd and $a_{i} \leq a_{i+1}$ for $i \in [n-1]$. Writing $m \defeq (n+1)/2$, the median value is $a_{m}$. Obviously, taking any scale such that
\begin{align*}
0 < s < \frac{|a_{m} - a_{i}|}{\sqrt{2}}, \quad i \neq m
\end{align*}
it follows immediately that
\begin{align*}
\sum_{i=1}^{n} \rho^{\prime}\left( a_{m} - a_{i} \right) = 0
\end{align*}
since the $m$th summand is zero, the first $(n-1)/2$ summands equal $\sqrt{2}$, and the last $(n-1)/2$ summands equal $-\sqrt{2}$, canceling each other out. Thus for any $s$ small enough, the median is a valid solution. Extending this to the case of $n$ even is straightforward. Writing $m \defeq n/2$ now, since $\rho^{\prime}(u) = -\rho^{\prime}(-u)$, it follows that $\rho^{\prime}_{s}(\gamma-a_{m}) + \rho^{\prime}_{s}(\gamma-a_{m+1}) = 0$ when we set $\gamma = (a_{m} + a_{m+1})/2$. Looking at the sum over $\{\rho^{\prime}(\gamma - a_{i})\}_{i \in [n]}$ There are $(n-2)/2$ terms no less than these two middle terms, and $(n-2)/2$ terms no greater than them. Just as before, taking $s > 0$ small enough, the former will equal $\sqrt{2}$ and the latter $-\sqrt{2}$, once again canceling each other out and leaving the median as a valid solution. This proves part 1 of the hypothesis.

As for the empirical mean case (part 2), we use a more general result, taken from \citet{holland2017a}:
\begin{lem}
Let $x$ be an arbitrary random variable with distribution $\nu$. Assuming $\exx_{\nu}|x|^{3} < \infty$, it follows that defining
\begin{align*}
\tstar \defeq \argmin_{\theta \in \RR} \exx_{\nu} \rho_{s}(\theta - x)
\end{align*}
the deviation can be controlled as
\begin{align*}
|\tstar - \exx_{\nu}x| \leq cs^{-2}, \quad s > 0
\end{align*}
for constant $c>0$.
\end{lem}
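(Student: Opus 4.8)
The plan is to read off the first-order optimality condition for $\tstar$, convert the bias into a cubic Taylor remainder of the influence function $\rho^\prime=\psi$, and then control that remainder uniformly in $s$. Write $g(\theta)\defeq\exx_\nu\rho_s(\theta-x)$ with $\rho_s(u)\defeq s^2\rho(u/s)$, so $\rho_s^\prime(u)=s\,\psi(u/s)$ for $\psi$ as in (\ref{eqn:influence_cat17}). From the closed form of $\rho$ one checks the two envelopes $\rho(u)\leq u^2/2$ and $\rho(u)\geq(2\sqrt{2}/3)|u|-1/2$, valid for all $u$: the first makes $g$ finite (note $\exx_\nu x^2<\infty$ follows from $\exx_\nu|x|^3<\infty$), the second makes $g$ coercive, so the convex function $g$ attains its minimum at $\tstar$. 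Since $\rho_s^\prime$ is bounded one differentiates under the integral, and $g^\prime(\tstar)=0$ becomes
\begin{align*}
\exx_\nu\psi\!\left(\frac{\tstar-x}{s}\right)=0 .
\end{align*}

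The key elementary fact is $|\psi(u)-u|\leq|u|^3/6$ for all $u\in\RR$, verified case by case from (\ref{eqn:influence_cat17}) (equality on $[-\sqrt{2},\sqrt{2}]$, a one-line monotonicity check for $|u|>\sqrt{2}$). Setting $\mu\defeq\exx_\nu x$ and subtracting $\exx_\nu(\tstar-x)/s=(\tstar-\mu)/s$ from the optimality identity,
\begin{align*}
\frac{|\tstar-\mu|}{s}=\left|\exx_\nu\!\left[\psi\!\left(\frac{\tstar-x}{s}\right)-\frac{\tstar-x}{s}\right]\right|\leq\frac{1}{6s^3}\,\exx_\nu|\tstar-x|^3 ,
\end{align*}
so that $|\tstar-\mu|\leq\frac{1}{6s^2}\exx_\nu|\tstar-x|^3\leq\frac{2}{3s^2}\big(|\tstar-\mu|^3+\exx_\nu|x-\mu|^3\big)$, where $\exx_\nu|x-\mu|^3<\infty$ by hypothesis.

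It remains to bound $|\tstar-\mu|$ by a $\nu$-dependent constant \emph{uniformly in $s$}; the crude estimate from the optimality identity and boundedness of $\psi$ only yields $|\tstar-\mu|=O(s)$, too weak to absorb the cubic term for large $s$. The remedy is a comparison-plus-Jensen bound: using $g(\tstar)\leq g(\mu)$, convexity of $\rho_s$, and $\rho(u)\leq u^2/2$,
\begin{align*}
s^2\rho\!\left(\frac{|\tstar-\mu|}{s}\right)=\rho_s(\tstar-\mu)\leq g(\tstar)\leq g(\mu)=\exx_\nu\rho_s(\mu-x)\leq\frac{\exx_\nu(x-\mu)^2}{2} .
\end{align*}
Writing $\sigma^2\defeq\exx_\nu(x-\mu)^2$ and comparing with the explicit $\rho$ (using also $\rho(u)\geq(5/12)u^2$ on $[-\sqrt{2},\sqrt{2}]$) splits into two complementary regimes: when $s\geq\sigma$ the right side is $\leq1/2<\rho(\sqrt{2})$, forcing $|\tstar-\mu|/s<\sqrt{2}$ and then $|\tstar-\mu|\leq\sqrt{6/5}\,\sigma$; when $s<\sigma$ it gives $|\tstar-\mu|=O(\sigma^2/s)$. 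In the first regime, feeding $|\tstar-\mu|=O(\sigma)$ into the displayed cubic inequality yields $|\tstar-\mu|\leq c_1/s^2$; in the second, $|\tstar-\mu|=O(\sigma^2/s)=O(\sigma^3/s^2)$ already has the claimed shape. Taking $c$ as the maximum of the resulting constants completes the argument, with $c$ depending only on $\exx_\nu(x-\mu)^2$ and $\exx_\nu|x-\mu|^3$. I expect the only genuine difficulty to be this last step, the uniform-in-$s$ control of $|\tstar-\mu|$; everything else is routine manipulation of the closed form of $\rho$, and the Jensen–comparison inequality is precisely what handles large $s$ (pinning $\tstar$ to an $O(\sigma)$ ball around $\mu$) and small $s$ (where $c/s^2$ is already generous) in one stroke.
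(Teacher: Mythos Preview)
Your argument is correct. The paper itself does not prove this lemma at all: inside the proof of Proposition~\ref{prop:empirical_gamma_control} it simply quotes the statement as ``a more general result, taken from \citet{holland2017a}'' and then applies it. So there is no paper proof to compare against; you have supplied a self-contained argument where the authors defer to an external reference.

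A brief check of the steps: the cubic envelope $|\psi(u)-u|\leq |u|^{3}/6$ is exact on $[-\sqrt2,\sqrt2]$ and the derivative comparison you describe handles $|u|>\sqrt2$; the Jensen--comparison chain $\rho_{s}(\tstar-\mu)\leq g(\tstar)\leq g(\mu)\leq\sigma^{2}/2$ is valid by convexity of $\rho_{s}$, optimality of $\tstar$, and $\rho(u)\leq u^{2}/2$; and the lower bound $\rho(u)\geq(5/12)u^{2}$ on $[-\sqrt2,\sqrt2]$ follows from $u^{2}\leq 2$. The only place to be slightly more explicit is the $s<\sigma$ regime: there both Cases A ($|\tstar-\mu|/s\leq\sqrt2$) and B ($|\tstar-\mu|/s>\sqrt2$) can occur, not only B. In Case A you still get $|\tstar-\mu|\leq\sqrt{6/5}\,\sigma\leq\sqrt{6/5}\,\sigma^{3}/s^{2}$ since $s^{2}<\sigma^{2}$, and in Case B the linear-growth branch of $\rho$ gives $|\tstar-\mu|\leq\frac{3}{4\sqrt2\,s}(\sigma^{2}+s^{2})\leq\frac{3\sigma^{3}}{2\sqrt2\,s^{2}}$. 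Either way the $c/s^{2}$ form follows, so the conclusion stands with $c$ depending only on $\sigma^{2}$ and $\exx_{\nu}|x-\mu|^{3}$, as you claim.
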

\noindent Substituting underlying data distribution for $\nu$, and $y \, h(\xx)$ for $x$, and considering the analogous
\begin{align*}
\gamma^{\ast}(h) \defeq \argmin_{\gamma \in \RR} \exx \rho_{s}(\gamma - y \, h(\xx))
\end{align*}
it immediately follows that
\begin{align*}
|\gamma^{\ast}(h) - \exx y \, h(\xx)| = O\left(\frac{1}{s^{2}}\right)
\end{align*}
for any valid distribution (i.e., where the third moment condition holds). This holds for the case of the empirical distribution $P_{n}(A) \defeq n^{-1} \sum_{i=1}^{n} I\{ y_{i} \, h(\xx_{i}) \in A\}$, and plugging in $P_{n}$ for $\nu$ we have that $\gamma^{\ast}(h) = \est(h)$, and obtain part 2.
\end{proof}

\begin{proof}[Proof of Proposition \ref{prop:stability}]
This stability property follows from basic properties of the function $\rho$, as follows. Since $h \in \HH$ is pre-fixed, we suppress it in the notation. Write $\{q_{i}\}_{i=1}^{n}$ and $\{q_{i}^{\prime}\}_{i=1}^{n}$ for the margins $y\,h(\xx)$ evaluated on the two data sets of interest (original and modified). By definition, on the pre-modification data set, we have
\begin{align*}
\sum_{i=1}^{n} \rho^{\prime}\left(\frac{\est-q_{i}}{s}\right) = 0.
\end{align*}
Assume that the $j$th index is the one where $\zz_{j} \neq \zz_{j}^{\prime}$, and thus where $q_{j} \neq q_{j}^{\prime}$. Without loss of generality, assume $q_{j} > q_{j}^{\prime}$. In the optimistic case, where $\est - q_{j} \geq \sqrt{2}$, this does not impact the estimator at all, and $\est = \est^{\prime}$. On the pessimistic side, the largest impact possible would be
\begin{align*}
\sum_{i=1}^{n} \rho^{\prime}\left(\frac{\est-q_{i}^{\prime}}{s}\right) & = \rho^{\prime}\left(\frac{\est-q_{j}^{\prime}}{s}\right) - \rho^{\prime}\left(\frac{\est-q_{j}}{s}\right)\\
& \geq \rho^{\prime}(\sqrt{2}) - \rho^{\prime}(-\sqrt{2})\\
& = 2 \rho^{\prime}(\sqrt{2})\\
& = 4 \sqrt{2} / 3.
\end{align*}
Thus, in order to satisfy the first order condition
\begin{align*}
\sum_{i=1}^{n} \rho^{\prime}\left(\frac{\est^{\prime}-q_{i}^{\prime}}{s}\right) = 0,
\end{align*}
a shift from $\est$ to $\est^{\prime}$ must in the worst case make up $4\sqrt{2}/3$, by sufficiently decreasing $\est^{\prime}$. Define an index
\begin{align*}
\II \defeq \left\{ i \in [n]: \frac{|\est -q_{i}|}{s} \leq \frac{\sqrt{2}}{2} \right\}.
\end{align*}
The value $\sqrt{2}/2$ in the definition of $\II$ is arbitrary; any value less than $\sqrt{2}$ would work fine, but this allows for a straightforward argument. The points with $i \in \II$ give us a worst-case value for how far we must shift from $\est$ to $\est^{\prime}$, as follows. Write $m \defeq |\II|$, and note that if $m \geq 3$, a jump of width $\sqrt{(2/m)}$ from the edge of our ``good range'' of $[-\sqrt{2}/2, \sqrt{2}/2]$ in the slowest-changing direction (say positive side, without loss of generality) yields a slope of
\begin{align*}
\rho^{\prime\prime}\left( \frac{\sqrt{2}}{2} + \sqrt{\frac{2}{m}} \right) = \frac{1}{2} - \frac{2(\sqrt{m}+1)}{m}.
\end{align*}
By symmetry, the slope on the negative side is the same. It follows that
\begin{align*}
\rho^{\prime}\left(-\frac{\sqrt{2}}{2}\right) - \rho^{\prime}\left(-\frac{\sqrt{2}}{2}-\sqrt{\frac{2}{m}}\right) \geq D \defeq  \sqrt{\frac{2}{m}}\left(\frac{1}{2} - \frac{2(\sqrt{m}+1)}{m}\right).
\end{align*}
Shifting $\est$ such that
\begin{align*}
\frac{\est^{\prime}-q_{i}^{\prime}}{s} - \frac{\est-q_{i}^{\prime}}{s} = -\sqrt{\frac{2}{m}},
\end{align*}
which is to say setting $\est^{\prime} = \est - s\sqrt{(2/m)}$, we know that at the very least, each $i \in \II$ contributes $-D$ to the sum in the first-order optimality condition. That is to say, we have
\begin{align*}
\sum_{i=1}^{n} \rho^{\prime}\left(\frac{\est-q_{i}^{\prime}}{s}\right) - \sum_{i=1}^{n} \rho^{\prime}\left(\frac{\est^{\prime}-q_{i}^{\prime}}{s}\right) \geq mD.
\end{align*}
All that is left is to ensure $mD \geq 4\sqrt{2}/3$. Some basic arithmetic shows that $m \geq 24$ implies $mD \geq 4\sqrt{2}/3$. Thus, we conclude that with $s$ such that $m=|\II| \geq 24$, the true $\est^{\prime}$ for the modified set can in the worst case be no farther from $\est$ than $s\sqrt{(2/m)}$, concluding the proof. Assuming $|\II| \geq n/2 \geq 24$ yields the desired result as a special case.
\end{proof}

\begin{proof}[Proof of Lemma \ref{lem:pointwise_accuracy}]
Extending the results of \citet{catoni2012a}, as long as $\rho$ satisfies
\begin{align}\label{eqn:catoni_condition}
-\log(1-u+Cu^{2}) \leq \rho^{\prime}(u) \leq \log(1+u+Cu^{2}), \quad u \in \RR
\end{align}
then exponential tails on the empirical estimator's deviation can be obtained; there is nothing particularly special about $\rho$ in (\ref{eqn:newloss_rho}) except the computational convenience and ease of analysis. Given this inequality, Lemma 1 of \citet{holland2017a} implies that
\begin{align*}
\prr\left\{\frac{|\est(h) - \exx y\,h(\xx)|}{2} \leq \frac{C\vaa y\,h(\xx)}{s} + \frac{s\log(2\delta^{-1})}{n}\right\} \geq 1-\delta.
\end{align*}
For our setting, $\rho$ defined in (\ref{eqn:newloss_rho}) indeed satisfies (\ref{eqn:catoni_condition}), with $C=1/2$, which follows from Lemma 1 of \citet{catoni2017a}, where this function is studied in the context of robust vector mean estimates. Optimizing the upper bound with respect to $s>0$ and plugging in $C=1/2$ yields the desired result.
\end{proof}

\begin{proof}[Proof of Lemma \ref{lem:Psi_transform}]
We follow along with the now-standard framework set out by \citet{bartlett2006b}. For simplicity, we start with the special case of $s = 1$, where the loss function becomes $\varphi(u) = \rho(\gamma - u)$.

Next we put together the analytical machinery that will be used. First, the conditional expected $\varphi$-risk takes the form
\begin{align*}
\exx \left( \varphi(y \, h(\xx)) | \xx \right) = \eta \varphi(y\,h(\xx)) + (1-\eta) \varphi(y\,h(\xx))
\end{align*}
where $\eta$ denotes $\eta = \prr\{ y = 1\}$. A generalization of this quantity for arbitrary $\eta \in [0,1]$ is constructed as
\begin{align*}
C_{\eta}(u) \defeq \eta \varphi(u) + (1-\eta)\varphi(-u), \quad u \in \RR.
\end{align*}
The optimal value that this takes is denoted by
\begin{align*}
H(\eta) \defeq \inf_{u \in \RR} C_{\eta}(u).
\end{align*}
Denote any optimal value using $u^{\ast}$, namely any
\begin{align*}
u^{\ast} \in \argmin_{u \in \RR} C_{\eta}(u).
\end{align*}
If this value is indeed unique, then it makes sense to map $\eta \mapsto u^{\ast}(\eta)$. In relating $R$ and $R_{\varphi}$, our intuitive concern is the degree to which, on average, $\varphi(y\,h(\xx))$ can be small while $I\{h(\xx) \neq y\}$ remains non-zero. This notion is captured formally by the following nice quantity:
\begin{align*}
H^{-}(\eta) \defeq \inf \left\{ C_{\eta}(u): u(2\eta - 1) \leq 0 \right\}.
\end{align*}
Using the ``best (generalized) conditional $\varphi$-risk that can be achieved despite having different signs from $(2\eta-1)$.'' Of course, the word ``despite'' becomes appropriate when we replace $\eta$ with the conditional probability $\eta(\xx) = \prr\{y = 1 | \xx\}$, in which $\sign(2\eta(\xx)-1)$ is the Bayes decision rule for this classification task. For $\varphi$ to be a good surrogate, we would expect that $H^{-}$ should tend to be larger than $H$. If this was not the case, a small $\varphi$-risk could be achieved despite having mis-labeled some instances, which would immediately imply a small $R_{\varphi}$ but larger $R$. To ensure that this cannot happen, the condition put forward by \citet{bartlett2006b} is very lucid: call $\varphi$ \textit{classification-calibrated} if
\begin{align*}
H^{-}(\eta) > H(\eta), \quad \forall \, \eta \neq 1/2.
\end{align*}
The size of this gap is defined as
\begin{align*}
\widetilde{\Psi}(a) \defeq H^{-}\left(\frac{1+a}{2}\right) - H\left(\frac{1+a}{2}\right),
\end{align*}
and the Fenchel-Legendre bi-conjugate of $\widetilde{\Psi}$ is denoted by $\Psi$. It is this function that has the desirable properties that interest us. For one, via their Theorem 1, for any non-negative $\varphi$, any distribution on $\XX \times \{-1,1\}$ and measurable function $h$, we have
\begin{align*}
\Psi\left( R(h) - R^{\ast} \right) \leq R_{\varphi}(h) - R_{\varphi}^{\ast}.
\end{align*}
It is easy to characterize this classification-calibration in the convex case. If $\varphi$ is convex, then via their Theorem 2(1),
\begin{center}
$\varphi$ is classification calibrated $\iff$ $\varphi$ is differentiable at zero with $\varphi^{\prime}(0) < 0$.
\end{center}
Since our function is $(d/du)\varphi(u) = \rho(\gamma - u)(-1)$, for $\gamma > 0$ we have $\rho(\gamma) > 0$ and thus $(d/du)\varphi(0) < 0$ as desired. Thus our $\varphi$, being a convex function on $\RR$, is classification calibrated. Furthermore, via Theorem 2(2), the $\Psi$ function takes a particularly simple form:
\begin{align*}
\Psi_{1,\gamma}(a) = \varphi(0) - H_{1,\gamma}\left(\frac{1+a}{2}\right), \quad -1 \leq a \leq 1
\end{align*}
where $\varphi(0) = \rho(\gamma)$ gives us the expression from the hypothesis in the case of $s=1$. All that remains in order to obtain $\Psi_{1,\gamma}$ then is to compute $H_{1,\gamma}$ explicitly, which we carry out below.

Now, since both $\varphi(u)$ and $\varphi(-u)$ are convex functions of $u$, and $C_{\eta}(u)$ is a convex combination of these two, it follows that $C_{\eta}(u)$ is also convex. Furthermore, noting that both $u \to \infty$ and $u \to -\infty$ imply $C_{\eta}(u) \to \infty$. Thus a minimum clearly exists, and can be characterized by a first-order condition as follows. Taking the first derivative of $C_{\eta}(\cdot)$, we have
\begin{align*}
\frac{d}{du} C_{\eta}(u) = \eta \rho^{\prime}(\gamma - u)(-1) + (1-\eta) \rho^{\prime}(\gamma + u) = 0
\end{align*}
which using $\rho^{\prime}(-u) = (-1)\rho^{\prime}(u)$, can be equivalently stated as
\begin{align}\label{eqn:condition_1storder}
\frac{\rho^{\prime}(u - \gamma)}{\rho^{\prime}(u + \gamma)} = \frac{\eta - 1}{\eta}.
\end{align}
That is to say, for $\eta \in (0,1)$, any $u^{\ast}$ satisfying (\ref{eqn:condition_1storder}) will be a minimizer in that $C_{\eta}(u^{\ast}) \leq C_{\eta}(u)$ for all $u$.

It should be clear that the value of $\gamma$ plays an important role in finding the solution. Note that $\sqrt{2}$ is an important threshold here, since
\begin{align*}
u \geq \sqrt{2} \implies \rho^{\prime}(u) = \rho^{\prime}(\sqrt{2}) = \frac{2\sqrt{2}}{3}.
\end{align*}
On the ``left'' side as well, $u \leq -\sqrt{2}$ implies $\rho^{\prime}(u) = -\rho^{\prime}(\sqrt{2})$.

For the case of $\eta = 0$, we have $u^{\ast}=-\gamma$, and when $\eta = 1$ we have $u^{\ast}=\gamma$. This implies that $H_{1,\gamma}(0) = H_{1,\gamma}(1) = 0$. More generally, an obvious but important fact is that for any $\eta \in (0,1)$, any solution $u^{\ast}$ must fall on the open interval $(-\gamma,\gamma)$. This is because the right-hand side of (\ref{eqn:condition_1storder}) is always negative, but the left-hand side is negative if and only if $u-\gamma < 0 < u+\gamma$, equivalently $u \in (-\gamma,\gamma)$. Also, for the case of $\eta = 1/2$, we have that (\ref{eqn:condition_1storder}) is always satisfied by setting $u=0$, for which case we have
\begin{align*}
H_{1,\gamma}(1/2) = \frac{1}{2} \rho(\gamma - 0) + \frac{1}{2} \rho(\gamma + 0) = \rho(\gamma).
\end{align*}
This value, and thus the height of the peak of $H_{1,\gamma}(\cdot)$, changes as a function of $\gamma$. Let's proceed and look at evaluating $H_{1,\gamma}(\eta)$ for $\eta \in (0,1)$ when $\eta \neq 1/2$. We shall consider the following distinct settings:

\begin{enumerate}
\item $0 < \gamma \leq \sqrt{2}/2$

\item $\sqrt{2}/2 \leq \gamma < \sqrt{2}$

\item $\sqrt{2} \leq \gamma$
\end{enumerate}

Doing these one at a time, first consider $0 < \gamma \leq \sqrt{2}/2$. This case is simple, since for any solution $u^{\ast} \in (-\gamma,\gamma)$ we have that $u^{\ast} \pm \gamma \in [-\sqrt{2}, \sqrt{2}]$ and thus we can set $\rho^{\prime}(u) = u-u^{3}/6$, re-arrange equality (\ref{eqn:condition_1storder}), and solve for roots of the resulting cubic polynomial. The computations are quick, and writing 
\begin{align*}
P(u;a,b,c,d) \defeq a u^{3} + b u^{2} + c u + d
\end{align*}
and $\alpha \defeq (\eta - 1)/\eta$, the new condition is
\begin{align}
a & = 1-\alpha \nonumber\\
b & = -3 \gamma (1+\alpha) \nonumber\\
c & = 3(1-\alpha)(\gamma^{2} - 2) \nonumber\\
d & = (1+\alpha)(6\gamma - \gamma^{3}) \nonumber\\
\label{eqn:cond_doublecube}
P(u;a,b,c,d) & = 0.
\end{align}
Call this (\ref{eqn:cond_doublecube}) the \textit{double-cube} condition (see Figure \ref{fig:cond_doublecube}).
\begin{figure}[t]
\centering
\includegraphics[width=0.66\textwidth]{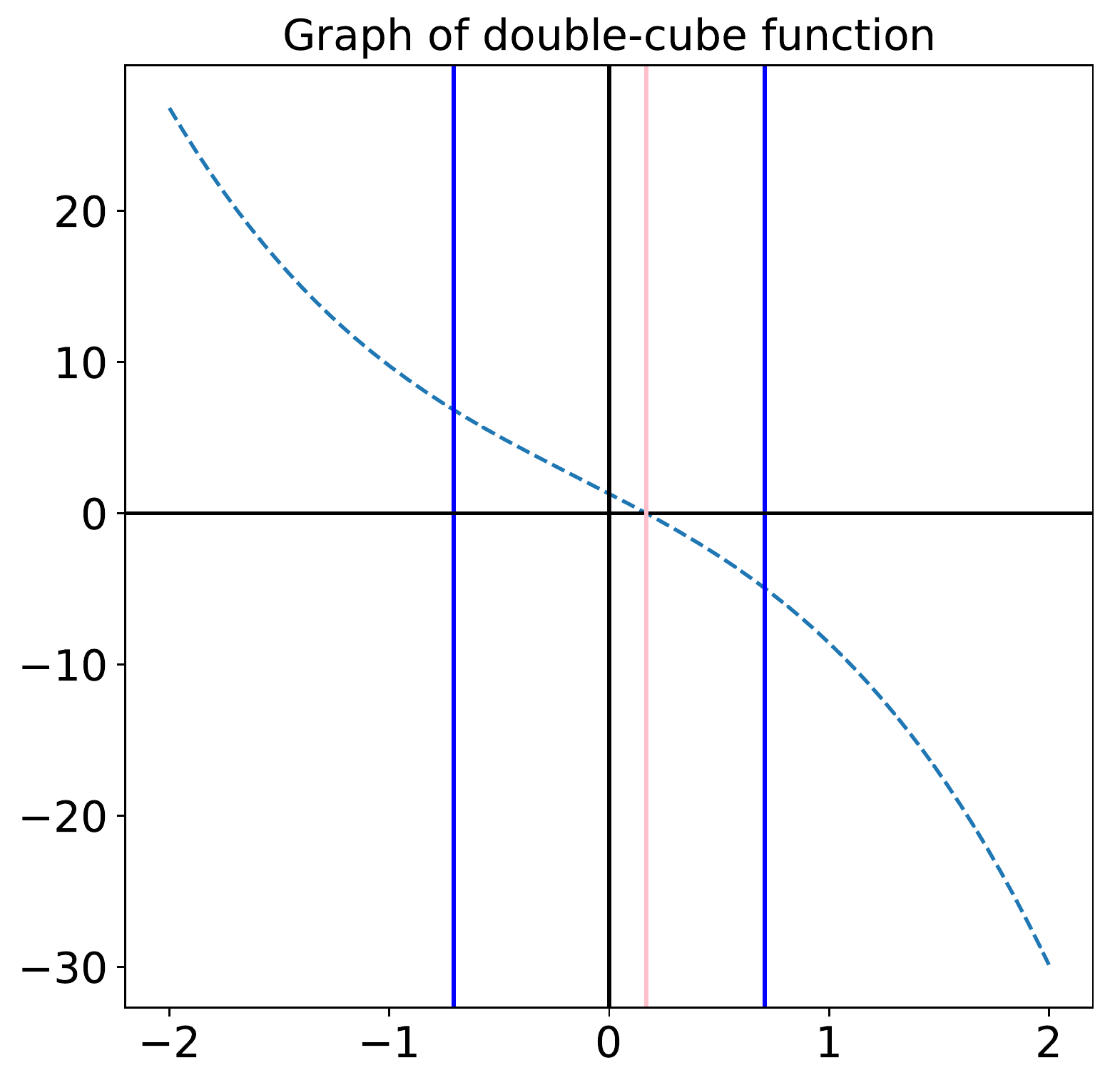}
\caption{Graph of the third-degree polynomial used in the double-cube condition (\ref{eqn:cond_doublecube}). Vertical blue lines denote $\pm \gamma$ (here $\gamma = \sqrt{2}/2$), and the vertical pink line denotes the root computed analytically.}
\label{fig:cond_doublecube}
\end{figure}
The discriminant of an arbitrary cubic polynomial is defined
\begin{align}\label{eqn:discriminant_defn}
\Delta \defeq 18abcd - 4b^{3}d + b^{2}c^{2} - 4ac^{3} - 27a^{2}d^{2},
\end{align}
and as long as $\Delta < 0$, the function $P(u)$ has only one real root, which can be computed analytically (see Appendix). Writing $u^{\ast}(\eta)$ for the real value satisfying $P(u^{\ast}(\eta))=0$ here, by plugging this into the original objective we get $H_{1,\gamma}(\eta) = C_{\eta}(u^{\ast}(\eta))$.

Next consider the case of $\sqrt{2}/2 < \gamma < \sqrt{2}$. This is the most complicated case. Writing $\delta \defeq |\sqrt{2} - \gamma|$, if a solution exists on the interval $[-\delta, \delta]$, then it will naturally satisfy the double-cube condition given above. If there is no solution on this interval, then depending on whether $\eta > 1/2$ or $\eta < 1/2$, the appropriate condition will respectively be
\begin{align*}
\rho^{\prime}(u - \gamma) - \rho^{\prime}(\sqrt{2})\frac{\eta - 1}{\eta} = 0
\end{align*}
or 
\begin{align*}
\rho^{\prime}(u + \gamma) + \frac{\eta}{\eta - 1}\rho^{\prime}(\sqrt{2}) = 0.
\end{align*}
Call these the \textit{minus} and \textit{plus} \textit{single-cube} conditions. Re-arranged into more explicit terms, we have respectively
\begin{align}
a & = 1 \nonumber\\
b & = -3\gamma \nonumber\\
c & = 3\gamma^{2}-6 \nonumber\\
d & = -6 \left( \gamma + \frac{\rho^{\prime}(\sqrt{2})}{\alpha} - \frac{\gamma^{3}}{6} \right) \nonumber\\
\label{eqn:cond_singlecube_minus}
P(u;a,b,c,d) & = 0
\end{align}
and
\begin{align}
a & = 1 \nonumber\\
b & = 3\gamma \nonumber\\
c & = 3\gamma^{2}-6 \nonumber\\
d & = 6 \left( \gamma + \alpha \rho^{\prime}(\sqrt{2}) - \frac{\gamma^{3}}{6} \right) \nonumber\\
\label{eqn:cond_singlecube_plus}
P(u;a,b,c,d) & = 0
\end{align}
Graphs of these polynomials are plotted in Figure \ref{fig:cond_singlecubes}.
\begin{figure}[t]
\centering
\includegraphics[width=1\textwidth]{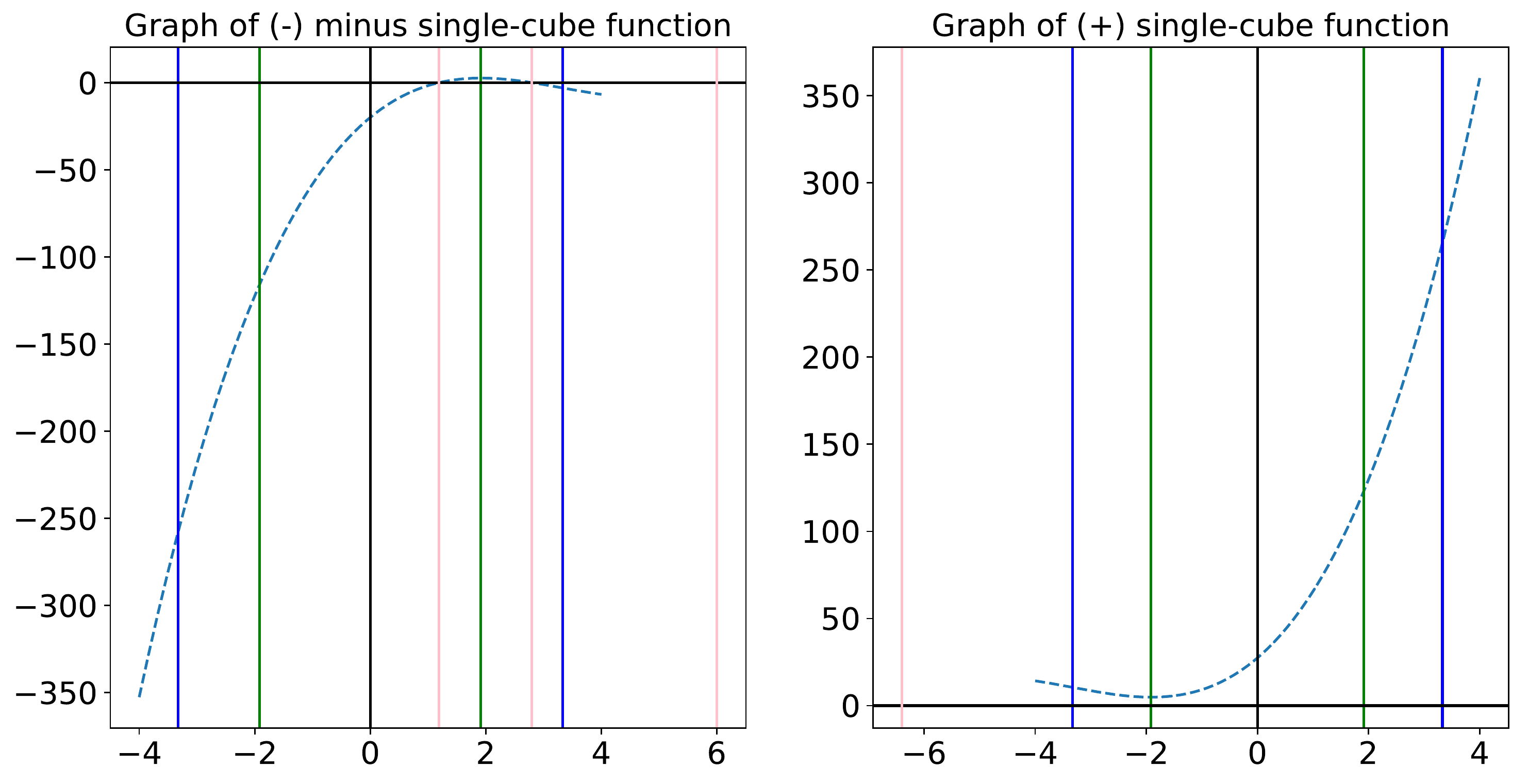}
\caption{Graph of the third-degree polynomials used in the single-cube conditions. The left figure corresponds to condition (\ref{eqn:cond_singlecube_minus}), and the right figure corresponds to condition (\ref{eqn:cond_singlecube_plus}). The vertical blue lines are again $\pm \gamma$ (with $\gamma = 2\sqrt{2}+1/2$ here), and the vertical green lines are $\pm \delta$.}
\label{fig:cond_singlecubes}
\end{figure}
Computationally determining which to use is straightforward. By the monotonicity of $\rho^{\prime}$, we can simply check the edge case $u = \sign(\eta - 1/2)\delta$. In the case of $\eta > 1/2$, noting that both the LHS and RHS are negative, if
\begin{align}\label{eqn:delta_cond_bigger}
\frac{\rho^{\prime}(\delta - \gamma)}{\rho^{\prime}(\delta + \gamma)} < \frac{\eta - 1}{\eta},
\end{align}
then the solution must be larger than $\delta$, and thus the minus single-cube condition will be sufficient. Else, the double-cube condition will provide a solution. On the other hand, when $\eta < 1/2$, if
\begin{align}\label{eqn:delta_cond_smaller}
\frac{\rho^{\prime}(-\delta - \gamma)}{\rho^{\prime}(-\delta + \gamma)} > \frac{\eta - 1}{\eta}, \text{ or more cleanly, } \frac{\rho^{\prime}(\delta - \gamma)}{\rho^{\prime}(\delta + \gamma)} < \frac{\eta}{\eta - 1},
\end{align}
then the solution must be below $-\delta$, and thus the plus single-cube condition will be sufficient. Else, the double-cube condition will provide a solution. This gives us a simple procedure for the current range of $\gamma$ values being considered\footnote{While there may be more than one real root of the cubic polynomials used in these conditions, there will not be more than one root in the range of $(\delta,\gamma)$ ($\eta > 1/2$ case) or $(-\gamma,-\delta)$ ($\eta < 1/2$ case).}, as follows:
\begin{itemize}
\item When $\eta > 1/2$:
\begin{itemize}
\item If (\ref{eqn:delta_cond_bigger}), then solve (\ref{eqn:cond_singlecube_minus}), take root falling in $(\delta, \gamma)$.
\item Else, solve (\ref{eqn:cond_doublecube}).
\end{itemize}
\smallskip
\item When $\eta < 1/2$:
\begin{itemize}
\item If (\ref{eqn:delta_cond_smaller}), then solve (\ref{eqn:cond_singlecube_plus}), take root falling in $(-\gamma,-\delta)$.
\item Else, solve (\ref{eqn:cond_doublecube}).
\end{itemize}
\end{itemize}

Finally, consider the case of $\sqrt{2} \leq \gamma$. This situation is simple: if $\eta > 1/2$, find solutions to the minus single-cube condition, and if $\eta < 1/2$, find solutions to the plus single-cube condition.

With all these conditions in place, it follows that for any $\gamma > 0$ and any $\eta \in [0,1]$, we can find a solution $u^{\ast}$ such that $C_{\eta}(u^{\ast}) = H_{1,\gamma}(\eta)$. It follows then that following the procedures outlined above, we can also compute $\Psi_{1,\gamma}(a) = \varphi(0)-H_{1,\gamma}((1+a)/2)$ for arbitrary $a \in [-1,1]$.

The consistency part of the lemma statement follows immediately from the calibration of $\varphi$, using \citet[Theorem 1(3)]{bartlett2006b}. Invertibility of $\Psi_{1,\gamma}$ follows from convexity of $\varphi$ and Lemma 2 of \citet{bartlett2006b}.

It remains only to extend these results to the case of arbitrary $s > 0$, namely the general loss function $\varphi(u) = s^{2} \, \rho((\gamma-u)/s)$. That $\varphi(u)$ is convex and that $\varphi^{\prime}(0) < 0$ under arbitrary $s>0$ is immediate. Furthermore, the first-order optimality condition becomes
\begin{align}\label{eqn:condition_1storder_general}
\frac{\eta-1}{\eta} = \frac{\rho^{\prime}((u-\gamma)/s)}{\rho^{\prime}((u+\gamma)/s)} = \frac{\rho^{\prime}((u/s)-(\gamma/s))}{\rho^{\prime}((u/s)+(\gamma/s))}.
\end{align}
Writing $\widetilde{\gamma} = \gamma/s$, note that using the exact same procedures outlined above, we can always find a $u^{\prime}$ such that
\begin{align*}
\frac{\eta-1}{\eta} = \frac{\rho^{\prime}(u^{\prime}-\widetilde{\gamma})}{\rho^{\prime}(u^{\prime}+\widetilde{\gamma})},
\end{align*}
which means that writing $u^{\ast} = s u^{\prime}$, we have that $u^{\ast}$ is a solution of (\ref{eqn:condition_1storder_general}). This means that for any $\gamma > 0$, $s > 0$, and $\eta \in [0,1]$, we can find a solution $u^{\ast}$ such that $C_{\eta}(u^{\ast}) = H_{s,\gamma}(\eta)$, which yields the general $\Psi_{s,\gamma}$ as
\begin{align*}
\Psi_{s,\gamma}(u) = s^{2} \, \rho\left(\frac{\gamma}{s}\right) - H_{s,\gamma}\left(\frac{1+u}{2}\right),
\end{align*}
concluding the proof.
\end{proof}

\begin{proof}[Proof of Theorem \ref{thm:riskbd}]
To keep notation clean, throughout this proof we denote the risk gradient by $\rgrad(\ww) \defeq \risk^{\prime}(\ww)$, the surrogate risk gradient by $\rnewgrad(\ww) \defeq \rnew^{\prime}(\ww)$, and the new loss gradient by $\gest(\ww) \defeq L^{\prime}(\ww;\gamma)$. By Lemma \ref{lem:Psi_transform}, we have that for any choice of $\ww \in \WW$,
\begin{align*}
\Psi_{s,\gamma}\left( R(\ww) - R^{\ast} \right) \leq R_{\varphi}(\ww) - R_{\varphi}^{\ast}.
\end{align*}
To control the right-hand side, note that by strong convexity $\wwstar$ is the unique minimum of $R_{\varphi}$, and so $R_{\varphi}(\ww) - R_{\varphi}^{\ast} = R_{\varphi}(\ww) - R_{\varphi}(\wwstar)$. Since $R_{\varphi}$ is smooth via Lemma \ref{lem:smooth_surrogate} with coefficient $v_{X}$, using the basic property (\ref{eqn:smoothness_property_1}) of smooth functions, we have
\begin{align*}
R_{\varphi}(\ww) - R_{\varphi}(\wwstar) \leq v_{X}\|\ww - \wwstar\|^{2}.
\end{align*}
It remains to control $\|\wwhat_{(t)} - \wwstar\|$, where $\wwhat_{(t)}$ is the output of a single iteration of the \textbf{for} loop in Algorithm \ref{algo:mainGD}. This can be broken up into computational and statistical elements, as follows. We can readily bound this distance from above as
\begin{align*}
\|\wwhat_{(t+1)} - \wwstar \| & = \| \wwhat_{(t)} - \alpha \, \gest(\wwhat_{(t)}) - \wwstar \|\\
& \leq \| \wwhat_{(t)} - \alpha \, \rnewgrad(\wwhat_{(t)}) - \wwstar \| + \alpha \| \gest(\wwhat_{(t)}) - \rnewgrad(\wwhat_{(t)}) \|.
\end{align*}
The initial equality follows immediately by design of Algorithm \ref{algo:mainGD} and the assumption that $\alpha_{(t)}=\alpha$. Using the triangle yields the upper bound, which is composed of two terms: the first term is the difference after doing one update of the ideal gradient descent routine (to minimize $\rnew$), and the second term is a statistical error term for the empirical mean estimate of the surrogate risk gradient vector.

For small enough step size $0 < \alpha < 2 / (\kappa + v_{X})$, the update improves on the previous error as
\begin{align*}
\| \wwhat_{(t)} - \alpha \, \rnewgrad(\wwhat_{(t)}) - \wwstar \|^{2} \leq \left(1 - \frac{2\alpha\kappa v_{X}}{\kappa + v_{X}}\right)\|\wwhat_{(t)} - \wwstar\|^{2}.
\end{align*}
Writing $\beta \defeq 2\kappa v_{X} / (\kappa + v_{X})$, we have that
\begin{align}\label{eqn:proof_driver_1}
\|\wwhat_{(t+1)} - \wwstar \| \leq \sqrt{1-\alpha\beta}\|\wwhat_{(t)} - \wwstar\| + \alpha \| \gest(\wwhat_{(t)}) - \rnewgrad(\wwhat_{(t)}) \|.
\end{align}
This deals with the computational error part. Now for the statistical error part, namely the accuracy of the $\gest \approx \rnewgrad$ approximation. Writing $\mv{b}(\ww) \defeq -\rho((\gamma-y \langle \ww,\xx \rangle )/s) \, y \, \xx$ and multiplying this by $s$, the sub-Gaussianity assumption of \ref{asmp:sub_gaussian} gives us that for any $\ww \in \RR^{d}$ and $a \geq 0$,
\begin{align*}
\exx \exp(a \langle \uu, s\,\mv{b}(\ww) - \exx s\,\mv{b}(\ww) \rangle ) & = \exx \exp((as) \langle \uu, \mv{b}(\ww) - \exx \mv{b}(\ww) \rangle )\\
& \leq \exp(c (as)^{2} \langle \uu, \Sigma(\ww) \uu \rangle )\\
& = \exp( (cs^{2})a^{2} \langle \uu, \Sigma(\ww) \uu \rangle )
\end{align*}
for all $\|\uu\| = 1$, where $\Sigma(\ww)$ is the covariance matrix of $\mv{b}(\ww)$, and $c > 0$ is any constant such that the sub-Gaussian property holds. Now, suppressing $\ww$ from the notation for readability, noting that
\begin{align*}
\Sigma & = \exx (\mv{b}-\exx \mv{b})(\mv{b}-\exx \mv{b})^{T}\\
& = \exx \mv{b}\mv{b}^{T} - (\exx \mv{b})(\exx \mv{b})^{T},
\end{align*}
and using the positive semi-definiteness of $(\exx \mv{b})(\exx \mv{b})^{T}$, since for all $\uu$ we have
\begin{align*}
\langle \uu, (\exx \mv{b}\mv{b}^{T} - \Sigma)\uu \rangle = \langle \uu, (\exx \mv{b})(\exx \mv{b})^{T}\uu \rangle \geq 0,
\end{align*}
for each $a \geq 0$ we can then bound
\begin{align*}
\exx \exp(a \langle \uu, \mv{b} - \exx \mv{b} \rangle ) & \leq \exp(cs^{2} a^{2} \langle \uu, \Sigma \uu \rangle )\\
& \leq \exp(cs^{2} a^{2} \langle \uu, (\exx \mv{b}\mv{b}^{T}) \uu \rangle )\\
& \leq \exp(cs^{2} a^{2} \|\exx \mv{b}\mv{b}^{T}\| )\\
& \leq \exp(cs^{2} a^{2} \exx \|\mv{b}\mv{b}^{T}\| )\\
& \leq \exp(cs^{2} a^{2} \rho^{\prime}(\sqrt{2})^{2}\exx \|\xx\xx^{T}\| )
\end{align*}
With these inequalities, writing
\begin{align*}
\mv{b}_{i}(\ww) \defeq -s\,\rho^{\prime}\left(\frac{\gamma - y_{i} \langle \ww, \xx_{i} \rangle}{s}\right) \, y_{i} \, \xx_{i}, \quad i \in [n]
\end{align*}
for any fixed $\ww$ and noting that $\gest(\ww) = n^{-1}\sum_{i=1}^{n}\mv{b}_{i}(\ww)$, we can leverage Lemma \ref{lem:subgaussian_samplemean_concineq} to prove that the ``bad event''
\begin{align}\label{eqn:badevent_prob}
\EE(\ww) \defeq \left\{ \| \gest(\ww) - \rnewgrad(\ww) \| > 2s \sqrt{\frac{c\rho^{\prime}(\sqrt{2})^{2}\exx\|\xx\xx^{T}\|\log(\delta^{-1})}{n}} \right\}
\end{align}
has probability $\prr\EE(\ww) \leq \delta$.

Next, we must deal with the fact that in running Algorithm \ref{algo:mainGD}, the $\wwhat_{(t)}$ for all $t>0$ will be random and dependent on the sample. In general, there is not much choice but to pursue uniform bounds, namely high-probability events that hold over all $\ww \in \WW$. To do this is straightforward with an $\epsilon$-cover of $\WW$. Since $\WW$ is a compact subset of $\RR^{d}$ by assumption \ref{asmp:model_compact}, it follows that the size of an $\epsilon$-cover in the usual norm is bounded as $N_{\epsilon} \leq (3\diameter/2\epsilon)^{d}$ \citep{kolmogorov1993SelectWorks3}. Denote the centers of the $\epsilon$ balls covering $\WW$ by $\{\wwtil_{1},\ldots,\wwtil_{N_{\epsilon}}\}$. Given any arbitrary $\ww \in \WW$, write $\wwtil = \wwtil(\ww)$ for the center closest to $\ww$, which by definition satisfies $\|\ww - \wwtil\| \leq \epsilon$. The statistical error can be bounded above by
\begin{align}\label{eqn:ineq_3errors}
\| \gest(\ww) - \rnewgrad(\ww) \| & \leq \|\gest(\ww) - \gest(\wwtil)\| + \|\rnewgrad(\ww) - \rnewgrad(\wwtil)\| + \|\gest(\wwtil) - \rnewgrad(\wwtil)\|.
\end{align}
We want to take the supremum of both sides with respect to $\ww \in \WW$. Let's take it term by term.

Starting with the first term, by the $1$-Lipschitz property of $\rho^{\prime}$, it follows immediately that we can bound
\begin{align*}
\|\mv{b}_{i}(\ww) - \mv{b}_{i}(\wwtil)\| \leq \frac{\|\xx_{i}\|\|\ww - \wwtil\|}{s} \leq \frac{\|\xx_{i}\|\epsilon}{s}.
\end{align*}
This implies that
\begin{align}\label{eqn:bound_error1}
\|\gest(\ww) - \gest(\wwtil)\| \leq \left(\frac{\epsilon}{s}\right) \frac{s}{n} \sum_{i=1}^{n} \|\xx_{i}\|^{2} \leq \frac{\epsilon \exx \|\xx\|^{2}}{\delta} = \frac{\epsilon v_{X}}{\delta}
\end{align}
on an event of probability no less than $1-\delta$, where we have simply used Chebyshev's inequality to obtain tail bounds. Since regardless of what $\ww$ we choose, the corresponding $\wwtil$ will be no farther than $\epsilon$, this (\ref{eqn:bound_error1}) gives us a uniform bound.

For the second term, we just use the $v_{X}$-smoothness of $\rnew$, shown in Lemma \ref{lem:smooth_surrogate}. This implies 
\begin{align}\label{eqn:bound_error2}
\|\rnewgrad(\ww) - \rnewgrad(\wwtil)\| \leq v_{X} \|\ww - \wwtil\| \leq v_{X} \epsilon
\end{align}
again for arbitrary choice of $\ww \in \WW$.

Finally, for any fixed $\wwtil \in \{\wwtil_{1},\ldots,\wwtil_{N_{\epsilon}}\}$, we can bound the third term using (\ref{eqn:badevent_prob}). Making the dependence of $\wwtil$ on $\ww$ explicit for clarity, the critical fact is that
\begin{align*}
\sup_{\ww \in \WW} \left\|\gest(\wwtil(\ww)) - \rnewgrad(\wwtil(\ww)) \right\| = \max_{k \in [N_{\epsilon}]} \left\|\gest(\wwtil_{k}) - \rnewgrad(\wwtil_{k}) \right\|.
\end{align*}
The ``good event'' of interest here is the event in which the bad event does not occur at any of the $\epsilon$-cover centers, that is
\begin{align*}
\EE_{+} = \left(\bigcap_{k \in [N_{\epsilon}]} \EE(\wwtil_{k}) \right)^{c},
\end{align*}
where $(\cdot)^{c}$ denotes the complement event. It thus follows that taking a union bound, we have that with probability no less than $1-\delta$, we can uniformly bound as
\begin{align}\label{eqn:bound_error3}
\|\gest(\wwtil(\ww)) - \rnewgrad(\wwtil(\ww))\| \leq 2s \sqrt{\frac{c\rho^{\prime}(\sqrt{2})^{2}\exx\|\xx\xx^{T}\|\log(N_{\epsilon}\delta^{-1})}{n}}, \quad \forall \, \ww \in \WW.
\end{align}
Putting these three bounds together, and taking unions over the good events required for the first and third terms, we have with probability no less than $1-2\delta$ that
\begin{align*}
\sup_{\ww \in \WW} \| \gest(\ww) - \rnewgrad(\ww) \| \leq \frac{\epsilon v_{X}}{\delta} + v_{X} \epsilon + 2s \sqrt{\frac{c\rho^{\prime}(\sqrt{2})^{2}\exx\|\xx\xx^{T}\|\log(N_{\epsilon}\delta^{-1})}{n}}.
\end{align*}
Setting $\epsilon = \delta / \sqrt{n}$, this simplifies to
\begin{align*}
\sup_{\ww \in \WW} \| \gest(\ww) - \rnewgrad(\ww) \| \leq \frac{(1+\delta)v_{X}}{\sqrt{n}} + \frac{\varepsilon^{\ast}}{\sqrt{n}}.
\end{align*}
where we have defined
\begin{align}\label{eqn:gradient_stat_error}
\varepsilon^{\ast} \defeq 2s \sqrt{c\rho^{\prime}(\sqrt{2})^{2}\exx\|\xx\xx^{T}\|(d\log(3\sqrt{n}(2\delta)^{-1}) + \log(\delta^{-1}))}.
\end{align}

On the good event $\EE_{+}$, then denoting $\varepsilon = (\varepsilon^{\ast} + (1+\delta)v_{X})/\sqrt{n}$, for \emph{all} steps $t$ we can re-write (\ref{eqn:proof_driver_1}) as
\begin{align*}
\|\wwhat_{(t+1)} - \wwstar \| \leq \sqrt{1-\alpha\beta}\|\wwhat_{(t)} - \wwstar\| + \alpha \varepsilon.
\end{align*}
Assuming the algorithm is run for $T$ updates, then with some straightforward algebra we can unfold and clean up the recursion such that
\begin{align*}
\|\wwhat_{(T)} - \wwstar \| \leq (\sqrt{1-\alpha\beta})^{T} \|\wwhat_{(0)}-\wwstar\| + \frac{2\varepsilon}{\beta}.
\end{align*}
Let us connect all the inequalities now. We can bound the excess surrogate risk as
\begin{align*}
R_{\varphi}(\wwhat_{(T)}) - R_{\varphi}^{\ast} \leq \left( (1-\alpha\beta)^{T} \|\wwhat_{(0)}-\wwstar\|^{2} + \frac{4}{\beta^{2}n}\left((1+\delta)v_{X} + \varepsilon^{\ast}\right)^{2} \right) v_{X}
\end{align*}
which via the first inequality of this proof using $\Psi_{s,\gamma}$, yields the desired result.
\end{proof}

\subsection{Explanation of root-finding function \texttt{getroot}}\label{sec:tech_getroot}

We have uploaded some demonstrative software for the root-finding sub-routine needed for computation of $\Psi_{s,\gamma}$ to a public repository.\footnote{Available at: \url{https://github.com/feedbackward/catcube}.} The core routine is captured in a function called \texttt{getroot}, using a very simple strategy, which we describe below.

The cubic polynomials considered are equations of the form
\begin{align}\label{eqn:cubic_original}
a u^{3} + b u^{2} + cu + d = 0.
\end{align}
Recall the discriminant $\Delta$ given in (\ref{eqn:discriminant_defn}). There are a few basic settings to consider, as below.
\begin{itemize}
\item If $\Delta < 0$, then there is only one real root (the rest are complex).

\item If $\Delta = 0$, then all roots are real, but we have multiple roots.

\item If $\Delta > 0$, then all roots are real, and distinct.
\end{itemize}

For the case of $\Delta < 0$, the traditional solution approach is as follows. Defining two new quantities
\begin{align*}
\Delta_{0} & \defeq b^{2} - 3ac\\
\Delta_{1} & \defeq 2b^{3} - 9abc + 27a^{2}d
\end{align*}
the key value for computing roots is the following
\begin{align*}
C = \left( \frac{\Delta_{1} \pm \sqrt{\Delta_{1}^{2} - 4\Delta_{0}^{3}}}{2} \right)^{1/3}.
\end{align*}
Assuming that $C$ is known, then roots are computed as
\begin{align*}
u^{\ast} = - \frac{1}{3a} \left( b + C + \frac{\Delta_{0}}{C} \right).
\end{align*}
Naturally, if $C$ is real, then so is the resulting $u^{\ast}$. Computationally, how do we go about getting a real version? This is extremely straightforward. Let's take the addition case. Consider the condition
\begin{align*}
\Delta_{1} + \sqrt{\Delta_{1}^{2} - 4\Delta_{0}^{3}} \geq 0.
\end{align*}
If this condition holds, then we can just compute as-is. If this condition fails to hold, then taking the cube root in many programming languages will lead to a complex number. To get a real number when the above condition fails, just compute
\begin{align*}
C = (-1)\left( \frac{|\Delta_{1} + \sqrt{\Delta_{1}^{2} - 4\Delta_{0}^{3}}|}{2} \right)^{1/3}.
\end{align*}
With a real-valued $C$ in hand, $u^{\ast}$ immediately follows.

Next, consider the case of $\Delta = 0$. This case is very simple. This scenario also sub-divides, based on the value of $\Delta_{0}$. If $\Delta_{0} = 0$, then the root is a ``triple'' root, and takes the form
\begin{align*}
u_{T}^{\ast} = - \frac{b}{3a}.
\end{align*}
If $\Delta_{0} \neq 0$, then we have two roots, a ``double'' root $u_{D}^{\ast}$ and a ``single'' root $u_{S}^{\ast}$, with the forms
\begin{align*}
u_{D}^{\ast} & = \frac{9ad - bc}{2\Delta_{0}}\\
u_{S}^{\ast} & = \frac{4abc - 9a^{2}d - b^{3}}{a\Delta_{0}}.
\end{align*}

Finally, consider the case of $\Delta > 0$. For elegant computations, we make use of the trigonometric method pioneered by F.~Vi\`{e}te. The starting point is a trigonometric identity, as follows:
\begin{align}\label{eqn:viete_trig_identity}
\cos 3x = 4 \cos^{3} x - 3 \cos x.
\end{align}
To prove this is straightforward. Making use of elementary trigonometric identities, observe first that
\begin{align*}
\cos 3x = \cos(2x + x) = \cos 2x \cos x - \sin 2x \sin x.
\end{align*}
Looking at each of the terms individually,
\begin{align*}
\cos 2x \cos x & = (2\cos^{2} x - 1) \cos x\\
 & = 2\cos^{3} x - \cos x\\
\sin 2x \sin x & = 2\sin x \cos x \sin x\\
& = 2\cos x \sin^{2} x\\
& = 2\cos x (1-\cos^{2} x)
\end{align*}
Taking the difference of the two new forms gives the desired identity (\ref{eqn:viete_trig_identity}). With this identity now at our disposal, we proceed with cleaning up the cubic equation. Dividing out $a$, and replacing $u$ with $v - b/(3a)$, note that this cleans up to
\begin{align}\label{eqn:depressed_cubic}
v^{3} + p v + q = 0
\end{align}
where
\begin{align*}
p & = \frac{3ac - b^{2}}{3a^{2}}\\
q & = \frac{2b^{3} - 9abc + 27a^{2}d}{27a^{3}}.
\end{align*}
Note that since we are assuming $\Delta > 0$ for the original cubic equation (\ref{eqn:cubic_original}), which implies three distinct real roots for (\ref{eqn:cubic_original}), it follows that (\ref{eqn:depressed_cubic}) also has three distinct real roots. This can only happen when its discriminant is positive, which is to say when
\begin{align}\label{eqn:depressed_inequality}
-4p^{3} - 27q^{2} > 0.
\end{align}
Note that this implies
\begin{align*}
p^{3} < -\frac{27q^{2}}{4} \leq 0.
\end{align*}
This implies that $p < 0$, otherwise the cube of $p$ would necessarily be non-negative. Moving forward, considering the trigonometric identity (\ref{eqn:viete_trig_identity}), the desired form of our cubic equation is $4z^{3} - 3z = e$, where $|e| \leq 1$ so that it falls in the range of the cosine function. To aid computations, let us introduce a couple more variables and coefficients. Set $k$ such that $p = -3k^{2}$, multiply by $4$, and replace $v$ with $rz$, where $r$ is a coefficient to be defined shortly. Doing so, we have
\begin{align*}
0 & = 4(rz)^{3} + 4 (-3k^{2}) rz + 4 q\\
& = 4z^{3} - \frac{12k^{3}}{r^{2}} z + \frac{4q}{r^{3}}.
\end{align*}
Setting $r = 2k$, we can clean up into the following equation
\begin{align}\label{eqn:trig_finalform}
4z^{3} - 3z = -\frac{q}{2k^{3}},
\end{align}
which is the desired form, as long as the right-hand side has absolute value no greater than unity. Fortunately, this is immediately true from our assumptions. To see this, first observe
\begin{align*}
\left( \frac{q}{2k^{3}} \right)^{2} = \frac{-27q^{2}}{4p^{3}} = \frac{27q^{2}}{4|p|^{3}}
\end{align*}
and recall that from (\ref{eqn:depressed_inequality}) and the fact that $p<0$, it follows that
\begin{align*}
0 & < -27q^{2} - 4p^{3} = -27q^{2} + 4|p|^{3}
\end{align*}
which implies
\begin{align*}
1 > \frac{27q^{2}}{4|p|^{3}} = \left( \frac{q}{2k^{3}} \right)^{2}.
\end{align*}
Thus, we have that in our current case of $\Delta > 0$, the right-hand side of (\ref{eqn:trig_finalform}) indeed falls on the interval $(-1,1)$. As such, this means that there exists an angle $x^{\ast}$ such that plugging $\cos x^{\ast}$ into the polynomial (\ref{eqn:trig_finalform}), we have
\begin{align*}
4 \cos^{3} x^{\ast} - 3\cos x^{\ast} = -\frac{q}{2k^{3}}.
\end{align*}
Then using the key identity (\ref{eqn:viete_trig_identity}), it follows that
\begin{align*}
\cos(3 x^{\ast}) = -\frac{q}{2k^{3}}, \text{ implying } x^{\ast} = \frac{1}{3} \arccos \left(-\frac{q}{2k^{3}}\right).
\end{align*}
So, we have that $\cos x^{\ast}$ solves (\ref{eqn:trig_finalform}). Note that since this function has period $2\pi$, it holds that
\begin{align*}
\cos(3x^{\ast}) = \cos(2\pi+3x^{\ast}) = \cos(2\pi-3x^{\ast})
\end{align*}
which after plugging in to (\ref{eqn:viete_trig_identity}), yields
\begin{align*}
\cos\left( 3\left(\frac{2\pi}{3} \pm x^{\ast}\right) \right) = 4\cos^{3}\left(\frac{2\pi}{3} \pm x^{\ast}\right) - 3\cos\left(\frac{2\pi}{3} \pm x^{\ast}\right) = -\frac{q}{2k^{3}}.
\end{align*}
That is to say, the following values are all solutions to (\ref{eqn:trig_finalform}):
\begin{align*}
z_{1}^{\ast} = \cos(x^{\ast}), \quad z_{2}^{\ast} = \cos\left(\frac{2\pi}{3}+x^{\ast}\right), \quad z_{3}^{\ast} = \cos\left(\frac{2\pi}{3}-x^{\ast}\right).
\end{align*}
With these values in hand, all that remains is to backtrack to the roots of the original cubic polynomial of interest. For any $z^{\ast}_{j}$, this is done as
\begin{align*}
y^{\ast}_{j} & = 2k z^{\ast}_{j}\\
u^{\ast}_{j} & = y^{\ast}_{j} - \frac{b}{3a}
\end{align*}
where $j=1,2,3$. To summarize the case of finding roots when $\Delta > 0$, the basic computational procedure is as below.
\begin{enumerate}
\item From original polynomial, $(a,b,c,d) \mapsto (p,q)$.

\item From new polynomial, $(p,q) \mapsto k \mapsto e$, where $e \defeq -q/(2k^{3})$.

\item From final polynomial, $e \mapsto x^{\ast} \mapsto (z^{\ast}_{1},z^{\ast}_{2},z^{\ast}_{3})$.

\item Backtrack over the roots as $z^{\ast}_{j} \mapsto y^{\ast}_{j} \mapsto u^{\ast}_{j}$ for $j=1,2,3$.

\item Return $(u^{\ast}_{1},u^{\ast}_{2},u^{\ast}_{3})$ as roots of (\ref{eqn:cubic_original}).
\end{enumerate}
This concludes our exposition of the content of \texttt{getroot}.

\end{document}